\patchcmd{\thebibliography}{\section*{\refname}}{}{}{}
\newtheorem{theorem}{Theorem}[section]
\newtheorem{lemma}[theorem]{Lemma}
\newtheorem{remark}{Remark}[section]
\newtheorem{definition}{Definition}[section]
\newtheorem{assumption}{Assumption}[section]
\newenvironment{subroutine}[1][htb]
  {% Update algorithm name
   \begin{algorithm}[#1]
  }{\end{algorithm}}
\newenvironment{mapping}[1][htb]
  {% Update algorithm name
   \begin{algorithm}[#1]
  }{\end{algorithm}}
\DeclareMathOperator*{\argmax}{arg\,max}
\DeclareMathOperator*{\argmin}{arg\,min}
\newcommand{\R}{\mathbb{R}}
\newcommand{\N}{\mathbb{N}}
\newcommand{\Ps}{\mathbb{P}}
\newcommand{\E}{\mathbb{E}}
\newcommand{\1}{\mathds{1}}
\newcommand{\F}{\mathcal{F}}
\newcommand{\X}{\mathcal{X}}
\newcommand{\Px}{\mathcal{P}}
\newcommand{\Q}{\mathcal{Q}}
\newcommand{\Qhat}{\widehat{\mathcal{Q}}}
\newcommand{\Pxhat}{\widehat{\mathcal{P}}}
\newcommand{\Hs}{\mathcal{H}}
\newcommand{\Ls}{\mathcal{L}}
\newcommand{\lamb}{\boldsymbol{\lambda}}
\newcommand{\err}{\text{err}}
\newcommand{\Epsilon}{\mathcal{E}}
\newcommand{\gammahat}{\widehat{\gamma}}
\newcommand{\lambhat}{\widehat{\boldsymbol{\lambda}}}
\newcommand{\rhat}{\widehat{\boldsymbol{r}}}
\newcommand{\pvec}{\overset{}{\boldsymbol{p}}}
\newcommand{\pvechat}{\widehat{\boldsymbol{p}}}
\newcommand{\hvec}{\overset{}{\boldsymbol{h}}}
\newcommand{\wvec}{\overset{}{\boldsymbol{w}}}
\newcommand{\rvec}{\boldsymbol{r}}
\newcommand{\rvecfp}{\boldsymbol{r}_{\text{FP}}}
\newcommand{\What}{\widehat{W}}
\newcommand{\Epsilonfp}{\Epsilon_{\text{FP}}}
\newcommand{\Epsilonfn}{\Epsilon_{\text{FN}}}
\newcommand{\best}{\textbf{BEST}}
\newcommand{\bestfp}{\textbf{aBEST}_\text{FP}}
\newcommand{\Otilde}{\widetilde{O}}
\newcommand{\opt}{\text{OPT}}
\newcommand{\uniform}{\mathcal{U}}
\newcommand{\phat}{\widehat{p}}
\newcommand{\pr}{{p}^{(r)}}
\newcommand{\prhat}{{\widehat{p}}^{(r)}}
\newcommand{\pvechatr}{\widehat{\boldsymbol{p}}^{(r)}}
\newcommand{\pvecr}{{\boldsymbol{p}}^{(r)}}
\newcommand{\rhotilde}{\widetilde{\rho}}
\newcommand{\rhohat}{\widehat{\rho}}
\newcommand{\psihat}{\widehat{\psi}}
\newcommand{\rhohatvec}{\boldsymbol{\widehat{\rho}}}
\newcommand{\rhotildevec}{\boldsymbol{\widetilde{\rho}}}
\newcommand{\rhohatmin}{\widehat{\rho}_\text{min}}
\newcommand{\rhotildemin}{\rhotilde_{\text{min}}}
\newcommand{\rhomin}{\rho_{\text{min}}}
\title{Average Individual Fairness:\\ Algorithms, Generalization and Experiments}
\author{Michael Kearns}
\author{Aaron Roth}
\author{Saeed Sharifi-Malvajerdi}
\affil{University of Pennsylvania}
\date{\today}
\begin{document}
\maketitle

\begin{abstract}
We propose a new family of fairness definitions for classification problems that combine some of the best properties of both statistical and individual notions of fairness. We posit not only a distribution over \emph{individuals}, but also a distribution over (or collection of) \emph{classification tasks}. We then ask that standard statistics (such as error or false positive/negative rates) be (approximately) equalized across \emph{individuals}, where the rate is defined as an expectation over the classification tasks. Because we are no longer averaging over coarse groups (such as race or gender), this is a semantically meaningful individual-level constraint. Given a sample of individuals and classification problems, we design an oracle-efficient algorithm (i.e. one that is given access to any standard, fairness-free learning heuristic) for the fair empirical risk minimization task. We also show that given sufficiently many samples, the ERM solution generalizes in two directions: both to new individuals, and to new classification tasks, drawn from their corresponding distributions. Finally we implement our algorithm and empirically verify its effectiveness.
\end{abstract}

%%%%%%%%%%%%%%%%%%%%%%%%%%%%%%%%%%%%%%%%%%%%%%%%%%%%%%%
\newpage
{
\hypersetup{linkcolor=black}
\tableofcontents
}
\newpage
%%%%%%%%%%%%%%%%%%%%%%%%%%%%%%%%%%%%%%%%%%%%%%%%%%%%%%%

\section{Introduction}\label{sec:introduction}
%!TEX root = averagefair.tex
The community studying fairness in machine learning has yet to settle on definitions. At a high level, existing definitional proposals can be divided into two groups: \emph{statistical} fairness definitions and \emph{individual} fairness definitions. Statistical fairness definitions partition individuals into ``protected groups'' (often based on race, gender, or some other binary protected attribute) and ask that some statistic of a classifier (error rate, false positive rate, positive classification rate, etc.) be approximately equalized across those groups. In contrast, individual definitions of fairness have no notion of ``protected groups'', and instead ask for constraints that bind on pairs of individuals. These constraints can have the semantics that ``similar individuals should be treated similarly'' (\cite{awareness}), or that ``less qualified individuals should not be preferentially favored over more qualified individuals'' (\cite{JKMR16}). Both families of definitions have serious problems, which we will elaborate on. But in summary, statistical definitions of fairness provide only very weak promises to individuals, and so do not have very strong semantics. Existing proposals for individual fairness guarantees, on the other hand, have very strong semantics, but have major obstacles to deployment, requiring strong assumptions on either the data generating process or on society's ability to instantiate an agreed-upon fairness metric.

Statistical definitions of fairness are the most popular in the literature, in large part because they can be easily checked and enforced on arbitrary data distributions. For example, a popular definition (\cite{HPS16,KMR16,Chou17}) asks that a classifier's \emph{false positive rate} should be equalized across the protected groups. This can sound attractive: in settings in which a positive classification leads to a bad outcome (e.g. incarceration), it is the \emph{false positives} that are harmed by the errors of the classifier, and asking that the false positive rate be equalized across groups is asking that the harm caused by the algorithm should be proportionately spread across protected populations. But the meaning of this guarantee to an individual is limited, because the word \emph{rate} refers to an average over the population. To see why this limits the meaning of the guarantee, consider the example given in \cite{KNRW18}: imagine a society that is equally split between gender (Male, Female) and race (Blue, Green). Under the constraint that false positive rates be equalized across both race and gender, a classifier may incarcerate 100\% of blue men and green women, and 0\% of green men and blue women. This equalizes the false positive rate across all protected groups, but is cold comfort to any individual blue man and green woman. This effect isn't merely hypothetical --- \cite{KNRW18,KNRW19} showed similar effects when using off-the-shelf fairness constrained learning techniques on real datasets.

Individual definitions of fairness, on the other hand, can have strong individual level semantics. For example, the constraint imposed by \cite{JKMR16,JKMNR17} in online classification problems implies that the false positive rate must be equalized across all pairs of individuals who (truly) have negative labels. Here the word \emph{rate} has been redefined to refer to an expectation over the randomness of the classifier, and there is no notion of protected groups. This kind of constraint provides a strong individual level promise that one's risk of being harmed by the errors of the classifier are no higher than they are for anyone else. Unfortunately, in order to non-trivially satisfy a constraint like this, it is necessary to make strong realizability assumptions.

\subsection{Our Results}\label{sec:results}
%!TEX root = averagefair.tex
We propose an alternative definition of individual fairness that avoids the need to make assumptions on the data generating process, while giving the learning algorithm more flexibility to satisfy it in non-trivial ways. We consider that in many applications each individual will be subject to decisions made by \emph{many classification tasks} over a given period of time, not just one. For example, internet users are shown a large number of targeted ads over the course of their usage of a platform, not just one: the properties of the advertisers operating in the platform over a period of time are not known up front, but have some statistical regularities. Public school admissions in cities like New York are handled by a centralized match: students apply not just to one school, but to many, who can each make their own admissions decisions (\cite{abdulkadirouglu2005new}). We model this by imagining that not only is there an unknown distribution $\Px$ over individuals, but there is an unknown distribution $\Q$ over classification problems (each of which is represented by an unknown mapping from individual features to target labels). With this model in hand, we can now ask that the error rates (or false positive or negative rates) be equalized across all individuals --- where now \emph{rate} is defined as the average over \emph{classification tasks} drawn from $\Q$ of the probability of a particular individual being incorrectly classified.

We then derive a new oracle-efficient algorithm for satisfying this guarantee in-sample, and prove novel generalization guarantees showing that the guarantees of our algorithm hold also out of sample. Oracle efficiency is an attractive framework in which to circumvent the worst-case hardness of even \emph{unconstrained} learning problems, and focus on the \emph{additional computational difficulty} imposed by fairness constraints. It assumes the existence of ``oracles'' (in practice, implemented with a heuristic) that can solve weighted classification problems absent fairness constraints, and asks for efficient reductions from the fairness constrained learning problems to unconstrained problems. This has become a popular technique in the fair machine learning literature (see e.g. \cite{agarwal,KNRW18}) --- and one that often leads to practical algorithms. The generalization guarantees we prove require the development of new techniques because they refer to generalization in \emph{two orthogonal directions} --- over both individuals and classification problems. Our algorithm is run on a sample of $n$ individuals sampled from $\Px$ and $m$ problems sampled from $\Q$. It is given access to an oracle (in practice, implemented with a heuristic) for solving ordinary cost sensitive classification problems over some hypothesis space $\Hs$. The algorithm runs in polynomial time (it performs only elementary calculations except for calls to the learning oracle, and makes only a polynomial number of calls to the oracle) and returns a \emph{mapping from problems to hypotheses} that have the following properties, so long as $n$ and $m$ are sufficiently large (polynomial in the VC-dimension of $\Hs$ and the desired error parameters): For any $\alpha$, with high probability over the draw of the $n$ individuals from $\Px$ and the $m$ problems from $\Q$
\begin{enumerate}
\item \emph{Accuracy}: the error rate (computed in expectation over new individuals $x \sim \Px$ and new problems $f \sim \Q$) is within $O(\alpha)$ of the \emph{optimal} mapping from problems to classifiers in $\Hs$, subject to the constraint that for every pair of individuals $x,x'$ in the support of $\Px$, the error rates (or false positive or negative rates) (computed in expectation over problems $f \sim Q$) on $x$ and $x'$ differ by at most $\alpha$.
\item \emph{Fairness}: with probability $1-\beta$ over the draw of new individuals $x,x' \sim \Px$, the error rate (or false positive or negatives rates) of the output mapping (computed in expectation over problems $f \sim Q$) on $x$ will be within $O(\alpha)$ of that of $x'$.
\end{enumerate}
The mapping from new classification problems to hypotheses that we find is derived from the dual variables of the linear program representing our empirical risk minimization task, and we crucially rely on the structure of this mapping to prove our generalization guarantees for new problems $f \sim Q$.

\subsection{Additional Related Work}\label{sec:related}
%!TEX root = averagefair.tex
The literature on fairness in machine learning has become much too large to comprehensively summarize, but see \cite{survey} for a recent survey. Here we focus on the most conceptually related work, which has aimed to bridge the gap between the immediate applicability of statistical definitions of fairness with the strong individual level semantics of individual notions of fairness. One strand of this literature focuses on the ``metric fairness'' definition first proposed by \cite{awareness}, and aims to ease the assumption that the learning algorithm has access to a task specific fairness metric. \cite{KRR18} imagine access to an oracle which can provide unbiased estimates to the metric distance between any pair of individuals, and show how to use this to satisfy a statistical notion of fairness representing ``average metric fairness'' over pre-defined groups. \cite{GJKR18} study a contextual bandit learning setting in which a human judge points out metric fairness violations whenever they occur, and show that with this kind of feedback (under assumptions about consistency with a family of metrics), it is possible to quickly converge to the optimal fair policy.  \cite{YR18} consider a PAC-based relaxation of metric fair learning, and show that empirical metric-fairness generalizes to out-of-sample metric fairness. Another strand of this literature has focused on mitigating the problems that arise when statistical notions of fairness are imposed over coarsely defined groups, by instead asking for statistical notions of fairness over exponentially many or infinitely many groups with a well defined structure. This line includes \cite{multical} (focusing on calibration), \cite{KNRW18} (focusing on false positive and negative rates), and \cite{KGZ18} (focusing on error rates).

%%%%%%%%%%%%%%%%%%%%%%%%%%%%%%%%%%%%%%%%%%%%%%%%%%%%%%%
\section{Model and Preliminaries}\label{sec:model}
%!TEX root = averagefair.tex

We model each individual in our framework by a vector of features $x \in \X$, and we let each learning problem\footnote{we will use the terms: problem, task, and labeling interchangeably.} be represented by a binary function $f \in \F$ mapping $\X$ to $\{0,1\}$. We assume probability measures $\Px$ and $\Q$ over the space of individuals $\X$ and the space of problems $\F$, respectively. Without loss of generality, we assume throughout that the support of the distributions $\Px$ and $\Q$ are $\X$ and $\F$, respectively. In the training phase there is a \emph{fixed} (across problems) set $X = \{x_i\}_{i=1}^n$ of $n$ individuals sampled independently from $\Px$ for which we have available labels of $m$ learning tasks represented by $F = \{f_j\}_{j=1}^m$ drawn independently from $\Q$\footnote{Throughout we will use subscript $i$ to denote individuals and $j$ to denote learning problems.}. Therefore, a training data set of $n$ individuals $X$  for $m$ learning tasks $F$ takes the form:
$
S = \big\{ x_i, \left( f_j(x_i) \right)_{j=1}^m \big\}_{i=1}^n
$.
%We will later use $S_j$ to denote only the training data corresponding to the $j$th learning problem, i.e., $S_j = \left\{ x_i, f_j(x_i) \right\}_{i=1}^n$. 
We summarize the notations we use for individuals and problems in Table \ref{tab:sumnotations}.

\begin{table}
\caption{Summary of notations for individuals vs. problems.}
\renewcommand{\arraystretch}{1.3}
\centering
\begin{tabular}{|c|c|c|c|c|c|c|}
\hline
 & space & element & distribution & data set & sample size & empirical dist. \\
\hline
individual & $\X$ &  $x \in \X$ & $\Px$ & $X = \{x_i \}_{i=1}^n$ & $n$ & $\Pxhat = \uniform \left( X \right)$\\
\hline
problem & $\F$ & $f \in \F$ & $\Q$ & $F = \{f_j \}_{j=1}^m$ & $m$ & $\Qhat = \uniform \left( F \right)$\\
\hline
\end{tabular}
\label{tab:sumnotations}
\end{table}

In general the function class $\F$ will be unknown. We will aim to solve the (agnostic) learning problem over a hypothesis class $\Hs$, which need bear no relationship to $\F$. We assume throughout that $\Hs$ contains the constant classifiers $h^0$ and $h^1$ where $h^0 (x) = 0$ and $h^1 (x) = 1$ for all $x$. This assumption will allow us to argue about feasibility of the constrained optimization problems that we will solve. We allow for randomized classifiers, which we model as learning over $\Delta(\Hs)$, the probability simplex over $\Hs$.

Unlike usual learning settings where the primary goal is to learn a single hypothesis $p \in \Delta(\Hs)$, our objective is to learn a \emph{mapping} $\psi \in \Delta (\Hs)^\F$ that maps learning tasks $f \in \F$ represented as labelings of the training data to hypotheses $p \in \Delta (\Hs)$. We will therefore have to formally define the error rates incurred by a mapping $\psi$ and use them to formalize a learning task subject to our proposed fairness notion. For a mapping $\psi$, we write $\psi_f$ to denote the classifier corresponding to $f \in \F$ under the mapping, i.e., $\psi_f = \psi \left(f \right) \in \Delta(\Hs)$.

Notice in the training phase, there are only $m$ learning problems $F = \{f_j\}_{j=1}^m$ to be solved, and therefore, the corresponding empirical fair learning problem reduces to learning $m$ randomized classifiers $\pvec = \left( p_1, p_2, \ldots , p_m\right) \in \Delta (\Hs)^m$, where $p_j$ is the learned classifier for the $j$th problem $f_j \in F$. In general, learning $m$ specific classifiers for the training problems might not give any generalizable rule mapping new problems to hypotheses --- but the specific algorithm we propose will, in the form of a set of weights (derived from the dual variables of the ERM problem) over the training individuals.
\medskip

\subsection{AIF: Average Individual Fairness}
\begin{definition}[Individual and Overall Error Rates of a Mapping $\psi$]\label{def:individualerror}
For a given individual $x \in \X$, a mapping $\psi \in \Delta (\Hs)^\F$, and distributions $\Px$ and $\Q$ over $\X$ and $\F$, the individual error rate of $x$ incurred by $\psi$ is defined as follows:
$$
\Epsilon \left(x, \psi; \Q \right) = \underset{f \sim \Q}{\E} \left[ \underset{h \sim \psi_f}{\Ps} \left[ h(x) \neq f(x) \right] \right]
$$
The overall error rate of $\psi$ is:
$$
\err \left( \psi ; \Px, \Q\right) = \underset{x \sim \Px}{\E} \left[ \Epsilon \left(x, \psi; \Q \right) \right]
$$
\end{definition}
\medskip

In the body of this paper, we will focus on a fairness constraint that asks that the individual error rate should be approximately equalized across all individuals. In Section \ref{sec:learningfpaif} of the Appendix, we extend our techniques to equalizing false positive and negative rates across individuals. 
\medskip

\begin{definition}[Average Individual Fairness (AIF)]\label{def:fairness}
In our framework, we say a mapping $\psi \in \Delta (\Hs)^\F$ satisfies ``$(\alpha, \beta)$-AIF" (reads $(\alpha,\beta)$-approximate Average Individual Fairness) with respect to the distributions $\left( \Px, \Q \right)$ if there exists $\gamma \ge 0$ such that:
\begin{equation*}
\underset{x \sim \Px}{\Ps} \left( \left\vert \Epsilon \left(x,\psi ; \Q \right) - \gamma \right\vert > \alpha \right) \le \beta
\end{equation*}
\end{definition}
\medskip

\subsection{Notations}
We briefly fix some notations:
\begin{itemize}
\item For an event $A$, $\1 \left[ A \right]$ represents the indicator function of $A$. $\1 \left[ A \right] = 1$ if $A$ occurs.
\item For a natural number $n \in \N$, $[n] = \{ 1, 2 , \ldots, n \}$.
\item $\uniform \left(S \right)$ represents the uniform distribution over the set $S$.
\item For a mapping $\psi: A \to B$ and $A' \subseteq A$, $\psi \vert_{A'}$ represents $\psi$ restricted to the domain $A'$.
\item For a hypothesis class $\Hs$, $\Delta (\Hs)$ represents the probability simplex over $\Hs$.
\item $d_\Hs$ denotes the VC dimension of the hypothesis class $\Hs$.
\item $CSC (\Hs)$ denotes a \emph{cost sensitive classification oracle} for $\Hs$ which is defined below.
\end{itemize}

\medskip
\begin{definition}[Cost Sensitive Classification (CSC) in $\Hs$]\label{def:csc}
Let $D = \{x_i, c_i^1, c_i^0 \}_{i=1}^n$ denote a data set of $n$ individuals $x_i$ where $c_i^1$ and $c_i^0$ are the costs of classifying $x_i$ as positive (1) and negative (0) respectively. Given $D$, the cost sensitive classification problem defined over $\Hs$ is the optimization problem:
$$
\argmin_{h \in \Hs} \sum_{i=1}^n \left\{ c_i^1 h(x_i) + c_i^0 \left( 1 - h(x_i) \right) \right\}
$$
An oracle $CSC(\Hs)$ takes the data set $D = \{x_i, c_i^1, c_i^0 \}_{i=1}^n$ as input and outputs the solution to the optimization problem. We use $CSC (\Hs; D)$ to denote the classifier returned by $CSC (\Hs)$ on an input data set $D$. We say that an algorithm is oracle efficient if it runs in polynomial time given the ability to make unit-time calls to $CSC(\Hs)$. 
\end{definition}

%In section \ref{sec:learningaif} we will propose an algorithm for learning classifiers that satisfy the AIF notion of fairness and provide an extensive analysis of the developed algorithm. In section \ref{sec:learningfpaif} we will consider the FPAIF notion and introduce an algorithm to solve learning problems when false positive parity is required. Some of the theoretical analysis of section \ref{sec:learningfpaif} will carry over from section \ref{sec:learningaif}, and accordingly, we will focus on stating the final results. We will conclude with an experimental evaluation of our algorithms in section \ref{sec:experiments}.

%%%%%%%%%%%%%%%%%%%%%%%%%%%%%%%%%%%%%%%%%%%%%%%%%%%%%%%
\section{Learning subject to AIF}\label{sec:learningaif}
%!TEX root = averagefair.tex
In this section we first cast the learning problem subject to the AIF fairness constraints as the constrained optimization problem (\ref{box:fair}) and then develop an oracle efficient algorithm for solving its corresponding empirical risk minimization (ERM) problem (in the spirit of \cite{agarwal}). In the coming sections we give a full analysis of the developed algorithm including its \emph{in-sample} accuracy/fairness guarantees, and define the mapping it induces from new problems to hypotheses, and finally establish \emph{out-of-sample} bounds for this trained mapping.

%Having formally introduced the AIF notion of fairness in Definition \ref{def:fairness}, we cast the fair learning problem as the constrained optimization problem \ref{box:fair}. Although as discussed in the previous section, learning mappings satisfying our fairness notion with parameter $\beta = 0$ is impossible in practice, we adopt the $\left( \alpha, 0 \right)$-AIF notion in the optimization problem \ref{box:fair} to give ourselves a benchmark with which to compare the performance of the mapping learned by our algorithm. In other words, we will compare the error achieved by our algorithm's trained mapping to the optimal error (which we refer to as \opt) of the optimization problem \ref{box:fair}.

\begin{tcolorbox}[title= {Fair Learning Problem subject to ($\alpha , 0$)-AIF}]
\begin{equation}\label{box:fair}
\begin{aligned}
& \min_{\psi \, \in \, \Delta(\Hs)^\F, \, \gamma \, \in \, [0,1]}  & & \err \left( \psi ; \Px, \Q \right) \\
& \ \text{ s.t. $\forall x \in \X$:} & & \left\vert \Epsilon \left(x, \psi; \Q \right) - \gamma \right\vert \le \alpha
\end{aligned}
\end{equation}
\end{tcolorbox}

\medskip
\begin{definition}[\opt]\label{def:opt}
Consider the optimization problem (\ref{box:fair}). Given distributions $\Px$ and $\Q$, and fairness approximation parameter $\alpha$, we denote the optimal solutions of (\ref{box:fair}) by $\psi^\star \left( \alpha ; \Px , \Q \right) $ and $\gamma^\star \left( \alpha ; \Px , \Q \right)$, and the value of the objective function at these optimal points by $\opt \left(\alpha; \Px, \Q \right)$. 
In other words
\begin{equation*}
\opt \left (\alpha; \Px, \Q \right) = \err \left( \psi^\star ; \Px, \Q \right)
\end{equation*}
\end{definition}

We will use $\opt$ as the benchmark with respect to which we evaluate the accuracy of our trained mapping. It is worth noticing that the optimization problem (\ref{box:fair}) has a nonempty set of feasible points for any $\alpha$ and any distributions $\Px$ and $\Q$ because the following point is always feasible: $\gamma = 0.5$ and $\psi_f = 0.5 h^0 + 0.5 h^1$ (i.e. random classification) for all $f \in \F$ where $h^0$ and $h^1$ are all-zero and all-one constant classifiers.

\subsection{The Empirical Fair Learning Problem}\label{subsec:empirical}
We start to develop our algorithm by defining the \emph{empirical} version of (\ref{box:fair}) for a given training data set of $n$ individuals $X = \{x_i\}_{i=1}^n$ and $m$ learning problems $F = \{f_j\}_{j=1}^m$. We will formulate the empirical problem as finding a \emph{restricted} mapping $\psi \vert_F$ by which we mean the domain of the mapping is restricted to the training set $F \subseteq \F$. We will later see how the dynamics of our proposed algorithm allows us to extend the restricted mapping to a mapping from the entire space $\F$. We slightly change notation and represent a restricted mapping $\psi \vert_F \in \Delta(\Hs)^F$ explicitly by a vector $\pvec = \left( p_1, \, \ldots, \, p_m \right) \in \Delta (\Hs)^m$ of randomized classifiers where $p_j \in \Delta (\Hs)$ corresponds to $f_j \in F$. Notice the empirical versions of the overall error rate and the individual error rates incurred by the mapping $\pvec$ (see Definition \ref{def:individualerror}) can be expressed as:
\begin{equation}\label{eq:empiricalerr}
\err \left( \pvec ; \Pxhat, \Qhat\right) = \frac{1}{n} \sum_{i=1}^n \Epsilon \left(x_i, \pvec ; \Qhat\right) = \frac{1}{m} \sum_{j=1}^m \frac{1}{n} \sum_{i=1}^n \underset{h_j \sim \, p_j}{\Ps} \left[ h_j (x_i) \neq f_j(x_i) \right]
\end{equation}
\begin{equation}\label{eq:empiricaleps}
\Epsilon \left(x, \pvec ; \Qhat  \right) = \frac{1}{m} \sum_{j=1}^m \underset{h_j \sim \, p_j}{\Ps} \left[ h_j (x) \neq f_j(x) \right]
\end{equation}

Using these empirical quantities, we cast the empirical version of the fair learning problem (\ref{box:fair}) as the constrained optimization problem (\ref{box:fairerm}) where there is one constraint for each individual in the training data set. Note that the optimization problem (\ref{box:fairerm}) forms a linear program and that we considered a slightly relaxed version of (\ref{box:fair}) where instead of $(\alpha, 0)$-AIF, we require $(2\alpha, 0)$-AIF (of course now with respect to the empirical distributions) only to make sure the optimal solution $\left( \psi^\star, \gamma^\star \right)$ of (\ref{box:fair}) (in fact $\psi^\star$ restricted to $F$) is feasible in (\ref{box:fairerm}) as long as enough samples are acquired. This will appear later in the generalization analysis of our proposed algorithm.

\begin{tcolorbox}[title= Empirical Fair Learning Problem]
\begin{equation}\label{box:fairerm}
\begin{aligned}
& \ \ \min_{\pvec \, \in \, \Delta(\Hs)^m, \, \gamma \, \in \, [0,1]}  & & \err \left( \pvec ; \Pxhat, \Qhat\right) \\
& \text{ s.t. $\forall i \in \{1, \ldots, n\}$:} & &\left\vert \Epsilon \left( x_i , \pvec ; \Qhat\right) - \gamma \right\vert \le 2 \alpha \\
\end{aligned}
\end{equation}
\end{tcolorbox}

%In subsections \ref{subsec:noregret} and \ref{subsec:bestresponse} we will use no regret dynamics to reduce the fair learning problem \ref{box:fairerm} to a two-player zero-sum game and develop an iterative framework for solving the introduced game. In subsection \ref{subsec:algorithm} we will cast the introduced iterative framework as an oracle-efficient algorithm. We will see our proposed algorithm proceeds by maintaining a set of weights over the individuals in the training data and that we take these weights to extend the trained restricted mapping $\pvechat$ over $F$ to a mapping $\psihat$ over $\F$. We will then provide the algorithm's analysis, most importantly, the \emph{in-sample} accuracy and fairness guarantees of the mapping trained by the algorithm. Finally, in subsection \ref{subsec:generalization}, we will establish the generalization bounds and study the sample complexity bounds for both individuals ($n$) and problems ($m$). 
%\ar{Should say where we show that solving this in-sample problem actually leads to a mapping from problems to classifiers. Right now its not even clear what the object is we claim generalizes.} \ssnote{talked about "restricted mappings" and how we will extend it in the previous paragraphs.}

\subsection{A Reductions Approach: Formulation as a Two-player Game}\label{subsec:noregret}
We use the dual perspective of constrained optimization problems to reduce the fair ERM (\ref{box:fairerm}) to a two-player game between a ``Learner" (primal player) and an ``Auditor" (dual player). Towards deriving the Lagrangian of (\ref{box:fairerm}), we first rewrite its constraints in $\rvec ( \pvec , \gamma; \Qhat ) \le 0$ form where
\begin{equation}\label{eq:r}
\rvec \left( \pvec , \gamma ; \Qhat \right) = \begin{bmatrix} \Epsilon \left(x_i, \pvec; \Qhat \right) - \gamma - 2\alpha \\ \gamma - \Epsilon \left(x_i, \pvec; \Qhat\right) - 2 \alpha \end{bmatrix}_{i=1}^n \in \R^{2n}
\end{equation}
represents the ``fairness violations" of the pair $\left( \pvec, \gamma \right)$ in one single vector. Let the corresponding dual variables be represented by $\lamb = \left[\lamb_i^+, \lamb_i^- \right]_{i=1}^n \in \Lambda$, where $\Lambda = \{ \lamb \in  \R^{2n}_+ \, \vert \, || \lamb ||_1 \le B \}$. Note we place an upper bound $B$ on the $\ell_1$-norm of $\lamb$ in order to reason about the convergence of our proposed algorithm. $B$ will eventually factor into both the run-time and the approximation guarantees of our solution. Using Equation (\ref{eq:r}) and the introduced dual variables, we have that the Lagrangian of (\ref{box:fairerm}) is
\begin{align}\label{eq:lagrangian}
\Ls \left(\pvec, \gamma, \lamb \right) &=\err \left( \pvec ; \Pxhat, \Qhat\right) + \lamb^T \rvec \left( \pvec , \gamma ; \Qhat \right)
\end{align}
We therefore consider solving the following minmax problem:
\begin{equation}\label{eq:minmax}
\min_{\pvec \, \in \, \Delta(\Hs)^m, \, \gamma \, \in \, [0,1]} \, \max_{\lambda \in \Lambda} \ \Ls \left(\pvec, \gamma, \lamb \right)  \ = \ \max_{\lambda \in \Lambda} \, \min_{\pvec \, \in \, \Delta(\Hs)^m, \, \gamma \, \in \, [0,1]} \  \Ls \left(\pvec, \gamma, \lamb \right)
\end{equation}
where strong duality holds because $\Ls$ is linear in its arguments and the domains of $(\pvec, \gamma)$ and $\lamb$ are convex and compact (\cite{sion}). From a game theoretic perspective, the solution to this minmax problem can be seen as an equilibrium of a zero-sum game between two players. The primal player (Learner) has strategy space $\Delta(\Hs)^m \times [0,1]$ while the dual player (Auditor) has strategy space $\Lambda$, and given a pair of chosen strategies $\left(\pvec, \gamma, \lamb \right)$, the Lagrangian $\Ls \left(\pvec, \gamma, \lamb \right)$ represents how much the Learner has to pay to the Auditor --- i.e. it defines the payoff function of a zero-sum game in which the Learner is the minimization player, and the Auditor is the maximization player.

Using no regret dynamics, an approximate equilibrium of this zero-sum game can be found in an iterative framework. In each iteration, we let the dual player run the \emph{exponentiated gradient descent} algorithm and the primal player \emph{best respond}. The best response problem of the Learner can be decoupled into $(m+1)$ separate minimization problems and that in particular, the optimal classifiers $\pvec$ can be viewed as the solutions to $m$ \emph{weighted classification problems} in $\Hs$ where all $m$ problems share the same weights $\wvec = [\lambda_i^+ - \lambda_i^-]_i \in \R^n$ over the training individuals. In the following subsection, we derive and implement the best response of the Learner where we use the learning oracle $CSC(\Hs)$ (see Definition \ref{def:csc}) to solve the weighted classification problems.

\subsection{\best: The Learner's Best Response}\label{subsec:bestresponse}
We formally describe and analyze the best response problem of the Learner in this subsection and summarize the results in a subroutine called $\best$.  In each iteration of the described iterative framework, the Learner is given some $\lamb \in \Lambda$ picked by the Auditor and she wants to solve the following minimization problem.
$$
\argmin_{\pvec \, \in \, \Delta(\Hs)^m, \, \gamma \, \in \, [0,1]} \, \Ls \left( \pvec, \gamma, \lamb \right)
$$
We will use Equations (\ref{eq:empiricalerr}) and (\ref{eq:empiricaleps}) to expand the Lagrangian (\ref{eq:lagrangian}) and decouple the above minimization problem into $(m+1)$ minimization problems, each depends only on one variable the Learner has to pick.

\begin{align*}
&\argmin_{\pvec \, \in \, \Delta(\Hs)^m, \, \gamma \, \in \, [0,1]} \, \Ls \left( \pvec, \gamma, \lamb \right) \\
\equiv &\argmin_{\pvec \, \in \, \Delta(\Hs)^m, \, \gamma \, \in \, [0,1]} \, \err \left( \pvec ; \Pxhat, \Qhat\right) + \sum_{i=1}^n \left\{ \lambda_{i}^+ \left( \Epsilon \left(x_i, \pvec; \Qhat \right) - \gamma  \right) + \lambda_{i}^- \left( \gamma  - \Epsilon \left(x_i, \pvec; \Qhat \right)  \right) \right\} \\
\equiv &\argmin_{\pvec \, \in \, \Delta(\Hs)^m, \, \gamma \, \in \, [0,1]} \, \frac{1}{n} \sum_{i=1}^n \Epsilon \left(x_i, \pvec ; \Qhat\right) + \sum_{i=1}^n \left\{ \lambda_{i}^+ \left( \Epsilon \left(x_i, \pvec; \Qhat \right) - \gamma  \right) + \lambda_{i}^- \left( \gamma  - \Epsilon \left(x_i; \pvec; \Qhat \right)  \right) \right\} \\
\equiv &\argmin_{\pvec \, \in \, \Delta(\Hs)^m, \, \gamma \, \in \, [0,1]} \, \sum_{i=1}^n \left\{ \lambda_{i}^- - \lambda_{i}^+ \right\}  \gamma + \sum_{i=1}^n \left(\frac{1}{n} + \lambda_{i}^+ - \lambda_{i}^- \right) \Epsilon \left(x_i, \pvec; \Qhat \right) \\
\equiv &\argmin_{\pvec \, \in \, \Delta(\Hs)^m, \, \gamma \, \in \, [0,1]} \, \sum_{i=1}^n \left\{ \lambda_{i}^- - \lambda_{i}^+ \right\}  \gamma + \frac{1}{m}\sum_{j=1}^m \left\{ \sum_{i=1}^n \left(\frac{1}{n} +  \lambda_{i}^+ - \lambda_{i}^- \right) \underset{h_j \sim \, p_j}{\Ps} \left[ h_j (x_i) \neq f_j(x_i) \right] \right\}
\end{align*}

Therefore, the minimization problem of the Learner gets nicely decoupled into $(m+1)$ minimization problems. Let $w_{i} = \lambda_{i}^+ - \lambda_{i}^-$ for all $i$, and acccordingly, let $\wvec = \left[w_{1}, \ldots, w_{n} \right]^\top \in \R^n$. First, the optimal value for $\gamma$ is chosen according to
\begin{equation}\label{eq:gammat}
\gamma = \1 \left[ \sum_{i=1}^n w_{i}  > 0\right]
\end{equation}
And for learning problem $j$, the following minimization problem must be solved.
$$
\argmin_{p_j \, \in \, \Delta(\Hs)} \, \sum_{i=1}^n \left( 1/n + w_{i} \right) \underset{h_j \sim \, p_j}{\Ps} \left[ h_j (x_i) \neq f_j(x_i) \right]  \equiv \argmin_{h_j \, \in \, \Hs} \, \sum_{i=1}^n \left( 1/n + w_{i} \right)  \1 \left[ h_j (x_i) \neq f_j(x_i) \right]
$$
where the equivalence holds since the Learner can choose to put all the probability mass on a single classifier. This minimization problem represents exactly a weighted classification problem. Since we work with cost sensitive classification oracles in this paper, we further reduce the weighted classification problem to a CSC problem that can be solved by a call to the CSC oracle for $\Hs$ ($CSC(\Hs)$). For problem $j \in [m]$, let 
$$
c_{i,j}^1 = (w_{i} + 1/n) (1 - f_j(x_i)) \quad , \quad c_{i,j}^0 = (w_{i} + 1/n) f_j (x_i)
$$
represent the costs associated with individual $i \in [n]$. Observe that the above weighted classification problem can be now casted as the following CSC problem.
\begin{equation}\label{eq:ht}
h_j = \argmin_{h \, \in \, \Hs} \, \sum_{i=1}^n c_{i,j}^1 \, h (x_i) + c_{i,j}^0 \left( 1 - h (x_i) \right)
\end{equation}
To sum up, in each iteration of the algorithm the Auditor first uses the exponentiated gradient descent algorithm to update the dual variable $\lamb$ (or correspondingly, the vector of weights $\wvec$ over the training individuals) and then the Learner picks $\gamma = \1 \left[ \sum_{i=1}^n w_{i}  > 0\right]$ and solves $m$ cost sensitive classification problems casted in (\ref{eq:ht}) by calling the oracle $CSC(\Hs)$ for all $1 \le j \le m $. We have the best response of the Learner written in Subroutine \ref{bestresponse}. This subroutine will be called in each iteration of the final AIF learning algorithm.

\begin{subroutine}
\KwIn{dual weights $\wvec \in \R^n$, training examples $S = \left\{ x_i, \left( f_j(x_i) \right)_{j=1}^m \right\}_{i=1}^n$}
\medskip
$\gamma \leftarrow \1 \left[ \sum_{i=1}^n w_{i} > 0\right]$ \\
\For{$j= 1, \ldots, m$}{
$c_i^1 \leftarrow (w_{i} + 1/n) (1 - f_j(x_i)) $ for $i \in [n]$.\\
$c_i^0 \leftarrow (w_{i} + 1/n) f_j(x_i) $ for $i \in [n]$.\\
 $D \leftarrow \{ x_i , c_i^1, c_i^0 \}_{i=1}^n$\\
$h_{j} \leftarrow CSC \left(\Hs; D \right)$
}
$\hvec \leftarrow \left( h_{1}, \, h_{2}, \, \ldots, \, h_{m} \right)$\\
\medskip
\KwOut{$\left( \hvec , \gamma \right)$}
\caption{\best -- best response of the Learner in the AIF setting}
\label{bestresponse}
\end{subroutine}

\subsection{\textbf{AIF-Learn}: Implementation and In-sample Guarantees}\label{subsec:algorithm}
In Algorithm \ref{algo} (\textbf{AIF-Learn}), with a slight deviation from what we described in the previous subsections, we implement the proposed algorithm. The deviation arises when the Auditor updates the dual variables $\lamb$ in each round, and is introduced in the service of arguing for generalization. To counteract the inherent adaptivity of the algorithm (which makes the quantities estimated at each round data dependent), at each round $t$ of the algorithm, we draw a \emph{fresh} batch of $m_0$ problems to estimate the fairness violation vector $\rvec$ (\ref{eq:r}). From another viewpoint -- which is the way the algorithm is actually implemented -- similar to usual batch learning models we assume we have a training set $F$ of $m$ learning problems upfront. However, in our proposed algorithm that runs for $T$ iterations, we partition $F$ into $T$ equally-sized ($m_0$) subsets $\{F_t\}_{t=1}^T$ uniformly at random and use only the batch of problems $F_t$ at round $t$ to update the dual variables $\lamb$. Without loss of generality and to avoid technical complications, we assume $\vert F_t \vert = m_0 = m/T$ is a natural number. This is represented in Algorithm \ref{algo} by writing $\Qhat_t = \uniform \left( F_t \right)$ for the uniform distribution over the batch of problems $F_t$ , and $\hvec_t \vert_{F_t}$ for the associated learned classifiers for $F_t$. We will see this modification will only introduce an extra $ \Otilde (\sqrt{1/m_0})$ term to the regret of the Auditor and thus we have to assume $m_0$ is sufficiently large (Assumption \ref{ass:m}) so that the Auditor has in fact low regret. 

Notice \textbf{AIF-Learn} takes as input an approximation parameter $\nu \in [0,1]$ which will quantify how close the output of the algorithm is to an equilibrium of the introduced game, and it will accordingly propagate to the accuracy bounds. One important aspect of \textbf{AIF-Learn} is that the algorithm maintains a vector of weights $\wvec_t \in \R^n$ over the training individuals $X$ and that each $\widehat{p}_j$ learned by our algorithm is in fact an average over $T$ classifiers where classifier $t$ is the solution to a CSC problem on $X$ weighted by $\wvec_t$. As a consequence, we propose to extend the learned restricted mapping $\pvechat \in \Delta (\Hs)^m$ to a mapping $\psihat = \psihat \, ( X, \What ) \in \Delta (\Hs)^\F$ that takes \emph{any} problem $f \in \F$ as input (represented to $\psihat$ by the labels it induces on the training individuals), uses $X$ along with the set of weights $\What = \{\wvec_t\}_{t=1}^T$ to solve $T$ CSC problems in a similar fashion, and outputs the average of the learned classifiers denoted by $\psihat_f \in \Delta(\Hs)$. This extension is consistent with $\pvechat$ in the sense that $\psihat$ restricted to $F$ will be exactly the $\pvechat$ output by our algorithm. We have the pseudocode for $\psihat$ written in detail in Mapping \ref{algo:psihat} and we let \textbf{AIF-Learn} output $\psihat$.

%\ar{The way the pseudocode is written, it looks like the primal player always best responds with the set of all $m$ problems, not the current round's batch. Is that ok?}\ssnote{it's fine.}

\begin{algorithm}
\KwIn{fairness parameter $\alpha$, approximation parameter $\nu$, \\ \ \ \ \ \ \ \ \ \ \ \ training data set $X = \{x_i\}_{i=1}^n$ and $F=\{f_j\}_{j=1}^m$ }
\medskip
\begin{center}
$B \leftarrow \frac{1+ 2 \nu}{ \alpha} , \ \ T \leftarrow  \frac{16 B^2 \left( 1 + 2\alpha  \right)^2  \log \left( 2n+1 \right) }{\nu^2}, \  \ \eta \leftarrow \frac{\nu}{4 \left( 1 + 2 \alpha \right)^2 B}, \ \ m_0 \leftarrow \frac{m}{T}, \ \ S \leftarrow \left\{ x_i, \left( f_j(x_i) \right)_{j=1}^m \right\}_{i=1}^n$
\end{center}
\medskip
Partition $F$ uniformly at random: $F = \{F_t\}_{t=1}^T$ where $| F_t | = m_0$.\\
$\boldsymbol{\theta}_1 \leftarrow \boldsymbol{0} \in \R^{2n}$\\
\For{$t=1, \ldots, T$}{
$\lambda_{i,t}^{\bullet} \leftarrow B \frac{\exp(\theta_{i,t}^{\bullet})}{1 + \sum_{i',\bullet'} \exp(\theta_{i',t}^{\bullet'})}$ for $i \in [n]$ and $\bullet \in \{ +, -\}$ \\
$\wvec_t \leftarrow [ \lambda_{i,t}^+ - \lambda_{i,t}^- ]_{i=1}^n \in \R^n$\\
$( \hvec_t, \gamma_t ) \leftarrow \best (\wvec_t; S)$\\
$\boldsymbol{\theta}_{t+1} \leftarrow \boldsymbol{\theta}_{t} + \eta \cdot \rvec \left(\hvec_t \vert_{F_t}, \gamma_t ; \Qhat_t\right)$
}
$\pvechat \leftarrow \frac{1}{T} \sum_{t=1}^T \hvec_t \ , \quad \gammahat \leftarrow \frac{1}{T} \sum_{t=1}^T \gamma_t \ , \quad \lambhat \leftarrow \frac{1}{T} \sum_{t=1}^T \lamb_t \ , \quad \What \leftarrow \{\wvec_t\}_{t=1}^T$
\medskip

\KwOut{average plays $\left( \pvechat, \, \gammahat, \, \lambhat \right)$, mapping $\psihat = \psihat \left(X, \What \right)$ (see Mapping \ref{algo:psihat})}
\caption{\textbf{AIF-Learn} -- learning subject to AIF}
\label{algo}
\end{algorithm}

\begin{mapping}
\KwIn{$f \in \F$ (represented to $\psihat$ as $\{ f(x_i) \}_{i=1}^n$) }

\medskip
\For{$t=1, \ldots, T$}{
$c_i^1 \leftarrow (w_{i,t} + 1/n) (1 - f(x_i)) $ for $i \in [n]$\\
$c_i^0 \leftarrow (w_{i,t} + 1/n) f(x_i) $ for $i \in [n]$\\
 $D \leftarrow  \{ x_i , c_i^1, c_i^0 \}_{i=1}^n$\\
$h_{f,\wvec_t} \leftarrow CSC \left(\Hs; D \right)$\\
}
$\psihat_{f} \leftarrow \frac{1}{T} \sum_{t=1}^T  h_{f,\wvec_t}$
\medskip

\KwOut{$\psihat_f \in \Delta (\Hs)$}
\caption{{$\boldsymbol{\psihat} \, ( X, \What )$} -- pseudocode for the mapping $\psihat$ output by Algorithm \ref{algo}}
\label{algo:psihat}
\end{mapping}

We start the analysis of Algorithm \ref{algo} by establishing the regret bound of the Auditor over $T$ rounds of the algorithm. The regret bound will help us pick the number of iterations $T$ and the learning rate $\eta$ so that the Auditor has sufficiently small regret (bounded by $\nu$). Notice the Learner uses her best response in each round of the algorithm which implies that she has zero regret. Since in this subsection we eventually want to state in-sample guarantees (i.e., guarantees with respect to the distributions $\Pxhat$ and $\Qhat$), we work with the restricted mapping $ \psihat \vert_F = \pvechat$. In the next subsection we will focus on generalizations in our framework and there we will have to state the guarantees for the mapping $\psihat$. We defer all the proofs to the Appendix
\medskip
\begin{lemma}[Regret of the Auditor]\label{lemma:regretdual}
Let $0< \delta <1$. Let $\{ \lamb_t \}_{t=1}^T$ be the sequence of exponentiated gradient descent plays (with learning rate $\eta$) by the Auditor to the given $\{ \hvec_t , \gamma_t \}_{t=1}^T$ of the Learner over $T$ rounds of Algorithm \ref{algo}. We have that for any set of individuals $X$, with probability at least $1-\delta$ over the problems $F$, the (average) regret of the Auditor is bounded as follows.
\begin{align*}\label{eq:regretlambda}
&\frac{1}{T} \max_{\lamb \in \Lambda} \sum_{t=1}^T \Ls \left(\hvec_t, \gamma_t, \lamb \right) - \frac{1}{T} \sum_{t=1}^T \Ls \left(\hvec_t, \gamma_t, \lamb_t \right) \le B \sqrt{ \frac{\log \left( 2nT / \delta \right)}{2 m_0} }  + \frac{B \log \left(2n +1 \right)}{\eta T} + \eta B \left( 1 + 2 \alpha \right)^2
\end{align*}
\end{lemma}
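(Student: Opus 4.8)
The plan is to treat the Auditor as running exponentiated gradient descent in the zero-sum game whose per-round payoff is linear in $\lamb$, and to split her regret into a deterministic no-regret term plus a concentration term that accounts for the algorithm forming each gradient from the \emph{fresh} batch $F_t$ rather than from all $m$ problems. Throughout I fix the individuals $X$, so all randomness is over the problems $F$ and the partition $\{F_t\}$. First I would note that the error term in $\Ls(\hvec_t,\gamma_t,\lamb)=\err(\hvec_t;\Pxhat,\Qhat)+\lamb^\top\rvec(\hvec_t,\gamma_t;\Qhat)$ is independent of $\lamb$ and hence cancels in the regret, so the quantity to control is the linear part $\frac1T\big[\max_{\lamb\in\Lambda}\sum_t\langle\lamb,\rvec_t\rangle-\sum_t\langle\lamb_t,\rvec_t\rangle\big]$, where $\rvec_t=\rvec(\hvec_t,\gamma_t;\Qhat)$ is the full-sample violation vector appearing in $\Ls$. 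Writing $\widetilde{\rvec}_t=\rvec(\hvec_t|_{F_t},\gamma_t;\Qhat_t)$ for the batch estimate actually used in the update $\boldsymbol{\theta}_{t+1}=\boldsymbol{\theta}_t+\eta\,\widetilde{\rvec}_t$, I would add and subtract $\widetilde{\rvec}_t$ and bound the regret by (i) the regret of exponentiated gradient against its own gradients $\widetilde{\rvec}_t$, plus (ii) a discrepancy term $\frac{B}{T}\sum_t\|\rvec_t-\widetilde{\rvec}_t\|_\infty$, using $|\langle\lamb,\rvec_t-\widetilde{\rvec}_t\rangle|\le\|\lamb\|_1\,\|\rvec_t-\widetilde{\rvec}_t\|_\infty$ together with $\|\lamb\|_1\le B$ on $\Lambda$.

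For part (i) I would invoke the standard multiplicative-weights guarantee. The specific update, with the extra $1$ in the normalizing denominator, is exactly Hedge over $2n+1$ ``experts'' --- the $2n$ dual coordinates plus one slack expert that encodes the relaxed constraint $\|\lamb\|_1\le B$ --- rescaled by $B$. Since every coordinate of $\widetilde{\rvec}_t$ lies in $[-(1+2\alpha),\,1-2\alpha]$ (because $\Epsilon(\cdot)\in[0,1]$ and $\gamma_t\in[0,1]$), the gain vectors are bounded by $1+2\alpha$ in $\ell_\infty$, and the textbook bound yields $\frac1T\big[\max_{\lamb}\sum_t\langle\lamb,\widetilde{\rvec}_t\rangle-\sum_t\langle\lamb_t,\widetilde{\rvec}_t\rangle\big]\le\frac{B\log(2n+1)}{\eta T}+\eta B(1+2\alpha)^2$, which is precisely the last two terms of the claim. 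This part is deterministic given the realized gradient sequence, so it contributes nothing to the failure probability.

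Part (ii) is where the real work lies, and I expect it to be the main obstacle. I need $\|\rvec_t-\widetilde{\rvec}_t\|_\infty=O\!\big(\sqrt{\log(2nT/\delta)/m_0}\big)$ uniformly over the $T$ rounds with probability $1-\delta$. The difficulty is adaptivity: the weights $\wvec_t$, and hence the per-problem error function $f\mapsto\Ps_{h\sim h_{f,\wvec_t}}[h(x_i)\neq f(x_i)]$ that determines $\hvec_t$, depend on the data through the earlier rounds, so a naive Hoeffding bound over $F_t$ is not licensed. This is exactly what the fresh-batch device repairs: conditioning on the first $t-1$ batches, the vector $\wvec_t$ is frozen while $F_t$ is an \emph{independent} sample of $m_0$ problems, so each coordinate $\Epsilon(x_i,\hvec_t|_{F_t};\Qhat_t)$ is an average of $m_0$ i.i.d.\ $[0,1]$-valued terms. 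A conditional Hoeffding bound then pins each coordinate to its conditional mean within $\sqrt{\log(2nT/\delta)/(2m_0)}$, and a union bound over the (at most) $2n$ coordinates and $T$ rounds absorbs the $\log(2nT/\delta)$; multiplying by the factor $B$ from $\|\lamb\|_1\le B$ produces the first term $B\sqrt{\log(2nT/\delta)/(2m_0)}$.

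The two points I would be most careful about are the conditioning/filtration structure in part (ii) --- spelling out that $\wvec_t$ is measurable with respect to $F_1,\dots,F_{t-1}$ while $F_t$ is independent of it, which is what makes the i.i.d.\ Hoeffding step valid despite the algorithm's adaptivity --- and the accounting that identifies the conditional mean of the batch estimate with the vector $\rvec_t$ entering $\Ls$, so that the concentration genuinely bridges $\widetilde{\rvec}_t$ and $\rvec_t$ rather than merely controlling fluctuations around an unrelated target. Assembling (i) and (ii) over the high-probability event then gives the stated three-term bound, completing the argument.
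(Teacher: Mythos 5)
Your proposal is correct and follows essentially the same route as the paper's proof: the same decomposition into the exponentiated-gradient (Hedge over $2n+1$ experts, including the slack coordinate for the inequality constraint $\|\lamb\|_1\le B$) regret against the batch gradients $\widetilde{\rvec}_t$, plus a discrepancy term $\|\rvec_t-\widetilde{\rvec}_t\|_\infty$ controlled by a conditional Chernoff--Hoeffding bound over the fresh batch $F_t$ with a union bound over the $2n$ coordinates and $T$ rounds. Your explicit discussion of the filtration structure (that $\wvec_t$ is measurable with respect to $F_1,\dots,F_{t-1}$ while $F_t$ is independent of it, and that the conditional mean of the batch estimate is exactly $\rvec_t$) is in fact spelled out more carefully than in the paper, which simply asserts that $F_t$ may be viewed as independent draws from $\Qhat$.
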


The last two terms appearing in the above regret bound come from the usual regret analysis of the exponentiated gradient descent algorithm. However, the first term originates from a high probability Chernoff-Hoeffding bound because as explained before, the Auditor is using --- instead of the whole set of problems $F$ --- only $m_0$ randomly selected problem $F_t$ to estimate the vector of fairness violations $\rvec$ at round $t$. Hence at each round $t$, the difference of fairness violation estimates --- one with respect to $F_t$ and another with respect to $F$ --- will appear in the regret of the Auditor which can be bounded by the Chernoff-Hoeffding's inequality. We will therefore have to assume that $m_0$ is sufficiently large to make the above regret bound small enough.
\medskip

\begin{assumption}\label{ass:m}
For a given confidence parameter $\delta$, inputs $\alpha$ and $\nu$ of Algorithm \ref{algo}, we suppose throughout this section that the number of fresh problems $m_0$ used in each round of Algorithm \ref{algo} satisfies $m_0 \ge O \left(\frac{ \log \left( n T / \delta \right)}{\alpha^2 \nu^2} \right)$, or equivalently $m = m_0 \cdot T  \ge O \left(\frac{ T \log \left( n T / \delta \right)}{\alpha^2 \nu^2} \right)$.
\end{assumption}
\medskip

Following Lemma \ref{lemma:regretdual} and Assumption \ref{ass:m}, we characterize the average plays $( \pvechat, \, \gammahat, \, \lambhat )$ output by Algorithm \ref{algo} in the following lemma. Informally speaking, this lemma guarantees that neither player would gain more than $\nu$ if they deviated from these proposed strategies output by the algorithm. This is what we call a \emph{$\nu$-approximate equilibrium} of the game. The proof of the lemma follows from the regret analysis of the Auditor and is fully presented in the Appendix.
\medskip

\begin{lemma}[Average Play Characterization]\label{thm:approxequilib}
Let $0 < \delta < 1$. We have that under Assumption \ref{ass:m}, for any set of individuals $X$, with probability at least $1-\delta$ over the labelings $F$, the average plays $(\pvechat, \gammahat, \lambhat )$ output by Algorithm~\ref{algo} forms a $\nu$-approximate equilibrium of the game, i.e.,
\begin{align*}
&\Ls \left(\pvechat, \gammahat, \lambhat \right) \, \le \, \Ls \left(\pvec, \gamma, \lambhat \right)  + \nu \quad \text{for all  } \ \pvec \in \Delta (\Hs)^m \, , \gamma \in [0,1] \\
&\Ls \left(\pvechat, \gammahat, \lambhat \right) \, \ge \, \Ls \left(\pvechat, \gammahat, \lamb \right) - \nu \quad \text{for all  } \ \lamb \in  \Lambda
\end{align*}
\end{lemma}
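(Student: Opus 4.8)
The plan is to prove this via the standard argument that, in a zero-sum game, the time-averaged plays of a no-regret learner facing a best-responding opponent form an approximate equilibrium (in the style of Freund--Schapire). The two ingredients I would combine are: (i) since the Learner exactly best responds in every round, Subroutine~\ref{bestresponse} returns $(\hvec_t,\gamma_t)=\argmin_{\pvec,\gamma}\Ls(\pvec,\gamma,\lamb_t)$, so the Learner incurs zero regret; and (ii) the Auditor, running exponentiated gradient descent, has average regret at most $\nu$, which follows from Lemma~\ref{lemma:regretdual} once the prescribed values of $T$, $\eta$, and $m_0$ are plugged in. Both statements hold on the probability-$(1-\delta)$ event supplied by Lemma~\ref{lemma:regretdual}.

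The structural property I would exploit is that $\Ls(\pvec,\gamma,\lamb)=\err(\pvec;\Pxhat,\Qhat)+\lamb^\top\rvec(\pvec,\gamma;\Qhat)$ is affine in $(\pvec,\gamma)$ for each fixed $\lamb$ and affine in $\lamb$ for each fixed $(\pvec,\gamma)$. Since $\pvechat,\gammahat,\lambhat$ are the empirical averages of the per-round plays, affinity lets me pass the averages inside the arguments: for any fixed $\lamb$, $\Ls(\pvechat,\gammahat,\lamb)=\frac1T\sum_t\Ls(\hvec_t,\gamma_t,\lamb)$, and for any fixed $(\pvec,\gamma)$, $\Ls(\pvec,\gamma,\lambhat)=\frac1T\sum_t\Ls(\pvec,\gamma,\lamb_t)$. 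I would also record that the averages are admissible plays, since $\lambhat\in\Lambda$, $\pvechat\in\Delta(\Hs)^m$, and $\gammahat\in[0,1]$ by convexity of these sets.

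With these in hand I set $v=\frac1T\sum_t\Ls(\hvec_t,\gamma_t,\lamb_t)$ and establish two sandwiching bounds. Best response gives, for every $(\pvec,\gamma)$, $v\le\frac1T\sum_t\Ls(\pvec,\gamma,\lamb_t)=\Ls(\pvec,\gamma,\lambhat)$, hence $v\le\min_{\pvec,\gamma}\Ls(\pvec,\gamma,\lambhat)$; the Auditor regret bound gives, for every $\lamb$, $\Ls(\pvechat,\gammahat,\lamb)=\frac1T\sum_t\Ls(\hvec_t,\gamma_t,\lamb)\le v+\nu$, hence $\max_\lamb\Ls(\pvechat,\gammahat,\lamb)\le v+\nu$. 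Chaining these through the admissible play $\lambhat$ yields $\Ls(\pvechat,\gammahat,\lambhat)\le\max_\lamb\Ls(\pvechat,\gammahat,\lamb)\le v+\nu\le\min_{\pvec,\gamma}\Ls(\pvec,\gamma,\lambhat)+\nu$, which is exactly the first (Learner) inequality. For the second (Auditor) inequality I combine $\Ls(\pvechat,\gammahat,\lambhat)\ge\min_{\pvec,\gamma}\Ls(\pvec,\gamma,\lambhat)\ge v$ with $\Ls(\pvechat,\gammahat,\lamb)\le v+\nu$ to conclude $\Ls(\pvechat,\gammahat,\lamb)\le\Ls(\pvechat,\gammahat,\lambhat)+\nu$ for all $\lamb$.

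I expect the only real work to be in step (ii): checking that the three terms of Lemma~\ref{lemma:regretdual} sum to at most $\nu$ under the algorithm's settings. Substituting $\eta=\nu/(4(1+2\alpha)^2B)$ makes the third term exactly $\nu/4$, and substituting $T=16B^2(1+2\alpha)^2\log(2n+1)/\nu^2$ makes the second term exactly $\nu/4$; the Chernoff--Hoeffding first term is forced below $\nu/2$ precisely by Assumption~\ref{ass:m} on $m_0$, using $B=(1+2\nu)/\alpha=O(1/\alpha)$. The subtle point worth emphasizing is that the equilibrium is stated against the full empirical problem distribution $\Qhat$, whereas the dual updates use only the fresh batch $\Qhat_t$; this discrepancy is exactly what the first regret term accounts for and has already been absorbed into Lemma~\ref{lemma:regretdual}, so nothing further is needed here beyond invoking that lemma.
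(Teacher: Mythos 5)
Your proposal is correct and follows essentially the same route as the paper's proof: both combine the Learner's exact best response (zero regret), the Auditor's regret bound from Lemma~\ref{lemma:regretdual}, and the bilinearity of $\Ls$ to pass the time-averages inside, then verify that the algorithm's choices of $T$, $\eta$, and Assumption~\ref{ass:m} on $m_0$ force the three regret terms to sum to at most $\nu$ (with the same $\nu/2 + \nu/4 + \nu/4$ split). The only cosmetic difference is that you route the chaining through the game value $v$ and the $\min$/$\max$ operators, whereas the paper chains the two inequalities directly.
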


We are now ready to present the main theorem of this subsection which takes the guarantees provided in Lemma \ref{thm:approxequilib} and turns them into accuracy and fairness guarantees of the pair $\left(\pvechat, \gammahat \right)$ using the specific form of the Lagrangian (\ref{eq:lagrangian}). The theorem will in fact show that the set of randomized classifiers $\pvechat$ achieves optimal accuracy up to $O \left( \nu \right)$ and that it also satisfies $\left( O \left( \alpha \right) , 0 \right)$-AIF notion of fairness, all with respect to the empirical distributions $\Pxhat$ and $\Qhat$.
\medskip
\begin{theorem}[In-sample Accuracy and Fairness]\label{thm:insampleguarantees}
Let $0< \delta <1$ and suppose Assumption \ref{ass:m} holds. Let $\left(\pvechat, \gammahat \right)$ be the output of Algorithm~\ref{algo} and let $\left( \pvec, \gamma \right)$ be any feasible pair of variables for the empirical fair learning problem (\ref{box:fairerm}). We have that for any set of individuals $X$, with probability at least $1-\delta$ over the labelings $F$,
\begin{align*}
\err  \left(\pvechat; \Pxhat, \Qhat \right) \, \le \, \err \left(\pvec; \Pxhat, \Qhat \right) + 2 \nu
\end{align*}
and that $\pvechat$ satisfies $(3\alpha, 0)$-AIF with respect to the empirical distributions $(\Pxhat, \Qhat)$. In other words, for all $i \in [n]$,
\begin{align*}
\left\vert \Epsilon \left(x_i, \pvechat; \Qhat \right) - \gammahat \right\vert \, \le \, 3\alpha
\end{align*}
\end{theorem}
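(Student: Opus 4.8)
The plan is to read both guarantees off the two $\nu$-approximate equilibrium inequalities of Lemma \ref{thm:approxequilib}, using the explicit form of the Lagrangian \eqref{eq:lagrangian} together with the specific choice $B = (1+2\nu)/\alpha$ made in Algorithm \ref{algo}. Everything takes place on the probability-$(1-\delta)$ event over $F$ on which that lemma holds, so the final statement inherits the same confidence and the same quantifier over $X$.

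For accuracy I would start from the Learner's side of the equilibrium. Let $(\pvec, \gamma)$ be any feasible pair for \eqref{box:fairerm}, so $\rvec(\pvec, \gamma; \Qhat) \le 0$ coordinatewise. Because $\lambhat \in \Lambda \subseteq \R^{2n}_+$, the penalty term satisfies $\lambhat^T \rvec(\pvec, \gamma; \Qhat) \le 0$, whence $\Ls(\pvec, \gamma, \lambhat) \le \err(\pvec; \Pxhat, \Qhat)$. Combining this with the first (Learner) inequality of Lemma \ref{thm:approxequilib} yields
$$\Ls(\pvechat, \gammahat, \lambhat) \le \err(\pvec; \Pxhat, \Qhat) + \nu.$$
For the fairness violation I would exploit the Auditor's side. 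The second inequality of Lemma \ref{thm:approxequilib} gives $\lambhat^T \rvec(\pvechat, \gammahat; \Qhat) \ge \lamb^T \rvec(\pvechat, \gammahat; \Qhat) - \nu$ for every $\lamb \in \Lambda$. Since $\Lambda$ is the intersection of the nonnegative orthant with the ball $\|\lamb\|_1 \le B$, the linear functional $\lamb \mapsto \lamb^T \rvec$ is maximized by placing all mass $B$ on the largest coordinate of $\rvec$ (or by taking $\lamb = 0$ when every coordinate is negative). By the definition of $\rvec$ in \eqref{eq:r}, that largest coordinate equals $\max_i \lvert \Epsilon(x_i, \pvechat; \Qhat) - \gammahat \rvert - 2\alpha$, so
$$\lambhat^T \rvec(\pvechat, \gammahat; \Qhat) \ge B \cdot \max\left(0, \, \max_i \left\lvert \Epsilon(x_i, \pvechat; \Qhat) - \gammahat \right\rvert - 2\alpha\right) - \nu.$$

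The two halves then combine. Writing $\Ls(\pvechat, \gammahat, \lambhat) = \err(\pvechat; \Pxhat, \Qhat) + \lambhat^T \rvec(\pvechat, \gammahat; \Qhat)$ and substituting the penalty lower bound into the accuracy upper bound gives
$$\err(\pvechat; \Pxhat, \Qhat) + B \cdot \max\left(0, \, \max_i \left\lvert \Epsilon(x_i, \pvechat; \Qhat) - \gammahat \right\rvert - 2\alpha\right) \le \err(\pvec; \Pxhat, \Qhat) + 2\nu.$$
Dropping the nonnegative penalty term immediately delivers the accuracy bound $\err(\pvechat) \le \err(\pvec) + 2\nu$. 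For fairness I would instead use that $\err(\pvechat) \ge 0$ and that error rates are probabilities bounded by $1$, so the penalty term is at most $1 + 2\nu$; dividing by $B$ and invoking $B = (1+2\nu)/\alpha$ collapses the bound to $\max(0, \max_i \lvert \Epsilon(x_i, \pvechat; \Qhat) - \gammahat \rvert - 2\alpha) \le \alpha$, i.e. $\max_i \lvert \Epsilon(x_i, \pvechat; \Qhat) - \gammahat \rvert \le 3\alpha$, which is exactly $(3\alpha, 0)$-AIF.

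The main obstacle is not any single step but the constant bookkeeping that ties the penalty weight $B$ to the fairness slack. The argument only closes because the error rate is bounded (furnishing the numerator $1 + 2\nu$) and because $B$ was chosen so that $(1+2\nu)/B = \alpha$, which is precisely what upgrades the $2\alpha$ already built into the relaxed constraint of \eqref{box:fairerm} to the claimed $3\alpha$. The delicate part is tracking that the two independent $\nu$ slacks — one from each player's approximate best response — add to give $2\nu$ in both the accuracy bound and the intermediate inequality, rather than being double-counted or dropped.
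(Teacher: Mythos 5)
Your proposal is correct and follows essentially the same route as the paper: the paper instantiates the Auditor's deviation at the explicit maximizer $\lamb^\star \in \{0, B e_{k^\star}\}$ rather than writing the maximization over $\Lambda$ abstractly, but this is the same step, and the subsequent bookkeeping (feasibility killing the penalty on the Learner's side, $\err(\pvec)-\err(\pvechat)+2\nu \le 1+2\nu$ bounding the penalty, and $B=(1+2\nu)/\alpha$ converting it to an extra $\alpha$ of fairness slack) matches the paper's proof exactly.
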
 
%!TEX root = averagefair.tex
\subsection{Generalization Theorems}\label{subsec:generalization}

\begin{figure}
\centering
\includegraphics[scale = 0.5]{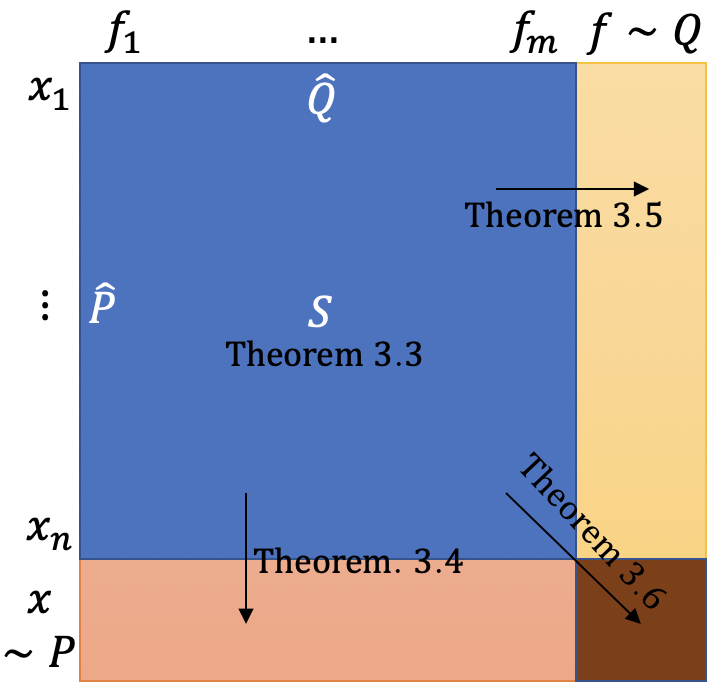}
\caption{Illustration of generalization directions.}
\label{fig:generalization}
\end{figure}

When it comes to out-of-sample performance in our framework, unlike in usual learning settings, there are two distributions we need to reason about: the individual distribution $\Px$ and the problem distribution $\Q$ (see Figure \ref{fig:generalization} for a visual illustration of generalization directions in our framework). We need to argue that $\psihat$ induces a mapping that is accurate with respect to $\Px$ and $\Q$, and is fair for almost every individual $x \sim \Px$, where fairness is defined with respect to the true problem distribution $\Q$. Given these two directions for generalization, we state our generalization guarantees in three steps visualized by arrows in Figure \ref{fig:generalization}. First, in Theorem \ref{thm:generalizationx}, we fix the empirical distribution of the problems $\Qhat$ and show that the output $\psihat$ of Algorithm \ref{algo} is accurate and fair with respect to the underlying individual distribution $\Px$ as long as $n$ is sufficiently large. Second, in Theorem \ref{thm:generalizationf}, we fix the empirical distribution of individuals $\Pxhat$ and consider generalization along the underlying problem generating distribution $\Q$. It will follow from the dynamics of the algorithm, as well as the structure of $\psihat$, that the learned mapping $\psihat$ will remain accurate and fair with respect to $\Q$. We will eventually put these pieces together in Theorem \ref{thm:generalizationxf} and argue that $\psihat$ is accurate and fair with respect to the underlying distributions $\left( \Px, \Q \right)$ simultaniously, given that both $n$ and $m$ are large enough. We will use $\opt$ (see Definition \ref{def:opt}) as a benchmark to evaluate the accuracy of the mapping $\psihat$.
\medskip

\begin{theorem}[Generalization over $\Px$]\label{thm:generalizationx}
Let $0 < \delta < 1$. Let $\left( \psihat, \gammahat \right)$ be the outputs of Algorithm \ref{algo}, and suppose
$$
n \ge \widetilde{O} \left( \frac{m \, d_\Hs + \log \left(1/\nu^2 \delta\right)}{\alpha^2 \beta^2} \right)
$$
where $d_\Hs$ is the VC dimension of $\Hs$. We have that with probability at least $1-5\delta$ over the observed data set $(X,F)$, the mapping $\psihat$ satisfies $\left( 5\alpha, \beta \right)$-AIF with respect to the distributions $\left( \Px, \Qhat \right)$, i.e.,
$$
\underset{x \, \sim \Px}{\Ps} \left( \left\vert \Epsilon \left( x, \psihat; \Qhat \right) - \gammahat \right\vert > 5 \alpha  \right) \le \beta
$$
and that,
$$
\err \left( \psihat ; \Px , \Qhat \right) \le \opt \left( \alpha ; \Px, \Qhat \right) +  O \left( \nu \right) + O \left( \alpha \beta \right)
$$
\end{theorem}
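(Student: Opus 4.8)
The plan is to fix the empirical problem distribution $\Qhat$ and argue that the error-rate function of the learned classifiers generalizes from the sample $X$ to the population $\Px$ in two senses: in its mean (for accuracy) and in its whole upper/lower tail (for fairness). The observation that organizes the proof is that, since $\Qhat$ is supported on $F$ and $\psihat$ restricted to $F$ equals the learned vector $\pvechat$, we have $\Epsilon(x,\psihat;\Qhat)=\Epsilon(x,\pvechat;\Qhat)$ for \emph{every} $x\in\X$. Hence this theorem is purely a statement about generalization over individuals for the (data-dependent) $\pvechat\in\Delta(\Hs)^m$; the mapping structure of $\psihat$ plays no role here (it is needed only for the $\Q$-direction in Theorem~\ref{thm:generalizationf}). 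Conditioning on the draw of $F$, Theorem~\ref{thm:insampleguarantees} already gives, with probability $1-\delta$, that $|\Epsilon(x_i,\pvechat;\Qhat)-\gammahat|\le 3\alpha$ for all sampled $i$ and that $\err(\pvechat;\Pxhat,\Qhat)$ is within $2\nu$ of the empirical optimum; the task is to transport both guarantees to $\Px$.

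The engine is a uniform convergence bound over the error-rate class $\mathcal{E}=\{\,x\mapsto\frac1m\sum_{j=1}^m\Ps_{h\sim p_j}[h(x)\neq f_j(x)] : p_1,\dots,p_m\in\Delta(\Hs)\,\}$, which contains $\Epsilon(\cdot,\pvechat;\Qhat)$ regardless of the algorithm's internal (data-dependent) choices. Each summand ranges over the convex hull of $\G_j=\{x\mapsto\1[h(x)\neq f_j(x)]:h\in\Hs\}$, and $\G_j$ has VC dimension $d_\Hs$ (it is $\Hs$ XOR-ed with the fixed labeling $f_j$). Because $\mathcal{E}$ is the normalized sum over the $m$ problems of these $m$ classes, its statistical complexity scales like $m\,d_\Hs$ — the $m$ from the product over problems and $d_\Hs$ from each. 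I would prove two consequences, each holding with probability $1-\delta$ over $X\sim\Px^{\,n}$ once $n\ge\Otilde\!\big((m\,d_\Hs+\log(1/\delta))/\alpha^2\beta^2\big)$: (a) a \emph{mean} bound $\sup_{g\in\mathcal{E}}|\E_{x\sim\Px}g(x)-\frac1n\sum_i g(x_i)|\le O(\alpha\beta)$; and (b) a \emph{scale-sensitive} bound at margin $2\alpha$ stating that for every $g\in\mathcal{E}$ and every threshold $\tau$, $\Ps_{x\sim\Px}[g(x)>\tau+2\alpha]\le\frac1n\sum_i\1[g(x_i)>\tau]+\beta/2$, together with its lower-tail mirror image.

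The scale-sensitive statement (b) is the main obstacle. The subtlety is that each $\psihat_{f_j}$ is itself an \emph{average of $T$} hypotheses, so naively expanding the error into $mT$ indicator thresholds would cost a spurious factor of $T$ in the sample complexity (which does not appear in the theorem). The fix is to keep each $\psihat_{f_j}$ as a single element of $\Delta(\Hs)$ and control the convex combination through the scale sensitivity itself: one bounds the scale-sensitive complexity (fat-shattering dimension / $L_2$ covering number) of $\mathcal{E}$ at scale $\Theta(\alpha)$ by $\Otilde(m\,d_\Hs/\alpha^2)$, using that the fat-shattering dimension of the convex hull of a VC class, at a fixed scale, is governed by $d_\Hs$ and the scale (not by the number of mixture components), and that $\log$-covering numbers of a normalized sum of $m$ classes add. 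Feeding this into a standard margin-based (scale-sensitive) uniform-convergence theorem yields (b) with the advertised dependence $n\gtrsim m\,d_\Hs/\alpha^2\beta^2$. It is exactly the $2\alpha$ margin here that upgrades the in-sample radius $3\alpha$ to the out-of-sample radius $5\alpha$.

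Given (a) and (b) the two conclusions follow by bookkeeping. For fairness, apply (b) to $g=\Epsilon(\cdot,\pvechat;\Qhat)\in\mathcal{E}$ with $\tau=\gammahat+3\alpha$: the in-sample guarantee makes the empirical count $\frac1n\sum_i\1[g(x_i)>\gammahat+3\alpha]$ equal to zero, so $\Ps_{x\sim\Px}[\Epsilon(x,\psihat;\Qhat)>\gammahat+5\alpha]\le\beta/2$; the mirror bound controls the lower tail by $\beta/2$, and a union bound gives $(5\alpha,\beta)$-AIF with respect to $(\Px,\Qhat)$. For accuracy, chain $\err(\psihat;\Px,\Qhat)=\E_{x\sim\Px}\Epsilon(x,\pvechat;\Qhat)\le\err(\pvechat;\Pxhat,\Qhat)+O(\alpha\beta)$ by (a); then $\err(\pvechat;\Pxhat,\Qhat)\le\err(\psi^\star\vert_F;\Pxhat,\Qhat)+2\nu$ by Theorem~\ref{thm:insampleguarantees}, where $\psi^\star=\psi^\star(\alpha;\Px,\Qhat)$ and its restriction to $F$ is feasible for the empirical program (\ref{box:fairerm}) precisely because $\psi^\star$ meets the population constraint $|\Epsilon(x,\psi^\star;\Qhat)-\gamma^\star|\le\alpha\le2\alpha$ at every $x\in\X\supseteq X$ (this is the reason the empirical program was relaxed from $\alpha$ to $2\alpha$). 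Finally, since $\psi^\star$ depends on $F$ but not on the independently drawn individuals $X$, a single Hoeffding bound gives $\err(\psi^\star\vert_F;\Pxhat,\Qhat)\le\err(\psi^\star;\Px,\Qhat)+O(\alpha\beta)=\opt(\alpha;\Px,\Qhat)+O(\alpha\beta)$. Summing these inequalities yields $\err(\psihat;\Px,\Qhat)\le\opt(\alpha;\Px,\Qhat)+O(\nu)+O(\alpha\beta)$. A union bound over the (at most five) failure events — the in-sample event over $F$, the mean and two scale-sensitive uniform-convergence events over $X$, and the Hoeffding event — gives the stated $1-5\delta$ confidence.
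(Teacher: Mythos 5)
Your proposal is correct and follows the same overall skeleton as the paper's proof: reduce the statement to one about $\pvechat=\psihat\vert_F$ since $\Epsilon(x,\psihat;\Qhat)=\Epsilon(x,\pvechat;\Qhat)$ for every $x$, transport the in-sample guarantees of Theorem \ref{thm:insampleguarantees} to $\Px$ via a uniform convergence over individuals that costs an extra $2\alpha$ of radius, and prove the accuracy bound by exactly the three-step chain you describe (mean uniform convergence for $\pvechat$, feasibility of $\psi^\star\vert_F$ in the $2\alpha$-relaxed empirical program, and a single Hoeffding bound for the $X$-independent $\psi^\star\vert_F$). The one place where you take a genuinely different route is the key uniform-convergence step for fairness. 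The paper handles the convex-hull issue by an \emph{explicit Maurey-style sparsification}: it draws $r=\Otilde(1/\alpha^2)$ samples from each $\phat_j$ to form an $r$-sparse $\pvechatr$, proves a uniform convergence over all $r$-sparse tuples and all $\gamma$ in the grid $\{0,1/T,\dots,1\}$ via the two-sample trick and Sauer's lemma (the induced class has size $O((2n)^{d_\Hs r m})\cdot(T+1)$), and then manages the two $\alpha$-sized sparsification errors with a triangle inequality over three events. You instead invoke an off-the-shelf scale-sensitive (fat-shattering/covering-number) uniform convergence for the class $\mathcal{E}$ at scale $\Theta(\alpha)$, uniform over thresholds $\tau$. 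These are two standard routes to the same $1/\alpha^2$ blow-up --- indeed the covering-number bound for convex hulls of VC classes that you cite is itself proved by the very sparsification the paper performs --- and both yield $n\ge\Otilde((m\,d_\Hs+\log(1/\nu^2\delta))/\alpha^2\beta^2)$. What each buys: the paper's argument is self-contained (only Chernoff, Sauer, and the two-sample trick) and makes the data-dependence bookkeeping fully explicit, at the cost of a somewhat intricate event decomposition; your argument is more modular and the threshold-uniform margin bound absorbs the discretization of $\gammahat$ automatically, but it leans on scale-sensitive learning-theory machinery that would itself need to be stated and verified (in particular the additivity of log-covering numbers across the $m$ problems and the convex-hull covering bound), so it is not shorter once those lemmas are unpacked. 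Your diagnosis that a naive union bound over the $T$ mixture components would lose a factor of $T$, and that the fix is to control complexity at scale $\alpha$ rather than by support size, is exactly the point the paper is making with its sparsification.
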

\medskip

The proof of this theorem will use standard VC-type generalization techniques where a Chernoff-Hoeffding bound is followed by a union bound (accompanied with the two-sample trick and Sauer's Lemma) to guarantee a uniform convergence of the empirical estimates to their true expectation. However, compared to the standard VC-based sample complexity bounds in learning theory, there is an extra factor of $m$ because there are $m$ hypotheses to be learned and that the $\alpha^2$ factor appears in the denominator since in our setting a uniform convergence for all -- pure -- classifiers will \emph{not} simply lead to a uniform convergence for all -- randomized -- classifiers without blowing up the sample complexity (specifically when proving generalization for fairness). We will therefore directly prove uniform convergence for randomized classifiers and that our argument will go through by sparsifying the distributions $\pvechat = \psihat \vert_F$ (taking samples of size $\Otilde \left( 1/ \alpha^2 \right)$ from $\pvechat$) coupled with a uniform convergence for $\Otilde \left( 1/ \alpha^2 \right)$-sparse classifiers (randomized classifiers with support size $\le \Otilde \left(1/\alpha^{2} \right)$) and this is how $\alpha^2$ shows up in the sample complexity bound.
\medskip

\begin{theorem}[Generalization over $\Q$]\label{thm:generalizationf}
Let $0 < \delta < 1$. Let $\left( \psihat, \gammahat \right)$ be the outputs of Algorithm \ref{algo} and suppose
$$
m \ge \Otilde \left( \frac{\log \left( n \right) \log \left( n/\delta \right)}{\nu^4 \alpha^4} \right)
$$
We have that for any set of observed individuals $X$, with probability at least $1-6\delta$ over the observed problems $F$, the learned mapping $\psihat$ satisfies $\left( 4\alpha, 0 \right) $-AIF with respect to the distributions $\left( \Pxhat, \Q \right)$, i.e.,
$$
\underset{x \, \sim \Pxhat}{\Ps} \left( \left\vert \Epsilon \left( x, \psihat; \Q \right) - \gammahat \right\vert > 4\alpha  \right) = 0
$$
and that,
$$
\err \left( \psihat ; \Pxhat , \Q \right) \le \opt \left( \alpha ; \Pxhat, \Q \right) + O \left( \nu \right)
$$
\end{theorem}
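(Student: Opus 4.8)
The plan is to exploit the explicit structure of the learned mapping $\psihat$ --- namely that $\psihat_f = \frac{1}{T}\sum_{t=1}^T h_{f,\wvec_t}$, where $h_{f,\wvec_t}$ is the $CSC(\Hs)$ solution for problem $f$ under the \emph{fixed} weight vector $\wvec_t$ --- to reduce generalization over $\Q$ to a collection of elementary per-round concentration statements. By linearity, for any fixed training individual $x_i$ I would write $\Epsilon(x_i,\psihat;\Q) = \frac{1}{T}\sum_{t=1}^T \rho_t(x_i)$ with $\rho_t(x_i) = \E_{f\sim\Q}[\1[h_{f,\wvec_t}(x_i)\neq f(x_i)]]$, and similarly $\err(\psihat;\Pxhat,\Q) = \frac{1}{T}\sum_t \frac{1}{n}\sum_i \rho_t(x_i)$. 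The whole argument then reduces to estimating these per-round true quantities from the batches $F_t$.

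The crux --- and the step I expect to be the main obstacle --- is that $\psihat$ depends on the entire sample $F$, so $\Qhat$ is \emph{not} a fresh sample for estimating $\Epsilon(x_i,\psihat;\Q)$, and a naive concentration argument fails because of this adaptivity. This is exactly what the fresh-batch design of Algorithm \ref{algo} is built to circumvent: since $\boldsymbol{\theta}_t$ (hence $\wvec_t$) is a function only of the earlier batches $F_1,\dots,F_{t-1}$, the weight vector $\wvec_t$ is \emph{independent} of the batch $F_t$. Conditioning on $\wvec_t$ (equivalently on $F_{<t}$), the summands $\{\1[h_{f,\wvec_t}(x_i)\neq f(x_i)]\}_{f\in F_t}$ become i.i.d.\ draws of a \emph{single fixed} $\{0,1\}$-valued function of $f\sim\Q$ with mean $\rho_t(x_i)$. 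A Chernoff--Hoeffding bound then applies per round, and because the function is fixed once $\wvec_t$ is revealed I only union-bound over the $n$ individuals and $T$ rounds --- not over the hypothesis class or over label patterns of $f$ on $X$. This is precisely why the required $m$ carries only a $\log n$ factor and no $d_\Hs$, in sharp contrast to the VC/Sauer argument underlying Theorem \ref{thm:generalizationx}. Concretely I would obtain, with probability $\ge 1-\delta$ over $F$, that $|\rho_t(x_i) - \Epsilon(x_i,\hvec_t\vert_{F_t};\Qhat_t)| \le \Otilde(\sqrt{\log(nT/\delta)/m_0})$ for all $i,t$, and hence, averaging over $t$, that $|\Epsilon(x_i,\psihat;\Q) - \frac{1}{T}\sum_t \Epsilon(x_i,\hvec_t\vert_{F_t};\Qhat_t)| \le \Otilde(\sqrt{\log(nT/\delta)/m_0})$, with an analogous bound for $\err$.

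It then remains to tie the batch-averaged empirical quantities back to $\gammahat$ and to $\opt$ via the game dynamics, mirroring the in-sample analysis but using the batched distributions $\Qhat_t$ --- which are exactly what the Auditor's updates see. For fairness, the Auditor's regret bound (Lemma \ref{lemma:regretdual}), together with the approximate-equilibrium characterization (Lemma \ref{thm:approxequilib}), controls the average fairness violations: instantiating the dual comparator at the extreme point placing the full budget $B$ on the $i$-th $+$ or $-$ constraint forces $|\frac{1}{T}\sum_t \Epsilon(x_i,\hvec_t\vert_{F_t};\Qhat_t) - \gammahat|$ to be within $O(\alpha) + O(\nu/B)$, and with $B = (1+2\nu)/\alpha$ the slack $O(\nu/B)$ is lower order. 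Combining with the concentration bound of the previous paragraph and absorbing the $\Otilde(\sqrt{\log(nT/\delta)/m_0})$ term under the stated lower bound on $m$ yields $|\Epsilon(x_i,\psihat;\Q)-\gammahat|\le 4\alpha$ for \emph{every} $i\in[n]$ simultaneously on the good event. Since $\Pxhat$ is supported exactly on these $n$ individuals, the violation probability is then identically $0$, which is why no $\beta$ appears here.

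For the accuracy claim I would run the standard Lagrangian/best-response argument against the comparator $\psi^\star = \psi^\star(\alpha;\Pxhat,\Q)$. The key simplification is that $\psi^\star$ is defined through $\Pxhat$ and $\Q$ and is therefore \emph{independent} of $F$, so the same fresh-batch concentration shows that its batch-averaged empirical error approximates $\opt(\alpha;\Pxhat,\Q)$ and that its batched fairness violations stay nonpositive (using the built-in $2\alpha$-versus-$\alpha$ constraint slack). Because the Learner best-responds each round, the Lagrangian inequality gives that the batch-averaged empirical error of $\psihat$ is at most $\opt(\alpha;\Pxhat,\Q) + \lambhat^T \rtilde(\psi^\star) + O(\nu)$ up to concentration error; the dual penalty is $\le 0$ since $\|\lambhat\|_1\le B$ multiplies nonpositive batched violations, and transferring back to the true distribution via the concentration bound gives $\err(\psihat;\Pxhat,\Q)\le\opt(\alpha;\Pxhat,\Q)+O(\nu)$, with all residual concentration error absorbed into $O(\nu)$ under the assumed lower bound on $m$. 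The only genuinely delicate points are the bookkeeping of the several $1-\delta$ events into the final $1-6\delta$ and verifying that the chosen $T,\eta,B,m_0$ render every residual term lower order; the conceptual content is entirely in the fresh-batch independence of the second paragraph.
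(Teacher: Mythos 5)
Your proposal is correct and follows essentially the same route as the paper: the decomposition $\Epsilon(x,\psihat;\Q)=\frac{1}{T}\sum_{t}\Epsilon(x,\psihat_t;\Q)$, the observation that $\wvec_t$ is independent of the fresh batch $F_t$ so that per-round Chernoff--Hoeffding bounds (union-bounded only over $i\in[n]$ and $t\in[T]$, with no VC term) suffice, and the comparator $\psi^\star(\alpha;\Pxhat,\Q)$ being $F$-independent and hence feasible in (\ref{box:fairerm}) with high probability, are exactly the ingredients of the paper's argument. The only cosmetic difference is that the paper inserts $\Qhat$ as an intermediary (bounding $\Q\to\Qhat_t\to\Qhat$ and then invoking the already-proved Theorem \ref{thm:insampleguarantees} verbatim), whereas you propose to rerun the equilibrium argument directly on the batched quantities --- which works but still requires the same batched-versus-full comparison inside the Auditor's regret bound (it is already there in Lemma \ref{lemma:regretdual}), so nothing is saved.
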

\medskip

This theorem will follow directly from Chernoff-type concentration inequalities where the fact that in each round the Auditor in our algorithm is using only a fresh batch of randomly selected $m_0$ problems to estimate the fairness violations will help us to prove concentration without appealing to a uniform convergence. The sample complexity for $m$ stated in this theorem is equivalent to that of Assumption \ref{ass:m} because we needed almost the same type of concentration for controlling the regret of the Auditor in the previous subsection. Having proved generalization separately for $\Px$ and $\Q$, we are now ready to state the final theorem of this section which provides generalization guarantees simultaneously over both distributions $\Px$ and $\Q$.

\medskip

\begin{theorem}[Simultaneous Generalization over $\Px$ and $\Q$]\label{thm:generalizationxf}
Let $0 < \delta < 1$. Let $\left( \psihat, \gammahat \right)$ be the outputs of Algorithm \ref{algo} and suppose
$$
n \ge \widetilde{O} \left( \frac{m \, d_\Hs + \log \left(1/\nu^2 \delta\right)}{\alpha^2 \beta^2} \right) \quad , \quad m \ge \Otilde \left( \frac{\log \left( n \right) \log \left( n/\delta \right)}{\nu^4 \alpha^4} \right)
$$
where $d_\Hs$ is the VC dimension of $\Hs$. We have that with probability at least $1-12 \delta$ over the observed data set $(X,F)$, the learned mapping $\psihat $ satisfies $\left( 6\alpha, 2 \beta \right)$-AIF with respect to the distributions $\left( \Px, \Q \right)$, i.e.,
$$
\underset{x \, \sim \Px}{\Ps} \left( \left\vert \Epsilon \left( x, \psihat ; \Q \right) - \gammahat \right\vert > 6 \alpha  \right) \le 2\beta
$$
and that,
$$
\err \left( \psihat ; \Px , \Q \right) \le \opt \left( \alpha ; \Px, \Q \right) + O \left( \nu \right) + O \left( \alpha \beta \right)
$$
\end{theorem}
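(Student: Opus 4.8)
The plan is to assemble Theorem~\ref{thm:generalizationxf} from the two single-direction results, Theorem~\ref{thm:generalizationx} (generalization over $\Px$ at fixed $\Qhat$) and Theorem~\ref{thm:generalizationf} (generalization over $\Q$ at fixed $\Pxhat$), both of which concern the \emph{same} trained mapping $\psihat$ and the \emph{same} offset $\gammahat$. I would first condition on the favorable events of the two theorems, which hold with probability at least $1-5\delta$ and $1-6\delta$ respectively over $(X,F)$; together with one further concentration event introduced below, a union bound leaves probability at least $1-12\delta$, under which I establish both the fairness and the accuracy claims.

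The naive route --- writing $\Epsilon(x,\psihat;\Q)-\gammahat = [\Epsilon(x,\psihat;\Q)-\Epsilon(x,\psihat;\Qhat)] + [\Epsilon(x,\psihat;\Qhat)-\gammahat]$, bounding the second bracket by Theorem~\ref{thm:generalizationx}, and hoping the first (the $\Q$-versus-$\Qhat$ discrepancy) is negligible --- does \emph{not} work: because the algorithm reuses the $m$ training problems across rounds, the discrepancy $\Epsilon(x,\psihat;\Q)-\Epsilon(x,\psihat;\Qhat)$ is only $O(\alpha)$ even on the training individuals (combining the in-sample bound $3\alpha$ of Theorem~\ref{thm:insampleguarantees} with the $4\alpha$ of Theorem~\ref{thm:generalizationf} through $\gammahat$), so it cannot be driven below $\alpha$ for new individuals. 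Instead I would prove the fairness claim by generalizing the true-$\Q$ individual error rate \emph{directly} over $\Px$. Theorem~\ref{thm:generalizationf} supplies the in-sample anchor $|\Epsilon(x_i,\psihat;\Q)-\gammahat|\le 4\alpha$ for every training individual $x_i$; I would then lift this from $\Pxhat$ to $\Px$ using the uniform-convergence machinery of Theorem~\ref{thm:generalizationx} (two-sample symmetrization, Sauer's lemma, and the sparsification of the randomized classifiers at scale $\Otilde(1/\alpha^2)$). This lift is legitimate for the $\Q$-error because $\Epsilon(x,\psihat;\Q)$ is a $\Q$-mixture of the very same error-rate functions $x\mapsto \Ps_{h\sim\psihat_f}[h(x)\neq f(x)]$ that the Theorem~\ref{thm:generalizationx} argument already covers, and because $\psihat$ touches any problem $f$ only through its labels $f\,|_X$ on the training individuals, keeping the governing function class inside a bounded-VC family. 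The lift costs an additional $2\alpha$ and a failure set of $\Px$-measure $\beta$, turning $4\alpha$ into $6\alpha$; since this lift internally relies on the sparsified $\Qhat$-estimates that Theorem~\ref{thm:generalizationx} controls, both individual-direction events are genuinely needed, which is what produces $2\beta$ rather than $\beta$, and the new uniform-convergence event is the extra $\delta$.

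For accuracy I would route through $\Pxhat$ rather than $\Qhat$, for the same reason. Writing $\err(\psihat;\Px,\Q)\le \err(\psihat;\Pxhat,\Q) + |\err(\psihat;\Px,\Q)-\err(\psihat;\Pxhat,\Q)|$, the second term is a single-function concentration of $x\mapsto \Epsilon(x,\psihat;\Q)$ from $\Pxhat$ to $\Px$ (again the Theorem~\ref{thm:generalizationx} machinery, contributing $O(\nu)+O(\alpha\beta)$), while the first is bounded by $\opt(\alpha;\Pxhat,\Q)+O(\nu)$ via Theorem~\ref{thm:generalizationf}. It then remains to compare the empirical and population benchmarks, $\opt(\alpha;\Pxhat,\Q)\le \opt(\alpha;\Px,\Q)+O(\alpha\beta)$: here I would take the population optimum $\psi^\star(\alpha;\Px,\Q)$, observe that its fairness constraints transfer from the support of $\Px$ to the sampled individuals $X$ up to the slack $\alpha\to 2\alpha$ built into problem~(\ref{box:fairerm}) (so that $\psi^\star$ restricted to $F$ is feasible for the empirical program with high probability), and note that its error changes by at most $O(\alpha\beta)$ under the same individual-direction generalization. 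Collecting the three displays gives $\err(\psihat;\Px,\Q)\le\opt(\alpha;\Px,\Q)+O(\nu)+O(\alpha\beta)$.

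The main obstacle is the fairness lift just described: since the $\Q$-versus-$\Qhat$ discrepancy is genuinely $\Theta(\alpha)$ on new individuals (a consequence of the algorithm's adaptive reuse of the $m$ training problems), one cannot reduce Theorem~\ref{thm:generalizationxf} to a mere triangle inequality over the black-box statements of Theorems~\ref{thm:generalizationx} and~\ref{thm:generalizationf}. The real work is re-running the $\Px$-direction uniform convergence for the \emph{true}-$\Q$ error rate and verifying that the sparsification and VC arguments of Theorem~\ref{thm:generalizationx} apply to a mixture over the continuous distribution $\Q$ --- which hinges on the structural fact that $\psihat$ depends on each new problem only through $f\,|_X$ --- while keeping the $\alpha$ and $\beta$ budgets from degrading beyond $6\alpha$ and $2\beta$. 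Everything else is triangle inequalities, the feasibility-of-$\psi^\star$ bookkeeping, and union bounds.
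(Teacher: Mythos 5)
Your proposal goes wrong at the very first fork: you reject the decomposition $\Epsilon(x,\psihat;\Q)-\gammahat = [\Epsilon(x,\psihat;\Q)-\Epsilon(x,\psihat;\Qhat)]+[\Epsilon(x,\psihat;\Qhat)-\gammahat]$ on the grounds that the first bracket is ``genuinely $\Theta(\alpha)$'' on new individuals, but that claim is false, and the decomposition you reject is exactly the paper's proof. Your evidence is indirect --- subtracting the in-sample $3\alpha$ bound from the $4\alpha$ of Theorem \ref{thm:generalizationf} through $\gammahat$ --- which yields only an \emph{upper} bound of $O(\alpha)$ on the discrepancy, not a lower bound. In fact the fresh-batch structure of Algorithm \ref{algo} (the weights $\wvec_t$ depend only on $F_1,\dots,F_{t-1}$, so $F_t$ is independent of $\psihat_t$) gives, for \emph{any} fixed $x\in\X$ and not just training individuals, $\left\vert\Epsilon(x,\psihat;\Q)-\Epsilon(x,\psihat;\Qhat)\right\vert\le 2\sqrt{\log(4T/\delta)/(2m_0)}\le\nu\alpha$ with probability $1-\delta$ over $F$, by routing through the round-$t$ estimates $\Epsilon(x,\psihat_t;\Qhat_t)$ and applying Chernoff--Hoeffding twice per round. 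The only genuine subtlety --- that this is a ``for each $x$, w.h.p.\ over $F$'' statement whereas the theorem needs ``w.h.p.\ over $F$, for all but a $\beta$-fraction of $x\sim\Px$'', with $\Px$ possibly of infinite support --- is handled by Lemma \ref{lemma:concentration2}: introduce an auxiliary fresh sample $X'\sim\Px^n$, observe the discrepancy is uniformly at most $\nu\alpha$ on $X'$ w.h.p., and exchange the order of integration. That lemma supplies the extra $\alpha$ (taking $5\alpha$ to $6\alpha$) and the extra $\beta$; no uniform convergence over $\Px$ for the true-$\Q$ error is ever needed.

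The route you propose instead --- re-running the two-sample/Sauer/sparsification machinery of Theorem \ref{thm:generalizationx} directly on $x\mapsto\Epsilon(x,\psihat;\Q)$ --- is precisely the step you defer as ``the real work'', and it does not go through as sketched. Lemma \ref{lemma:s1} controls the class $x\mapsto\Epsilon(x,\pvecr;\Qhat)$, whose restriction to a doubled sample of $2n$ points is determined by $m$ sparse tuples of hypotheses and counted via Sauer's lemma; replacing $\Qhat$ by the continuous mixture $\Q$ makes the relevant function a $\Q$-average over a family of classifiers $\psihat_f$ indexed by the up to $2^n$ labelings $f\vert_X$, so the induced class on the doubled sample is no longer bounded by $O\left( (2n)^{d_\Hs r m}\right)$ and the union bound breaks. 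The same unverified step reappears in your accuracy argument, where $\left\vert\err(\psihat;\Px,\Q)-\err(\psihat;\Pxhat,\Q)\right\vert$ must be controlled even though $\psihat$ depends on $X$, and again in your comparison of $\opt(\alpha;\Pxhat,\Q)$ with $\opt(\alpha;\Px,\Q)$ --- the paper never compares the two benchmarks directly, but instead shows $\psi^\star(\alpha;\Px,\Q)\vert_F$ is feasible for the empirical program and tracks its error through $(\Px,\Qhat)$. In short, the proposal rejects the correct approach for a mistaken reason and leaves the central claim of its replacement unproven.
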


To prove this theorem we basically start with the guarantees we have for the empirical distributions $ (\Pxhat, \Qhat)$ and lift them into their corresponding guarantees for $( \Px, \Qhat)$ by Theorem \ref{thm:generalizationx}. We will then have to take another ``lifting'' step from $( \Px, \Qhat )$ to $ (\Px, \Q)$ which is not quite similar to what we have shown in Theorem \ref{thm:generalizationf} and will be proved as a separate lemma in the Appendix. Note that the bounds on $n$ and $m$ in Theorem \ref{thm:generalizationxf} are mutually dependent: $n$ must be \emph{linear} in $m$, but $m$ need only be \emph{logarithmic} in $n$, and so both bounds can be simultaneously satisfied with sample complexity that is only polynomial in the parameters of the problem.

%%%%%%%%%%%%%%%%%%%%%%%%%%%%%%%%%%%%%%%%%%%%%%%%%%%%%%%
\section{Experimental Evaluation}\label{sec:experiments}
%!TEX root = averagefair.tex
\begin{figure*}
\centering
	\begin{subfigure}[t]{0.45\textwidth}
	\centering
	\includegraphics[scale = 0.47]{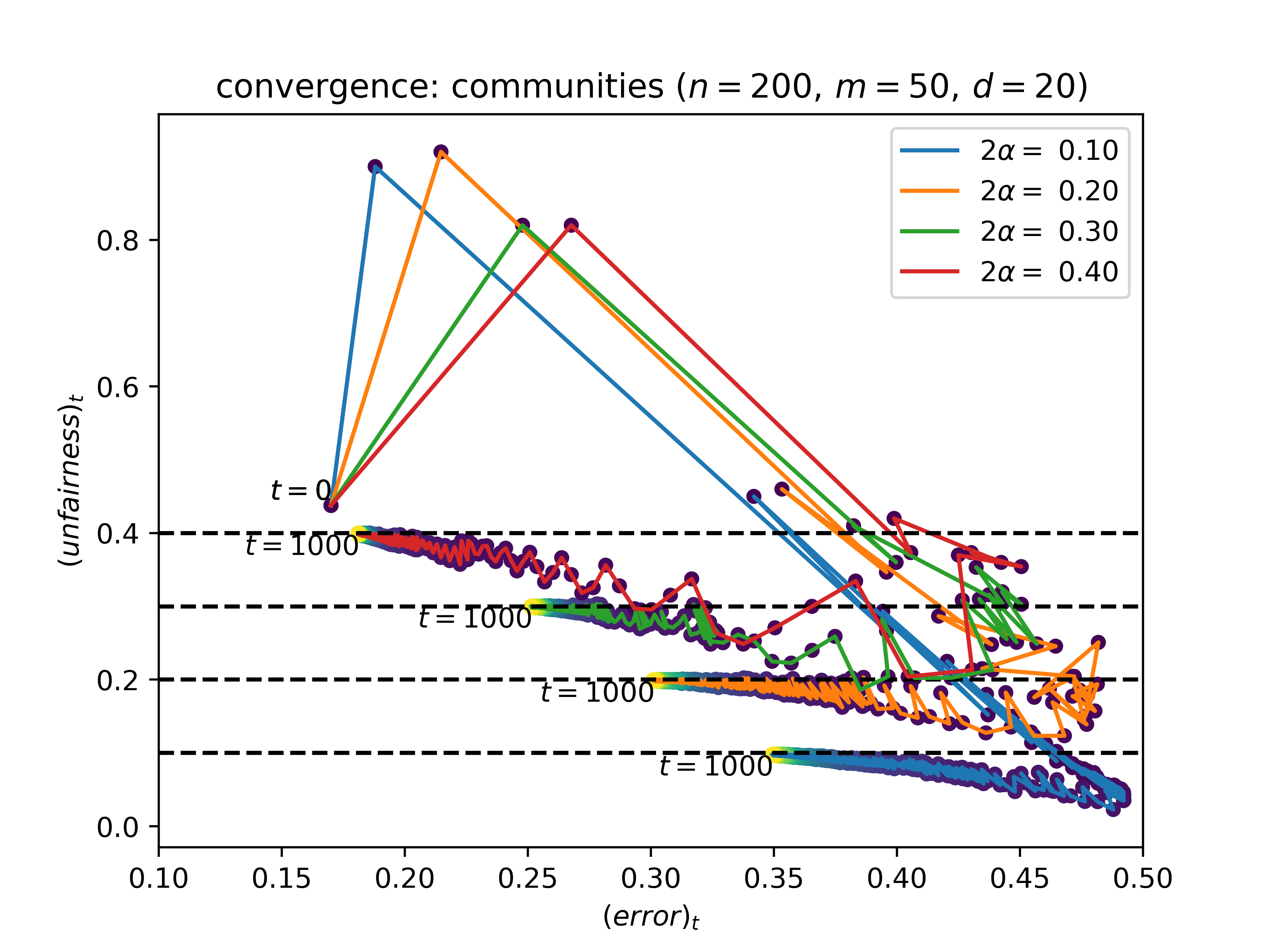}
	\caption{convergence plot: communities data set}
	\end{subfigure}
	\hspace{1cm}
	\begin{subfigure}[t]{0.45\textwidth}
	\centering
	\includegraphics[scale=0.47]{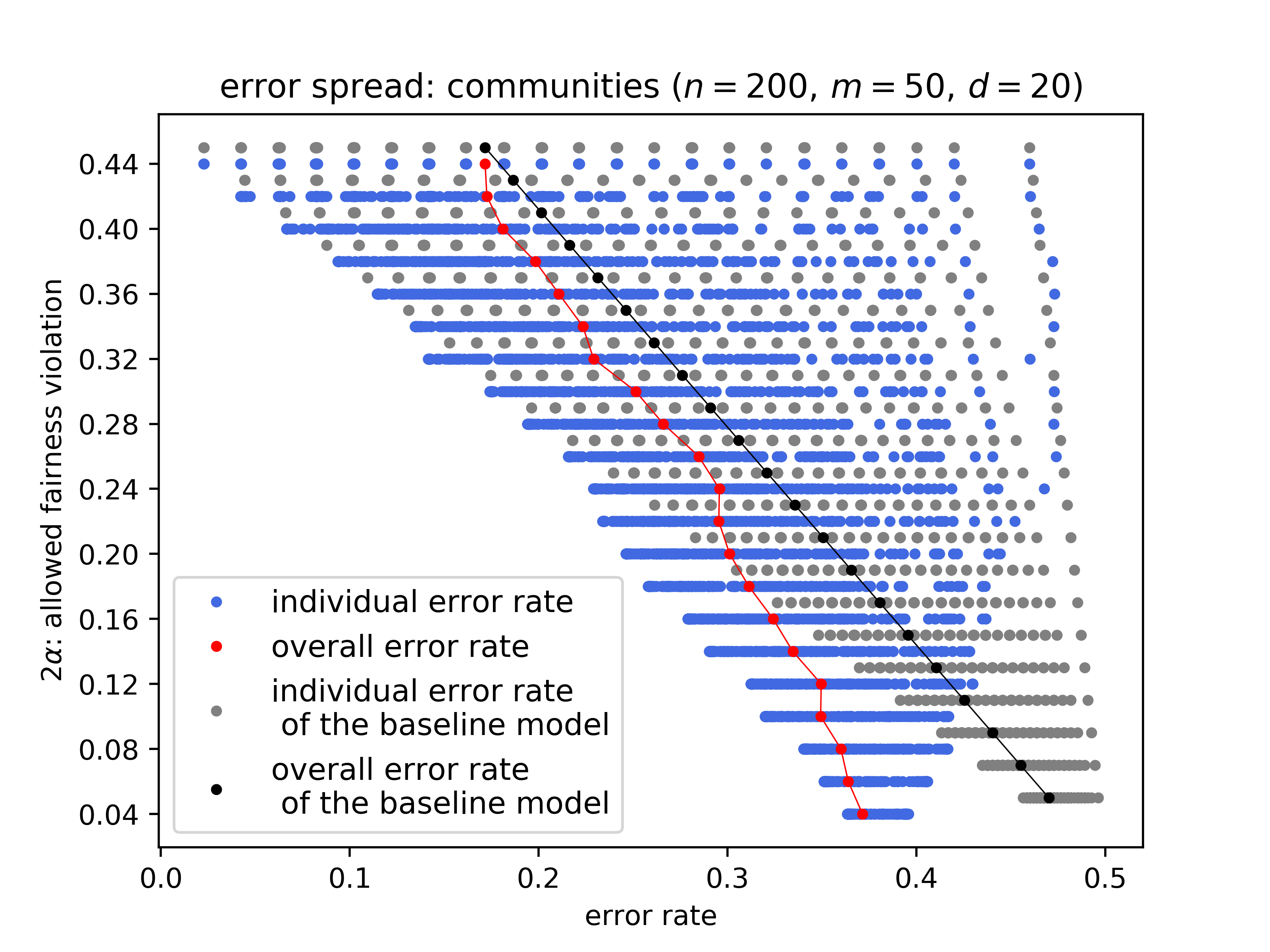}
	\caption{error spread plot: communities data set}
	\end{subfigure}
\caption{
(a) Error-unfairness trajectory plots illustrating the convergence of algorithm \textbf{AIF-Learn}.
(b) Error-unfairness tradeoffs and individual errors for \textbf{AIF-Learn} vs. simple mixtures
	of the error-optimal model and random classification. Gray dots are shifted upwards slightly
	to avoid occlusions.
}
\label{fig}
\end{figure*}

We have implemented
the \textbf{AIF-Learn} algorithm and conclude with a brief experimental demonstration of
its practical efficacy
using the Communities and
Crime dataset\footnote{Described in detail and available for download
at \url{http://archive.ics.uci.edu/ml/datasets/communities+and+crime}}, which contains
U.S. census records with demographic information at the neighborhood level. 
To obtain a challenging instance of our multi-problem framework,
we treated each of the first $n=200$ neighborhoods as the ``individuals'' in our sample, and binarized
versions of the first $m=50$ variables as distinct prediction problems. Another $d=20$ of the 
variables were used as features for learning.
For the base learning oracle assumed by \textbf{AIF-Learn}, we used a linear threshold
learning heuristic that has worked well in other oracle-efficient reductions (\cite{KNRW18}).

%We note that all results are reported in-sample, as our primary interest here is not in generalization (which we addressed in the theory), but in the effectiveness of our algorithm for the constrained optimization problem it is solving on the data, and how it compares to a simple alternative.

Despite the absence of worst-case guarantees for the linear threshold heuristic, 
\textbf{AIF-Learn} seems to empirically enjoy the strong convergence properties suggested by
the theory. In Figure~\ref{fig}(a) we show trajectory plots of the learned model's error ($x$ axis) versus
its fairness violation (variation in cross-problem individual error rates, $y$ axis) over
1000 iterations of the algorithm for varying values of the allowed fairness violation $2\alpha$
(dashed lines). In each case we see the trajectory eventually converges to a point
which saturates the fairness constraint with the optimal error.

In Figure~\ref{fig}(b) we provide a more detailed view of the behavior and performance of \textbf{AIF-Learn}.
The $x$ axis measures error rates, while the $y$ axis measures the allowed fairness violation. For each value of the
allowed fairness violation $2\alpha$ (which is the allowed gap between the smallest and largest individual errors
on input $\alpha$), there is a horizontal row of 200 blue dots showing the error rates for each
individual, and a single red dot representing the overall average of those individual error rates.
As expected, for large $\alpha$ (weak or no fairness constraint), the overall error rate is lowest, but the
spread of individual error rates (unfairness) is greatest. As $\alpha$ is decreased, the spread of individual
error rates is greatly narrowed, at a cost of greater overall error.

A trivial way of achieving zero variability in individual error rates is to make all predictions
randomly. So as a baseline comparison for \textbf{AIF-Learn}, the gray dots in Figure~\ref{fig}(b) show the individual error
rates achieved by different mixtures of the unconstrained error-optimal model with random classifications,
with a black dot representing the overall average of these rates.
When the weight on random classification is low (weak or no fairness, top row of gray dots), the overall error is lowest and
the individual variation (unfairness) is highest. As we increase the weight on random classification,
variation or unfairness decreases and the overall error gets worse. It is clear from the figure that
\textbf{AIF-Learn} is considerably outperforming this baseline, both in terms of the average errors (red vs. black lines)
and the individual errors (blue vs. gray dots).

\begin{figure}
\centering
\includegraphics[scale=0.6]{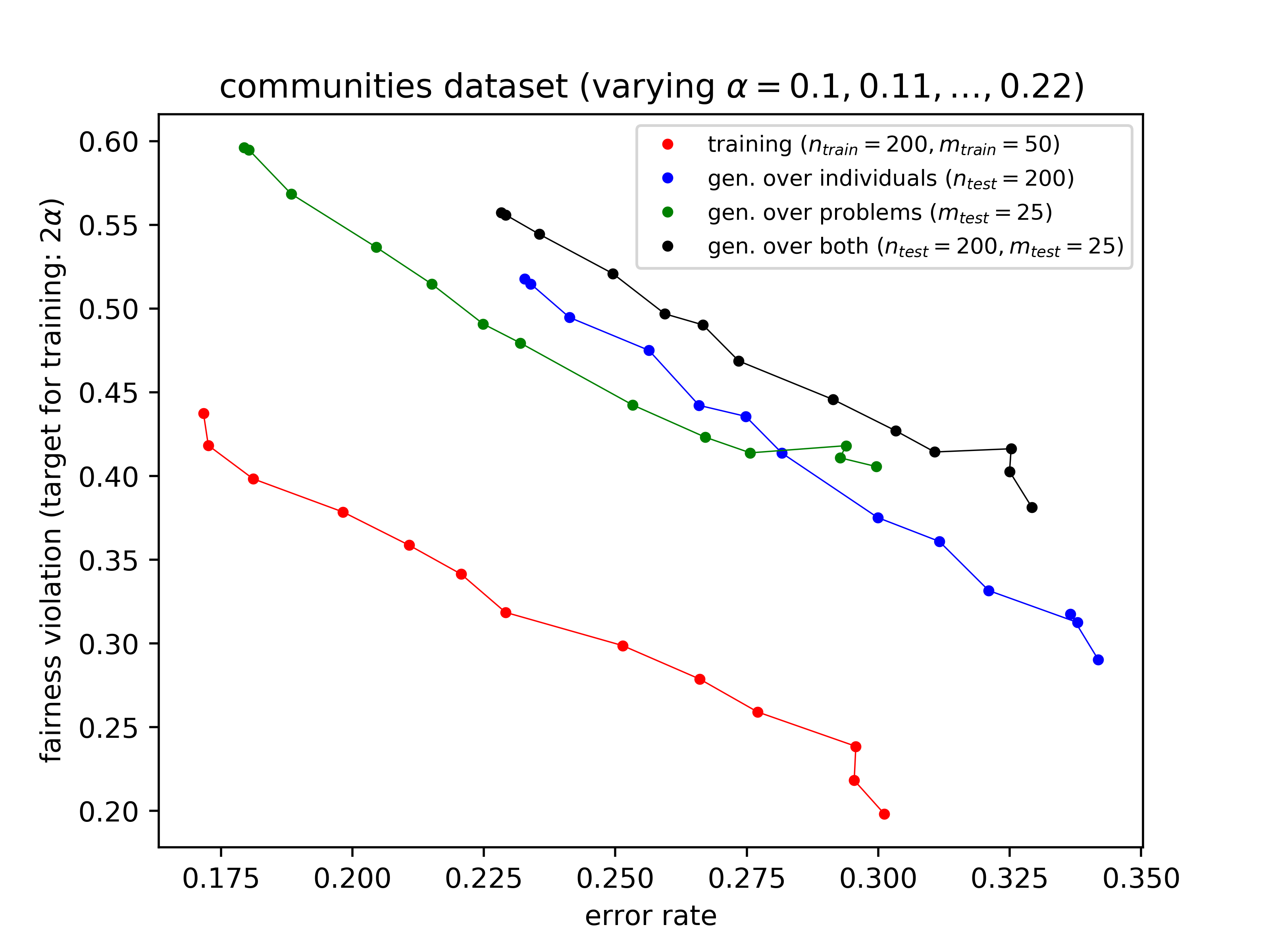}
\caption{Pareto frontier of error and fairness violation rates on training and test data sets.}
\label{fig:gen}
\end{figure}

Finally, we present out-of-sample performance of \textbf{AIF-Learn}, for which we provided theoretical guarantees in Section~\ref{subsec:generalization}, in Figure \ref{fig:gen}. To be consistent with in-sample results reported in Figure \ref{fig}(b), for each value of $\alpha$, we trained a mapping on exactly the same subset of the Communities and Crime data set ($n = 200$ individuals, $m = 50$ problems) that we used before. Thus the red curve labelled ``training'' in Figure \ref{fig:gen} is the same as the red curve appearing in Figure \ref{fig}(b). We used a completely fresh holdout consisting of $n = 200$ individuals and $m = 25$ problems (binarized features from the dataset that weren't previously used) to evaluate our generalization performance over both individuals and problems, in terms of both accuracy and fairness violation. Similar to the presentation of generalization theorems in Section \ref{subsec:generalization}, we demonstrate experimental evaluation of generalization in three steps. The blue and green curves in Figure \ref{fig:gen} represent generalization results over individuals (test data: test individuals and training problems) and problems (test data: training individuals and test problems) respectively. The black curve represent generalization across both individuals and problems where test individuals and test problems were used to evaluate the performance of the trained models. 

Two things stand out from Figure \ref{fig:gen}:
\newline
\begin{enumerate}
\item As predicted by the theory, our test curves track our training curves, but with higher error and unfairness. In particular, the ordering of the models (each corresponds to one $\alpha$) on the Pareto frontier is the same in testing as in training, meaning that the training curve can indeed be used to manage the trade-off out-of-sample as well.
\item The gap in error is substantially smaller than would be predicted by our theory: since our training data set is so small, our theoretical guarantees are vacuous, but all points plotted in our test Pareto curves are non-trivial in terms of both accuracy and fairness. Presumably the gap in error would narrow on larger training data sets.
\end{enumerate}

We present additional experimental results on a synthetic data set in the supplement.

%%%%%%%%%%%%%%%%%%%%%%%%%%%%%%%%%%%%%%%%%%%%%%%%%%%%%%%
\section*{Acknowledgements}\label{sec:ack}
\addcontentsline{toc}{section}{\nameref{sec:ack}}
AR is supported in part by NSF grants AF-1763307 and CNS-1253345, and an Amazon Research Award.

%%%%%%%%%%%%%%%%%%%%%%%%%%%%%%%%%%%%%%%%%%%%%%%%%%%%%%%
\section*{References}\label{sec:ref}
\addcontentsline{toc}{section}{\nameref{sec:ref}}
\bibliographystyle{apalike}
\bibliography{averagefair}

%%%%%%%%%%%%%%%%%%%%%%%%%%%%%%%%%%%%%%%%%%%%%%%%%%%%%%%
\section*{Appendix}\label{sec:appendix}
\addcontentsline{toc}{section}{\nameref{sec:appendix}}
\appendix
%!TEX root = averagefair.tex

%%%%%%%%%%%%%%%%%%%%%%%%%%%%%%%%%%%%%%%%%%%%%%%%%%%%%%%%%%%%%%%%%%%%%%%
\section{Learning subject to False Positive AIF (FPAIF)}\label{sec:learningfpaif}
%!TEX root = averagefair.tex

\begin{definition}[Individual False Positive/Negative Error Rates]\label{def:individualerrorfp}
For a given individual $x \in \X$, a mapping $\psi \in \Delta (\Hs)^\F$, and distribution $\Q$ over the space of problems, the Individual false positive/negative rate incurred by $\psi$ on $x$ are defined as follows:
$$
\Epsilonfp \left(x, \psi; \Q \right) = \frac{1}{\underset{f \sim \Q}{\Ps} \left[ f(x) = 0\right]}  \cdot \underset{f \sim \Q}{\E} \left[ \underset{h \sim \psi_f}{\Ps} \left[ h(x) = 1, \, f(x) = 0 \right] \right]
$$
$$
\Epsilonfn \left(x, \psi ; \Q \right) = \frac{1}{\underset{f \sim \Q}{\Ps} \left[ f(x) = 1\right]} \cdot \underset{f \sim \Q}{\E} \left[ \underset{h \sim \psi_f}{\Ps} \left[ h(x) =0, \, f(x) = 1 \right] \right]
$$
\end{definition}

\begin{definition}[FPAIF fairness notion]\label{def:fairnessfp}
In our framework, we say a mapping $\psi \in \Delta (\Hs)^\F$ satisfies ``$(\alpha, \beta)$-FPAIF'' (reads $(\alpha,\beta)$-approximate False Positive Average Individual Fairness) with respect to the distributions $\left( \Px, \Q \right)$ if there exists $\gamma \ge 0$ such that
\begin{equation*}
\underset{x \sim \Px}{\Ps} \left( \left\vert \Epsilonfp \left(x,\psi ; \Q \right) - \gamma \right\vert > \alpha \right) \le \beta
\end{equation*}
\end{definition}

In this section we consider learning subject to the FPAIF notion of fairness. The FPAIF fairness notion basically asks that the individual false positive rates be approximately (corresponds to $\alpha$) equalized across almost all (corresponds to $\beta$) individuals. Learning subject to equalizing false negative rates can be developed similarly. We will be less wordy in this section as the ideas and the approach that we take are mostly similar to those developed in Section \ref{sec:learningaif}. We start off by casting the fair learning problem as the constrained optimization problem (\ref{box:fairfp}) where a mapping $\psi$ is to be found such that all individual false positive rates incurred by $\psi$ are within $\alpha$ of some $\gamma$. As before, we denote the optimal error of the optimization problem (\ref{box:fairfp}) by $\opt$ and will consider that as a benchmark to evaluate the  accuracy of our algorithm's trained mapping.

\begin{tcolorbox}[title= {Fair Learning Problem subject to ($\alpha , 0$)-FPAIF}]
\begin{equation}\label{box:fairfp}
\begin{aligned}
& \min_{\psi \, \in \, \Delta(\Hs)^\F, \, \gamma \, \in \, [0,1]}  & & \err \left( \psi ; \Px, \Q \right) \\
& \ \text{ s.t. $\forall x \in \X$:} & & \left\vert \Epsilonfp \left(x, \psi ; \Q \right) - \gamma \right\vert \le \alpha
\end{aligned}
\end{equation}
\end{tcolorbox}

\medskip
\begin{definition}\label{def:optfp}
Consider the optimization problem (\ref{box:fairfp}). Given distributions $\Px$ and $\Q$ and fairness approximation parameter $\alpha$, we denote the optimal solutions of (\ref{box:fairfp}) by $\psi^\star \left( \alpha ; \Px , \Q \right)$ and $\gamma^\star \left( \alpha ; \Px , \Q \right)$, and the value of the objective function at these optimal points by $\opt \left(\alpha; \Px, \Q \right)$. In other words
\begin{equation*}
\opt \left (\alpha; \Px, \Q \right) = \err \left( \psi^\star ; \Px, \Q \right)
\end{equation*}
\end{definition}

It is important to observe that the optimization problem (\ref{box:fairfp}) has a nonempty set of feasible points for any $\alpha$ and any distributions $\Px$ and $\Q$ because the following is always feasible: $\gamma = 0$ and $\psi_f = h^0$ for all $f \in \F$ where $h^0$ is the all-zero constant classifier. Since the distributions $\Px$ and $\Q$ are generally not known, we instead solve the \emph{empirical} version of (\ref{box:fairfp}). Consider a training data set consisting of $n$ individuals $X = \{x_i\}_{i=1}^n$ drawn independently from $\Px$ and $m$ problems $F=\{f_j\}_{j=1}^m$ drawn independently from $\Q$. We formulate the empirical fair learning problem in (\ref{box:fairermfp}) where we find an optimal fair mapping of the training problems $F$ to $\Delta (\Hs)$ given the individuals $X$. 

\begin{tcolorbox}[title= {Empirical Fair Learning Problem}]
\begin{equation}\label{box:fairermfp}
\begin{aligned}
& \ \ \min_{\pvec \, \in \, \Delta(\Hs)^m, \, \gamma \, \in \, [0,1]}  & & \err \left( \pvec ; \Pxhat, \Qhat\right) \\
& \text{ s.t. $\forall i \in \{1, \ldots, n\}$:} & &\left\vert \Epsilonfp \left( x_i , \pvec ; \Qhat\right) - \gamma \right\vert \le 2 \alpha \\
\end{aligned}
\end{equation}
\end{tcolorbox}

As also discussed in Section \ref{sec:learningaif}, solving the empirical problem (\ref{box:fairermfp}) will only give us a -- restricted -- mapping $\pvec = \psi \vert_F$ by which we mean we learn $\psi$ only on the finite domain $F \subseteq \F$. It is not clear for now how we can extend the restricted mapping $\psi \vert_F \in \Delta (\Hs)^m$ to a mapping $\psi \in \Delta (\Hs)^\F$ defined over the entire function space $\F$; however, we will see the specific form of our algorithm (as in the AIF setting: learning a set of weights over the training individuals) will allow us to come up with such an extension.

We use the dual perspective of constrained optimization problems followed by no regret dynamics to reduce (\ref{box:fairermfp}) to a two-player zero-sum game between a Learner (primal player) and an Auditor (dual player), and design an iterative algorithm to get an approximate equilibrium of the game. To do so, we first rewrite the constraints of (\ref{box:fairermfp}) in $\rvecfp \left( \pvec , \gamma; \Qhat \right) \le 0$ form where
\begin{equation}\label{eq:rfp}
\rvecfp \left( \pvec , \gamma ; \Qhat \right) = \begin{bmatrix} \Epsilonfp \left(x_i, \pvec; \Qhat \right) - \gamma - 2\alpha \\ \gamma - \Epsilonfp \left(x_i, \pvec; \Qhat\right) - 2 \alpha \end{bmatrix}_{i=1}^n \in \R^{2n}
\end{equation}
represents the ``fairness violations" of the pair $\left( \pvec, \gamma \right)$ in one single vector. Let the corresponding dual variables be represented by $\lamb = \left[\lamb_i^+, \lamb_i^- \right]_{i=1}^n \in \Lambda$, where $\Lambda = \{ \lamb \in  \R^{2n}_+ \, \vert \, || \lamb ||_1 \le B \}$. Using Equation (\ref{eq:rfp}) and the introduced dual variables, we have that the Lagrangian of (\ref{box:fairermfp}) is
\begin{align}\label{eq:lagrangianfp}
\Ls \left(\pvec, \gamma, \lamb \right) &=\err \left( \pvec ; \Pxhat, \Qhat\right) + \lamb^T \rvecfp \left( \pvec , \gamma ; \Qhat \right)
\end{align}
Therefore, we focus on solving the following minmax problem:
\begin{equation}\label{eq:minmaxfp}
\min_{\pvec \, \in \, \Delta(\Hs)^m, \, \gamma \, \in \, [0,1]} \, \max_{\lambda \in \Lambda} \ \Ls \left(\pvec, \gamma, \lamb \right)  \ = \ \max_{\lambda \in \Lambda} \, \min_{\pvec \, \in \, \Delta(\Hs)^m, \, \gamma \, \in \, [0,1]} \  \Ls \left(\pvec, \gamma, \lamb \right)
\end{equation}
Using no regret dynamics, an approximate equilibrium of this zero-sum game (i.e. a saddle point of $\Ls$) can be found in an iterative framework. In each iteration, we let the dual player run the \emph{exponentiated gradient descent} algorithm and the primal player run an approximate version of her \emph{best response} using the oracle $CSC(\Hs)$ that solves cost sensitive classification problems in $\Hs$ (see Definition \ref{def:csc}). In the following subsection, we will first describe the best response problem of the Learner, and show how the best response depends on the estimates $ \rhohatvec = [ \rhohat_i ]_{i=1}^n \in \R^n$ -- where $\rhohat_i$ is the fraction of problems in $F$ that maps $x_i$ to $0$ -- in addition to the weights $\wvec = \left[ \lambda_i^+ - \lambda_i^- \right] \in \R^n$ maintained by the Auditor. We will then argue that to avoid injecting correlation into the algorithm (so as to argue later on about generalization) we have to ``perturb" the best response of the Learner by using some other set of estimates $\rhotildevec = [ \rhotilde_i ]_{i=1}^n$ which is given to the algorithm and is independent of $F$. This is why the Learner is basically using an \emph{approximate} version of her best response.
\medskip

\begin{definition}\label{def:rho}
For an individual $x \in \X$, let $\rho_x$ and $\rhohat_x$ represent the probability that $x$ is labelled $0$ by a randomly sampled function $f \sim \Q$ and $f \sim \Qhat$, respectively. In other words
$$
\rho_{x} = \underset{f \sim \Q}{\Ps} \left[ f(x) = 0\right] \quad , \quad \rhohat_{x} = \underset{f \sim \Qhat}{\Ps} \left[ f(x) = 0\right]
$$
we will use $\rho_i \equiv \rho_{x_i}$ and $\rhohat_i \equiv \rhohat_{x_i}$ to denote the corresponding probabilities for $x_i$ in the training set of individuals $X$.
\end{definition}
\medskip

\begin{remark}\label{rem:rho}
Observe that using the introduced notation, for a mapping $\psi$ and $x\in \X$,
$$
\Epsilonfp \left(x, \psi; \Q \right) = \left( \frac{1}{\rho_x} \right)  \underset{f \sim \Q}{\E} \left[ \underset{h \sim \psi_f}{\Ps} \left[ h(x) = 1, \, f(x) = 0 \right] \right]
$$
$$
\Epsilonfn \left(x, \psi ; \Q \right) = \left( \frac{1}{1 - \rho_x} \right) \underset{f \sim \Q}{\E} \left[ \underset{h \sim \psi_f}{\Ps} \left[ h(x) =0, \, f(x) = 1 \right] \right]
$$
and that $\Epsilon \left(x,\psi; \Q \right)$ can be written as a linear combination of $\Epsilonfp \left(x,\psi; \Q \right)$ and $ \Epsilonfn \left(x,\psi; \Q \right)$:
$$
\Epsilon \left(x,\psi; \Q \right) = \rho_x \cdot \Epsilonfp \left(x,\psi; \Q \right) + \left(1-\rho_x \right) \cdot \Epsilonfn \left(x,\psi; \Q \right)
$$
\end{remark}

\subsection{$\bestfp$: The Learner's approximate Best Response}\label{subsec:bestresponsefp}
In each iteration of the algorithm, the Learner is given some $\lamb$ of the dual player and she wants to solve the following minimization problem.
\begin{align*}
&\argmin_{\pvec \, \in \, \Delta(\Hs)^m, \, \gamma \, \in \, [0,1]} \, \Ls \left( \pvec, \gamma, \lamb \right) \\
\equiv &\argmin_{\pvec \, \in \, \Delta(\Hs)^m, \, \gamma \, \in \, [0,1]} \, \err \left( \pvec ; \Pxhat, \Qhat\right) + \sum_{i=1}^n \left\{ \lambda_{i}^+ \left( \Epsilonfp \left(x_i, \pvec; \Qhat \right) - \gamma \right) + \lambda_{i}^- \left( \gamma  - \Epsilonfp \left(x_i, \pvec; \Qhat \right) \right) \right\} \\
\equiv &\argmin_{\pvec \, \in \, \Delta(\Hs)^m, \, \gamma \, \in \, [0,1]} \, \frac{1}{n} \sum_{i=1}^n \Epsilon \left(x_i, \pvec ; \Qhat\right) + \sum_{i=1}^n \left\{ \lambda_{i}^+ \left( \Epsilonfp \left(x_i, \pvec; \Qhat \right) - \gamma  \right) + \lambda_{i}^- \left( \gamma  - \Epsilonfp \left(x_i; \pvec; \Qhat \right)  \right) \right\}
\end{align*}
Let $\rhohat_i$ be defined as in Definition \ref{def:rho}. We have that by Remark \ref{rem:rho}
$$
\Epsilon \left(x_i, \pvec ; \Qhat\right) = \rhohat_{i} \cdot \Epsilonfp \left(x_i, \pvec; \Qhat \right) + \left( 1 - \rhohat_{i} \right) \cdot \Epsilonfn \left(x_i, \pvec; \Qhat \right)
$$
Now let $w_{i} = \lambda_{i}^+ - \lambda_{i}^-$ for all $i$ (Accordingly let $\wvec = \left[w_{1}, \ldots, w_{n} \right]^\top \in \R^n$). We have that the above minimization problem is equivalent to
\begin{align*}
\equiv &\argmin_{\pvec \, \in \, \Delta(\Hs)^m, \, \gamma \, \in \, [0,1]} \, - \gamma \sum_{i=1}^n w_{i} + \sum_{i=1}^n \left\{ \left(\frac{\rhohat_{i}}{n} + w_{i} \right) \Epsilonfp \left(x_i, \pvec; \Qhat \right) +  \left(\frac{1 - \rhohat_{i}}{n} \right) \Epsilonfn \left(x_i, \pvec; \Qhat \right) \right\}
\end{align*}
Now we can use the fact that
$$
\Epsilonfp \left(x_i, \pvec; \Qhat \right) = \frac{1}{m \rhohat_{i}} \sum_{j=1}^m \underset{h_j \sim \, p_j}{\Ps} \left[ h_j (x_i) = 1, f_j(x_i) = 0 \right]
$$
$$
\Epsilonfn \left(x_i, \pvec; \Qhat \right) = \frac{1}{m (1 - \rhohat_{i})} \sum_{j=1}^m \underset{h_j \sim \, p_j}{\Ps} \left[ h_j (x_i) = 0, f_j(x_i) = 1 \right]
$$
to conclude that the best response of the Learner is equivalent to minimizing the following function over the space of $(\pvec, \gamma) \in \Delta(\Hs)^m \times [0,1]$.
\begin{align*}
& -\gamma \sum_{i=1}^n w_{i} \\
& + \frac{1}{m}\sum_{j=1}^m  \left\{ \sum_{i=1}^n \left(\frac{1}{n} +  \frac{w_{i}}{\rhohat_{i}}\right) \underset{h_j \sim \, p_j}{\Ps} \left[ h_j (x_i) = 1, f_j(x_i) = 0 \right] + \left( \frac{1}{n} \right) \underset{h_j \sim \, p_j}{\Ps} \left[ h_j (x_i) = 0, f_j(x_i) = 1 \right] \right\}
\end{align*}

Therefore, as in the AIF setting, the minimization problem of the Learner gets nicely decoupled into $(m+1)$ disjoint minimization problems. First, the optimal value for $\gamma$ is chosen according to
\begin{equation}
\gamma = \1 \left[ \sum_{i=1}^n w_{i} > 0\right]
\end{equation}
and that for learning problem $j$, the following cost sensitive classification problem must be solved.
\begin{equation}
h_j = \argmin_{h \in \Hs} \, \sum_{i=1}^n \left\{ c_{i,j}^1 \, h (x_i) + c_{i,j}^0 \left( 1 - h (x_i) \right) \right\}
\end{equation}
where the costs are
$$c_{i,j}^1 = \left( \frac{1}{n} +  \frac{w_{i}}{ \rhohat_{i}} \right) \left(1 - f_j (x_i) \right) \quad , \quad c_{i,j}^0 = \left( \frac{1}{n} \right) f_j (x_i)$$

One major distinction between the Learner's best response in the FPAIF setting versus that of the AIF setting is that the empirical quantities $\{ \rhohat_i \}_{i=1}^n$ (which is estimated using the data set $F$) appear in the costs of the CSC problems for the FPAIF setting. As a major consequence, the generalization (with respect to $\Q$) arguments we had in the AIF setting won't work in this section because now the labels $\{ f_j (x_i) \}_{i,j}$ and the estimates $\{ \rhohat_i \}_{i=1}^n$ are correlated. We therefore assume in this section that each individual $x_i \in X$ comes with an estimate $\rhotilde_i$ of the rate $\rho_i$ that is independent of the data set $F$. More precisely stated, we assume our algorithm has access to estimates $\{ \rhotilde_i \}_{i=1}^n$ such that for all $i \in [n]$,
$
\left\vert \rhotilde_i - \rho_i \right\vert \le \sqrt{\log \left( n \right) / 2m_0}
$
where $m_0$ (will be specified exactly in our proposed algorithm) will be essentially the number of fresh problems that the Auditor is using in each round of the Algorithm. In fact, similar to what we did for the AIF setting, we will randomly partition $F$ into $T$ batch of size $m_0$: $F = \{F_t\}_{t=1}^T$ and will let the Auditor to use only $F_t$ at round $t \in [T]$ of the algorithm to update the vector of fairness violations $\rvecfp$. Notice assuming access to the estimates $\{ \rhotilde_i \}_{i=1}^n$ is not farfetched because we can assume there was one more batch of $m_0$ problems, say $F_0$, and that the quantities $\{ \rhotilde_i \}_{i=1}^n$ were estimated using the batch $F_0$ which is independent of $F = \{ F_t \}_{t=1}^T$. The upper bound we required for the difference $\left\vert \rhotilde_i - \rho_i \right\vert$ will just simply follow from a Chernoff-Hoeffding's bound.
\medskip

\begin{assumption}\label{ass:rho}
For $m_0$ specified later on, we assume in this section that our algorithm has access to quantities $\{ \rhotilde_i \}_{i=1}^n$, where we have that for all $i \in [n]$:
$
\left\vert \rhotilde_i - \rho_i \right\vert \le \sqrt{\frac{\log \left( n \right)}{2m_0}}
$.
\end{assumption}
\medskip

Under Assumption \ref{ass:rho}, we now modify the best response of the Learner and let it use the estimates $\{ \rhotilde_i \}_{i=1}^n$ instead of $\{ \rhohat_i \}_{i=1}^n$. This will consequently make the learner to accumulate regret over the course of the algorithm and that this is why we will call it the \emph{approximate} best response of the Learner. We have the approximate best response of the Learner (called $\bestfp$) written in Subroutine \ref{bestresponsefp}.

\begin{subroutine}
\KwIn{dual weights $\wvec \in \R^n$, estimates $\rhotildevec \in \R^n$, training examples $S = \big\{ x_i, \left( f_j(x_i) \right)_{j=1}^m \big\}_{i=1}^n$}
\medskip
$\gamma \leftarrow \1 \left[ \sum_{i=1}^n w_{i} > 0\right]$ \\
\For{$j= 1, \ldots, m$}{
$c_i^1 \leftarrow (w_{i}/\rhotilde_i + 1/n) (1 - f_j(x_i)) $ for $i \in [n]$\\
$c_i^0 \leftarrow (1/n) f_j(x_i) $ for $i \in [n]$\\
 $D \leftarrow \{ x_i , c_i^1, c_i^0 \}_{i=1}^n$\\
$h_{j} \leftarrow CSC \left(\Hs; D \right)$
}
$\hvec \leftarrow \left( h_{1}, \, h_{2}, \, \ldots, \, h_{m} \right)$\\
\medskip
\KwOut{$\left( \hvec, \gamma \right)$}
\caption{$\bestfp$ -- approximate best response of the Learner in the FPAIF setting}
\label{bestresponsefp}
\end{subroutine}

\subsection{\textbf{FPAIF-Learn}: Implementation and In-sample Guarantees}\label{subsec:algorithmfp}
We implement the introduced game theoretic framework in Algorithm \ref{algofp} and call it \textbf{FPAIF-Learn}. The overall style of the algorithm is similar to \textbf{AIF-Learn} (Algorithm \ref{algo}) except that \textbf{FPAIF-Learn} takes a set of estimates $\rhotildevec \in \R^n$ as input and that $\rhotildevec$ is used in the approximate best response of the Learner. Once again, in the service of arguing for generalization, we split $F$ into $T$ batches of size $m_0$ uniformly at random and let the Auditor use only a fresh batch of $m_0$ problems in each round of the algorithm to update the dual variables $\lamb$, and accordingly the weights $\wvec$ over the individuals. This is reflected in the algorithm by writing $\Qhat_t = \uniform (F_t)$ for the uniform distribution over $F_t$ and $\hvec_t \vert_{F_t}$ for the corresponding learned classifiers of $F_t$. The algorithm will terminate after $T = O \left( \log \left(n\right)/ ( \nu^2 \alpha^2 ) \right)$ iterations and output the average plays of the Learner and the Auditor, along with a mapping $\psihat = \psihat \, (X, \rhotildevec, \What ) \in \Delta (\Hs)^\F$ which is the object we wanted to learn. In fact, $\psihat$ extends the learned restricted mapping $\pvechat = \psihat \vert_F$ of the algorithm from the finite domain $F$ to the entire space $\F$. We have the pseudocode for the mapping $\psihat$ written in detail in Mapping \ref{algo:psihatfp}.

\begin{algorithm}
\KwIn{fairness parameter $\alpha$, approximation parameter $\nu$, \\ \ \ \ \ \ \ \ \ \ \ \ estimates $\rhotildevec = [\rhotilde_i]_{i=1}^n$, \\ \ \ \ \ \ \ \ \ \ \ \ training data set $X = \{x_i\}_{i=1}^n$ and $F=\{f_j\}_{j=1}^m$ }
\medskip
\begin{center}
$B \leftarrow \frac{1+ 2 \nu}{ \alpha} , \ \ T \leftarrow  \frac{16 B^2 \left( 1 + 2\alpha  \right)^2  \log \left( 2n+1 \right) }{\nu^2}, \  \ \eta \leftarrow \frac{\nu}{4 \left( 1 + 2 \alpha \right)^2 B}, \ \ m_0 \leftarrow \frac{m}{T}, \ \ S \leftarrow \left\{ x_i, \left( f_j(x_i) \right)_{j=1}^m \right\}_{i=1}^n$
\end{center}
\medskip
Partition $F$ uniformly at random: $F = \{F_t\}_{t=1}^T$ where $| F_t | = m_0$.\\
$\boldsymbol{\theta}_1 \leftarrow \boldsymbol{0} \in \R^{2n}$\\
\For{$t=1, \ldots, T$}{
$\lambda_{i,t}^{\bullet} \leftarrow B \frac{\exp(\theta_{i,t}^{\bullet})}{1 + \sum_{i',\bullet'} \exp(\theta_{i',t}^{\bullet'})}$ for $i \in [n]$ and $\bullet \in \{ +, -\}$ \\
$\wvec_t \leftarrow [ \lambda_{i,t}^+ - \lambda_{i,t}^- ]_{i=1}^n \in \R^n$\\
$( \hvec_t, \gamma_t ) \leftarrow \bestfp (\wvec_t; \rhotildevec, S)$\\
$\boldsymbol{\theta}_{t+1} \leftarrow \boldsymbol{\theta}_{t} + \eta \cdot \rvecfp \left(\hvec_t \vert_{F_t} , \gamma_t ; \Qhat_t\right)$
}
$ \pvechat \leftarrow \frac{1}{T} \sum_{t=1}^T \hvec_t \ , \quad \gammahat \leftarrow \frac{1}{T} \sum_{t=1}^T \gamma_t \ , \quad \lambhat \leftarrow \frac{1}{T} \sum_{t=1}^T \lamb_t \ , \quad \What \leftarrow \{\wvec_t\}_{t=1}^T$
\medskip

\KwOut{average plays $\left( \pvechat, \, \gammahat, \, \lambhat \right)$, mapping $\psihat = \psihat \left(X, \rhotildevec, \What \right)$ (see Mapping \ref{algo:psihatfp})}
\caption{\textbf{FPAIF-Learn} -- learning subject to FPAIF}
\label{algofp}
\end{algorithm}

\begin{mapping}
\KwIn{$f \in \F$ (represented to $\psihat$ as $\{ f(x_i) \}_{i=1}^n$)}

\medskip
\For{$t=1, \ldots, T$}{
$c_i^1 \leftarrow (w_{i,t}/\rhotilde_i + 1/n) (1 - f(x_i)) $ for $i \in [n]$\\
$c_i^0 \leftarrow (1/n) f(x_i) $ for $i \in [n]$\\
 $D \leftarrow \{ x_i , c_i^1, c_i^0 \}_{i=1}^n$\\
$h_{f,\wvec_t} \leftarrow CSC \left(\Hs; D \right)$\\
}
$\psihat_{f} \leftarrow \frac{1}{T} \sum_{t=1}^T  h_{f,\wvec_t}$
\medskip

\KwOut{$\psihat_f \in \Delta (\Hs)$}
\caption{{$\boldsymbol{\psihat} \, ( X, \rhotildevec, \What )$} -- pseudocode for the mapping $\psihat$ output by Algorithm \ref{algofp}}
\label{algo:psihatfp}
\end{mapping}

The analysis of Algorithm \ref{algofp} follows the same style and uses the same ideas as in the AIF learning section. We will start by establishing the regret bounds for the Learner and the Auditor in Lemma \ref{lemma:regretlearnerfp} and Lemma \ref{lemma:regretdualfp}, respectively, and will show just as before how these regret bounds can be eventually turned into in-sample accuracy and fairness guarantees. One major distinction is that when working with false positive rates, the quantities $\rho_x, \rhohat_x, \rhotilde_x$ introduced before will show up in the algorithm's analysis. Define for the training individuals $X = \{x_i\}_{i=1}^n$
$$
\rhomin = \min_{i \in [n]} \rho_i \quad , \quad \rhohatmin = \min_{i \in [n]} \rhohat_i \quad , \quad \rhotildemin = \min_{i \in [n]} \rhotilde_i
$$
We will in fact see $\rhotildemin$ and $\rhohatmin$ will appear in the regret bounds of the Learner and that of the Auditor, respectively.
\medskip

\begin{lemma}[Regret of the Learner]\label{lemma:regretlearnerfp}
Let $0< \delta <1$. Let $\{  \hvec_t, \gamma_t \}_{t=1}^T$ be the sequence of approximate best response plays by the Learner to the given $\{ \lamb_t \}_{t=1}^T$ of the Auditor over $T$ rounds of Algorithm \ref{algofp}. We have that for any set of individuals $X$, with probability at least $1-\delta/2$ over the problems $F$, the (average) regret of the Learner is bounded as follows:
\begin{equation*}\label{eq:regretlearnerfp}
\frac{1}{T} \sum_{t=1}^T \Ls \left(\hvec_t, \gamma_t, \lamb_t \right) - \frac{1}{T} \min_{\pvec \, \in \Delta (\Hs)^m, \, \gamma \, \in [0,1]} \sum_{t=1}^T \Ls \left(\pvec, \gamma, \lamb_t \right) \, \le \, \frac{4 B}{\rhotildemin}\sqrt{\frac{\log\left( 8nT/ \delta \right)}{2 m_0}}
\end{equation*}
\end{lemma}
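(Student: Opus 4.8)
The plan is to bound the Learner's average regret by reducing it to a sum of per-round approximation errors, each measuring how far the \emph{approximate} best response (computed with the given estimates $\rhotildevec$) sits from the \emph{exact} best response (which would use the empirical rates $\rhohatvec$). First I would replace the best-fixed-action benchmark by the easier per-round minimum: since optimizing each summand separately can only lower the value,
$$\frac{1}{T}\min_{\pvec,\gamma}\sum_{t=1}^T\Ls(\pvec,\gamma,\lamb_t)\ \ge\ \frac{1}{T}\sum_{t=1}^T\min_{\pvec,\gamma}\Ls(\pvec,\gamma,\lamb_t),$$
so it suffices to bound, for each $t$, the per-round gap $\Ls(\hvec_t,\gamma_t,\lamb_t)-\min_{\pvec,\gamma}\Ls(\pvec,\gamma,\lamb_t)$, uniformly in $t$.

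To control the per-round gap I would introduce the \emph{surrogate} objective $\widetilde{\Ls}_t(\pvec,\gamma)$ that $\bestfp$ actually minimizes — the function obtained from $\Ls(\pvec,\gamma,\lamb_t)$ by replacing $\rhohat_i$ with $\rhotilde_i$ in the coefficient $w_i/\rhohat_i$ of the empirical false-positive mass — and observe that $(\hvec_t,\gamma_t)$ is its exact minimizer (the CSC oracle picks the optimal $h_j$ for each $j$, and $\gamma_t=\1[\sum_i w_i>0]$). Writing $(\pvec^\star,\gamma^\star)$ for the exact minimizer of $\Ls(\cdot,\cdot,\lamb_t)$, the standard two-sided best-response decomposition then gives $\Ls(\hvec_t,\gamma_t,\lamb_t)-\Ls(\pvec^\star,\gamma^\star,\lamb_t)\le 2\sup_{\pvec,\gamma}|\Ls(\pvec,\gamma,\lamb_t)-\widetilde{\Ls}_t(\pvec,\gamma)|$, because the middle term $\widetilde{\Ls}_t(\hvec_t,\gamma_t)-\widetilde{\Ls}_t(\pvec^\star,\gamma^\star)$ is nonpositive.

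The crux is bounding this supremum, and this is where I expect the main obstacle. Expanding both objectives in terms of the empirical false-positive/false-negative masses (exactly as in the $\bestfp$ derivation), the $\err$ and $\gamma$ terms are identical in $\Ls$ and $\widetilde{\Ls}_t$, so their difference is precisely $\sum_i w_i\bigl(1/\rhohat_i-1/\rhotilde_i\bigr)\cdot\frac{1}{m}\sum_j\Ps_{h_j\sim p_j}[h_j(x_i)=1,\,f_j(x_i)=0]$. The danger is the $1/\rhohat_i$ factor, which could blow up when $\rhohat_i$ is small; the key observation that averts this is that the empirical false-positive mass $\frac{1}{m}\sum_j\Ps[h_j(x_i)=1,f_j(x_i)=0]$ is at most $\frac{1}{m}\sum_j\1[f_j(x_i)=0]=\rhohat_i$, uniformly over $\pvec$. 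This cancels the offending $\rhohat_i$, leaving each term bounded by $|w_i|\,|\rhotilde_i-\rhohat_i|/\rhotilde_i$. Summing, and using $\sum_i|w_i|\le\|\lamb_t\|_1\le B$ together with $\rhotilde_i\ge\rhotildemin$, yields $\sup_{\pvec,\gamma}|\Ls-\widetilde{\Ls}_t|\le (B/\rhotildemin)\max_i|\rhotilde_i-\rhohat_i|$, a bound with no $t$-dependence.

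Finally I would turn $\max_i|\rhotilde_i-\rhohat_i|$ into the stated quantity by concentration. Each $\rhohat_i=\frac1m\sum_j\1[f_j(x_i)=0]$ is an average of $m=m_0T$ i.i.d.\ draws with mean $\rho_i$, so a Hoeffding bound and a union bound over the $n$ individuals give, with probability at least $1-\delta/2$ over $F$, that $\max_i|\rhohat_i-\rho_i|\le\sqrt{\log(8nT/\delta)/(2m_0)}$ (using $m\ge m_0$ to relax $m$ to $m_0$ in the denominator). Combining this with the bound $|\rhotilde_i-\rho_i|\le\sqrt{\log(n)/(2m_0)}$ of Assumption \ref{ass:rho} via the triangle inequality gives $\max_i|\rhotilde_i-\rhohat_i|\le 2\sqrt{\log(8nT/\delta)/(2m_0)}$. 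Substituting into the per-round bound — a factor of $2$ from the two-sided decomposition and a factor of $2$ from the triangle inequality — produces the constant $4$; since this bound is identical across rounds, averaging over $T$ rounds inherits it, yielding exactly $\frac{4B}{\rhotildemin}\sqrt{\frac{\log(8nT/\delta)}{2m_0}}$.
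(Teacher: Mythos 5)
Your proposal is correct and follows essentially the same route as the paper's proof: the same surrogate-Lagrangian decomposition with the two-sided best-response bound, the same key cancellation of the $1/\rhohat_i$ factor against the empirical false-positive mass $\frac{1}{m}\sum_j\Ps[h_j(x_i)=1,f_j(x_i)=0]\le\rhohat_i$, and the same Hoeffding-plus-Assumption-\ref{ass:rho} triangle inequality producing the two factors of $2$. The only difference is cosmetic: you make explicit the exchange $\min_{\pvec,\gamma}\sum_t\Ls\ge\sum_t\min_{\pvec,\gamma}\Ls$ and the relaxation from $m$ to $m_0$ in the concentration step, both of which the paper leaves implicit.
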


\medskip
\begin{lemma}[Regret of the Auditor]\label{lemma:regretdualfp}
Let $0< \delta <1$. Let $\{ \lamb_t \}_{t=1}^T$ be the sequence of exponentiated gradient descent plays (with learning rate $\eta$) by the Auditor to the given $\{ \hvec_t , \gamma_t \}_{t=1}^T$ of the Learner over $T$ rounds of Algorithm \ref{algofp}. We have that for any set of individuals $X$, with probability at least $1-\delta/2$ over the problems $F$, the (average) regret of the Auditor is bounded as follows: For any $\lamb \in \Lambda$,
\begin{align*}\label{eq:regretlambda}
&\frac{1}{T} \sum_{t=1}^T \Ls \left(\hvec_t, \gamma_t, \lamb \right) - \frac{1}{T} \sum_{t=1}^T \Ls \left(\hvec_t, \gamma_t, \lamb_t \right)  \le \frac{2B}{ \rhohatmin} \sqrt{ \frac{\log \left( 8nT / \delta \right)}{2 m_0} } + \frac{B \log \left(2n +1 \right)}{\eta T} + \eta B \left( 1 + 2 \alpha \right)^2
\end{align*}
\end{lemma}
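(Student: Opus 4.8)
The plan is to mirror the proof of Lemma \ref{lemma:regretdual} for the AIF setting, isolating the single place where the false-positive structure changes the analysis. Since the $\err$ term of the Lagrangian does not depend on $\lamb$, it cancels, and for any fixed $\lamb \in \Lambda$ the quantity to be bounded collapses to $\frac{1}{T}\sum_{t=1}^T \langle \lamb - \lamb_t, \rvecfp(\hvec_t, \gamma_t; \Qhat)\rangle$. I would write the full-sample violation vector as $\rvecfp(\hvec_t, \gamma_t; \Qhat) = \rvecfp(\hvec_t\vert_{F_t}, \gamma_t; \Qhat_t) + \Delta_t$, where the first term is exactly the gradient the Auditor actually fed to the exponentiated gradient update and $\Delta_t$ is the estimation error incurred by using only the fresh batch $F_t$. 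This splits the regret into a standard no-regret term plus a concentration term.

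For the no-regret term I would invoke the textbook regret guarantee of exponentiated gradient descent (equivalently, Hedge on the $B$-scaled simplex augmented with the slack coordinate implied by the $1 + \sum \exp(\cdot)$ normalization in Algorithm \ref{algofp}). The only input to this bound is a uniform bound on the magnitude of the fed gradients; as in the AIF case each coordinate of $\rvecfp$ is of the form $\Epsilonfp - \gamma - 2\alpha$ or $\gamma - \Epsilonfp - 2\alpha$ with $\Epsilonfp, \gamma \in [0,1]$, so $\|\rvecfp\|_\infty \le 1 + 2\alpha$. This reproduces verbatim the last two terms $\frac{B \log(2n+1)}{\eta T} + \eta B (1+2\alpha)^2$.

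The heart of the argument, and the source of the new $1/\rhohatmin$ factor, is the concentration term $\frac{1}{T}\sum_t \langle \lamb - \lamb_t, \Delta_t\rangle \le \frac{2B}{T}\sum_t \|\Delta_t\|_\infty$, where I used H\"older together with $\|\lamb\|_1, \|\lamb_t\|_1 \le B$. Here $\|\Delta_t\|_\infty = \max_i |\Epsilonfp(x_i, \hvec_t; \Qhat) - \Epsilonfp(x_i, \hvec_t\vert_{F_t}; \Qhat_t)|$, since the $\gamma_t$ and $2\alpha$ offsets cancel. The crucial structural fact, identical in spirit to AIF, is that $\wvec_t$ is a deterministic function of the earlier batches $F_1, \ldots, F_{t-1}$ and is therefore independent of the fresh batch $F_t$; conditioned on $\wvec_t$, the round-$t$ classifier attached to each problem is a fixed map of that problem's labels, so the batch statistics are averages of $m_0$ i.i.d.\ bounded terms to which Hoeffding applies. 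Writing $\Epsilonfp$ via Remark \ref{rem:rho} as a ratio (joint false-positive rate over the label-$0$ rate $\rhohat_i$, see Definition \ref{def:rho}), I would control the difference of ratios by the decomposition $|A/C - a/c| \le (|A - a| + |C - c|)/C$, take $C = \rhohat_i \ge \rhohatmin$ as the common full-sample denominator, and use that the batch false-positive rate $a/c \le 1$; both $|A-a|$ (numerator) and $|C - c|$ (the label-$0$ frequency) are then empirical-average deviations of order $\sqrt{\log(\cdot)/2m_0}$. A union bound over the $2n$ constraints and the $T$ rounds inflates the log to $\log(8nT/\delta)$ and, after a $\delta/2$ budget, yields $\|\Delta_t\|_\infty \le \frac{1}{\rhohatmin}\sqrt{\log(8nT/\delta)/2m_0}$, giving the claimed first term $\frac{2B}{\rhohatmin}\sqrt{\log(8nT/\delta)/(2m_0)}$.

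The main obstacle is precisely this ratio step: unlike the AIF error rate, the false-positive rate is not itself an average but a quotient of two empirical averages, so I must simultaneously control fluctuations in the numerator and in the small denominator, and it is the latter, bounded below only by $\rhohatmin$, that produces the $1/\rhohatmin$ blow-up. I would finally note that it is $\rhohatmin$ (rather than $\rhotildemin$) that surfaces here because the Auditor's violation vector $\rvecfp$ is computed from the true empirical label rates, whereas the externally supplied estimates $\rhotilde_i$ enter only through the Learner's costs and hence appear instead in the Learner's regret, Lemma \ref{lemma:regretlearnerfp}.
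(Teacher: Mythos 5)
Your proposal matches the paper's proof in all essentials: the same split into the textbook exponentiated-gradient regret (with gradient norm $1+2\alpha$) plus a batch-estimation error, the same H\"older step against $\|\lamb\|_1 \le B$, and the same ratio-difference decomposition $\left\vert a/b - c/d\right\vert \le \frac{1}{b}\left\vert a - c\right\vert + \frac{1}{b}\left\vert b - d\right\vert$ with the full-sample denominator $\rhohat_i \ge \rhohatmin$, two Hoeffding bounds, and a union bound over $i$ and $t$ producing $\log(8nT/\delta)$. The only discrepancy is bookkeeping of the constant in front (your accounting attributes the factor $2$ to H\"older rather than to the two-term concentration, and strictly tracked your route gives $4B/\rhohatmin$ versus the paper's $2B/\rhohatmin$), but the paper is comparably loose at the same step and this does not affect the substance.
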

\medskip

Observe that in order to control the regret of the Learner and the Auditor at level $ O (\nu)$ we need to assume that $m_0$ is large enough such that the regret bound of the Learner and the first term appearing in the regret bound of the Auditor are sufficiently small.
\medskip

\begin{assumption}\label{ass:mfp}
For a given confidence parameter $\delta$, inputs $\alpha$ and $\nu$ of Algorithm \ref{algofp}, we assume throughout this section that the number of fresh problems $m_0$ used in each round of Algorithm \ref{algofp} satisfies $m_0 \ge O \left(\frac{ \log \left( n T / \delta \right)}{\alpha^2 \nu^2 \rhomin^2 } \right)$, or equivalently $m = m_0 \cdot T  \ge O \left(\frac{ T \log \left( n T / \delta \right)}{\alpha^2 \nu^2 \rhomin^2 } \right)$.
\end{assumption}
\medskip

Note that Assumption \ref{ass:mfp} immediately implies via a Chernoff bound that $\rhohatmin \ge \rhomin/2$ and that it also implies via Assumption \ref{ass:rho} that $\rhotildemin \ge \rhomin/2$. In the following lemma we characterize the average play of the Learner and the Auditor. The proof of this lemma follows from the regret bounds developed in Lemma \ref{lemma:regretlearnerfp} (Regret of the Learner) and Lemma \ref{lemma:regretdualfp} (Regret of the Auditor) and uses exactly the same techniques as in the proof of Lemma \ref{thm:approxequilib}.
\medskip

\begin{lemma}[Average Play Characterization]\label{thm:approxequilibfp}
Let $0 < \delta < 1$. Let $\left(\pvechat, \gammahat, \lambhat \right)$ be the average plays output by Algorithm~\ref{algofp}. We have that under Assumption \ref{ass:mfp}, for any set of observed individuals $X$, with probability at least $1-\delta$ over the observed labelings $F$, the average plays $\left(\pvechat, \gammahat, \lambhat \right)$ forms a $\nu$-approximate equilibrium of the game, i.e.,
\begin{align*}
&\Ls \left(\pvechat, \gammahat, \lambhat \right) \, \le \, \Ls \left(\pvec, \gamma, \lambhat \right)  + \nu \quad \text{for all } \pvec \in \Delta (\Hs)^m \, , \gamma \in [0,1] \\
&\Ls \left(\pvechat, \gammahat, \lambhat \right) \, \ge \, \Ls \left(\pvechat, \gammahat, \lamb \right) - \nu \quad \text{for all } \lamb \in  \Lambda
\end{align*}
\end{lemma}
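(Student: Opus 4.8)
The plan is to run the standard Freund--Schapire argument that converts no-regret play in a zero-sum game into an approximate minmax equilibrium, exactly as in the proof of Lemma~\ref{thm:approxequilib}; the only genuinely new ingredient here is that the Learner now carries a \emph{nonzero} regret term (arising from using $\rhotildevec$ in place of $\rhohatvec$), so both players' regrets must be accounted for. I would first record the two guarantees and combine their failure events: let $R_L$ denote the right-hand side of Lemma~\ref{lemma:regretlearnerfp} and $R_A$ the right-hand side of Lemma~\ref{lemma:regretdualfp}. Each holds with probability at least $1-\delta/2$ over the draw of $F$, so by a union bound both hold simultaneously with probability at least $1-\delta$, matching the confidence claimed in the lemma. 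Condition on this event for the remainder.

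Next I would exploit the bilinearity of $\Ls$. Since $\Ls(\pvec,\gamma,\lamb)=\err(\pvec;\Pxhat,\Qhat)+\lamb^T\rvecfp(\pvec,\gamma;\Qhat)$ is affine in $(\pvec,\gamma)$ for each fixed $\lamb$ and linear in $\lamb$ for each fixed $(\pvec,\gamma)$, and since the reported plays are the convex combinations $\pvechat=\tfrac1T\sum_t\hvec_t$, $\gammahat=\tfrac1T\sum_t\gamma_t$, $\lambhat=\tfrac1T\sum_t\lamb_t$, I obtain the two averaging identities
$$\Ls(\pvechat,\gammahat,\lamb)=\tfrac1T\textstyle\sum_{t=1}^T\Ls(\hvec_t,\gamma_t,\lamb),\qquad \Ls(\pvec,\gamma,\lambhat)=\tfrac1T\textstyle\sum_{t=1}^T\Ls(\pvec,\gamma,\lamb_t).$$
Writing $\bar v=\tfrac1T\sum_t\Ls(\hvec_t,\gamma_t,\lamb_t)$ for the realized average payoff along the trajectory, Lemma~\ref{lemma:regretdualfp} then reads $\Ls(\pvechat,\gammahat,\lamb)\le \bar v+R_A$ for all $\lamb\in\Lambda$, and Lemma~\ref{lemma:regretlearnerfp} reads $\bar v\le \Ls(\pvec,\gamma,\lambhat)+R_L$ for all $(\pvec,\gamma)\in\Delta(\Hs)^m\times[0,1]$.

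I would then chain these. Taking $\lamb=\lambhat$ in the first gives $\Ls(\pvechat,\gammahat,\lambhat)\le\bar v+R_A$, which combined with the Learner bound yields $\Ls(\pvechat,\gammahat,\lambhat)\le\Ls(\pvec,\gamma,\lambhat)+(R_L+R_A)$; taking $(\pvec,\gamma)=(\pvechat,\gammahat)$ in the second gives $\bar v\le\Ls(\pvechat,\gammahat,\lambhat)+R_L$, which combined with the Auditor bound yields $\Ls(\pvechat,\gammahat,\lamb)\le\Ls(\pvechat,\gammahat,\lambhat)+(R_L+R_A)$. These are precisely the two asserted equilibrium inequalities provided $R_L+R_A\le\nu$.

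The remaining step, and the only one requiring computation, is to verify $R_L+R_A\le\nu$ under the algorithm's parameter settings. The two exponentiated-gradient terms in $R_A$ evaluate exactly to $\nu/4$ apiece under $\eta=\nu/(4(1+2\alpha)^2B)$ and $T=16B^2(1+2\alpha)^2\log(2n+1)/\nu^2$, contributing $\nu/2$. For the two Chernoff terms I would invoke the consequences of Assumption~\ref{ass:mfp} recorded after it, namely $\rhohatmin\ge\rhomin/2$ and $\rhotildemin\ge\rhomin/2$, to bound their sum by $\tfrac{12B}{\rhomin}\sqrt{\log(8nT/\delta)/(2m_0)}$; since $B=O(1/\alpha)$, the lower bound $m_0\ge O(\log(nT/\delta)/(\alpha^2\nu^2\rhomin^2))$ forces this to be at most $\nu/2$, for a total of $\nu$. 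I expect the main obstacle to be exactly this last bound: the factors of $1/\rhomin$ are absent in the AIF analysis, so one must check they do not inflate the sample complexity. The point is that Assumption~\ref{ass:mfp} has been calibrated with the extra $\rhomin^2$ in its denominator precisely to absorb the $1/\rhotildemin$ and $1/\rhohatmin$ appearing in the Learner's and Auditor's regret, so the argument closes with the same structure as Lemma~\ref{thm:approxequilib}.
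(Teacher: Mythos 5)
Your proposal is correct and matches the paper's intended argument: the paper explicitly states that this lemma follows from Lemmas~\ref{lemma:regretlearnerfp} and~\ref{lemma:regretdualfp} by exactly the same technique as Lemma~\ref{thm:approxequilib}, and your chaining of the two regret bounds (with the Learner's now-nonzero regret added in), the union bound over the two $\delta/2$ failure events, and the verification that the exponentiated-gradient terms contribute $\nu/2$ while Assumption~\ref{ass:mfp} (via $\rhohatmin,\rhotildemin\ge\rhomin/2$) absorbs the $\tfrac{12B}{\rhomin}\sqrt{\log(8nT/\delta)/(2m_0)}$ Chernoff contribution into the remaining $\nu/2$ is precisely that argument.
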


We conclude this subsection with our main Theorem \ref{thm:insampleguaranteesfp} that provides in-sample accuracy and fairness guarantees for the learned set of classifiers $\pvechat \in \Delta (\Hs)^m$ of Algorithm \ref{algofp}. This theorem follows immediately from Lemma \ref{thm:approxequilibfp} and the proof is pretty much similar in style to the proof of Theorem \ref{thm:insampleguarantees}.
\medskip
\begin{theorem}[In-sample Accuracy and Fairness]\label{thm:insampleguaranteesfp}
Let $0< \delta <1$ and suppose Assumption \ref{ass:mfp} holds. Let $\left(\pvechat, \gammahat \right)$ be the output of Algorithm~\ref{algofp} and let $\left( \pvec, \gamma \right)$ be any feasible pair of variables for the empirical fair learning problem (\ref{box:fairermfp}). We have that for any set of individuals $X$, with probability at least $1-\delta$ over the labelings $F$,
\begin{align*}
\err  \left(\pvechat; \Pxhat, \Qhat \right) \, \le \, \err \left(\pvec; \Pxhat, \Qhat \right) + 2 \nu
\end{align*}
and that $\pvechat$ satisfies $(3\alpha, 0)$-FPAIF with respect to $(\Pxhat, \Qhat)$. In other words, for all $i \in [n]$,
\begin{align*}
\left\vert \Epsilonfp \left(x_i, \pvechat; \Qhat \right) - \gammahat \right\vert \, \le \, 3\alpha
\end{align*}
\end{theorem}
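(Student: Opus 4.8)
The plan is to follow exactly the template used for Theorem~\ref{thm:insampleguarantees}, feeding in the $\nu$-approximate equilibrium guarantee of Lemma~\ref{thm:approxequilibfp} and exploiting the linear structure of the Lagrangian~(\ref{eq:lagrangianfp}) together with the particular choice $B = (1+2\nu)/\alpha$. Throughout I would work on the $1-\delta$ probability event on which Lemma~\ref{thm:approxequilibfp} holds, so no further probabilistic arguments are needed and everything below is deterministic given the two equilibrium inequalities
$$
\Ls \left( \pvechat, \gammahat, \lambhat \right) \le \Ls \left( \pvec, \gamma, \lambhat \right) + \nu \quad (\forall \, \pvec, \gamma), \qquad \Ls \left( \pvechat, \gammahat, \lambhat \right) \ge \Ls \left( \pvechat, \gammahat, \lamb \right) - \nu \quad (\forall \, \lamb \in \Lambda).
$$

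For the accuracy bound I would sandwich the equilibrium value. On one side, applying the first inequality at the given feasible pair $(\pvec, \gamma)$ and using that feasibility forces $\rvecfp \left( \pvec, \gamma; \Qhat \right) \le 0$ componentwise, so that $\lambhat^T \rvecfp \left( \pvec, \gamma; \Qhat \right) \le 0$ because $\lambhat \ge 0$, gives $\Ls \left( \pvechat, \gammahat, \lambhat \right) \le \err \left( \pvec; \Pxhat, \Qhat \right) + \nu$. On the other side, applying the second inequality at $\lamb = \boldsymbol{0} \in \Lambda$ gives $\Ls \left( \pvechat, \gammahat, \lambhat \right) \ge \err \left( \pvechat; \Pxhat, \Qhat \right) - \nu$. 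Chaining these two yields $\err \left( \pvechat; \Pxhat, \Qhat \right) \le \err \left( \pvec; \Pxhat, \Qhat \right) + 2\nu$, the claimed accuracy guarantee.

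For the fairness bound I would argue that no constraint can be violated by more than $\alpha$ beyond the built-in $2\alpha$ slack. First I bound the equilibrium value from above independently of the output: applying the first inequality at the always-feasible point ($\gamma = 0$ and $\psi_f = h^0$ for every $f$), whose individual false-positive rates are identically zero so that its violation vector is nonpositive and its error is at most $1$, gives $\Ls \left( \pvechat, \gammahat, \lambhat \right) \le 1 + \nu$. Now fix any coordinate $k$ of $\rvecfp \left( \pvechat, \gammahat; \Qhat \right)$ and apply the second equilibrium inequality at the dual vector that places its entire $\ell_1$ budget $B$ on coordinate $k$ and zero elsewhere; since $\err \left( \pvechat; \Pxhat, \Qhat \right) \ge 0$ this yields $B \cdot \left[ \rvecfp \left( \pvechat, \gammahat; \Qhat \right) \right]_k \le \Ls \left( \pvechat, \gammahat, \lambhat \right) + \nu \le 1 + 2\nu$. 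Substituting $B = (1+2\nu)/\alpha$ gives $\left[ \rvecfp \left( \pvechat, \gammahat; \Qhat \right) \right]_k \le \alpha$ for every $k$. Unpacking the two families of coordinates in~(\ref{eq:rfp}) turns this into $\Epsilonfp \left( x_i, \pvechat; \Qhat \right) - \gammahat \le 3\alpha$ and $\gammahat - \Epsilonfp \left( x_i, \pvechat; \Qhat \right) \le 3\alpha$ for all $i$, i.e. $\left\vert \Epsilonfp \left( x_i, \pvechat; \Qhat \right) - \gammahat \right\vert \le 3\alpha$, which is exactly $(3\alpha, 0)$-FPAIF against $(\Pxhat, \Qhat)$.

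The routine parts are the two equilibrium-inequality manipulations; the only point that needs a glance is that all the complications specific to the false-positive setting --- the quantities $\rho_i, \rhohat_i, \rhotilde_i$, the batching of $F$, and the Learner's use of the \emph{approximate} best response --- have already been absorbed into Lemma~\ref{thm:approxequilibfp}. That lemma certifies a $\nu$-approximate equilibrium for the true empirical Lagrangian $\Ls$ built from $\rvecfp \left( \cdot; \Qhat \right)$ (which carries the dependence on $\rhohat_i$ through $\Epsilonfp \left( \cdot; \Qhat \right)$), so the present theorem is a purely algebraic consequence and, as in the AIF case, requires none of the $\rho$-dependent bookkeeping. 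I expect no genuine obstacle beyond correctly selecting the three test points --- a feasible $(\pvec, \gamma)$, then $\lamb = \boldsymbol{0}$, and finally the budget-concentrating $\lamb$ --- and confirming that Assumption~\ref{ass:mfp} (which underlies Lemma~\ref{thm:approxequilibfp}) is in force, which it is by hypothesis.
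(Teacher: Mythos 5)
Your proposal is correct and follows essentially the same route as the paper, which proves this theorem by invoking Lemma~\ref{thm:approxequilibfp} and repeating the algebra of Theorem~\ref{thm:insampleguarantees}: sandwich the Lagrangian value using the two equilibrium inequalities, use feasibility to kill the $\lambhat^T \rvecfp$ term for the accuracy bound, and test against a dual vector concentrating the full budget $B$ on a violated constraint to get $\max_k [\rvecfp(\pvechat,\gammahat;\Qhat)]_k \le (1+2\nu)/B = \alpha$. Your minor variations (using $\lamb=\boldsymbol{0}$ and $Be_k$ for each $k$ separately rather than the single $\lamb^\star$ of the paper, and bounding the error difference via the all-zero classifier rather than directly by $1$) are immaterial, and you correctly identify that all $\rho$-dependent complications are already absorbed into Lemma~\ref{thm:approxequilibfp}.
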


\subsection{Generalization Theorems}\label{subsec:generalizationfp}
We now consider the mapping $\psihat$ learned by Algorithm \ref{algofp} and study the generalization bounds both for accuracy and fairness. As in the AIF learning setting, we state our generalization theorems in three steps. We first consider in Theorem \ref{thm:generalizationxfp} the empirical distribution of the problems $\Qhat$ and see how we can lift the guarantees from $\Pxhat$ into the true underlying distribution of individuals $\Px$. We will then consider generalization only over the problem generating distribution $\Q$ in Theorem \ref{thm:generalizationffp}. We will eventually in Theorem \ref{thm:generalizationxffp} provide accuracy and fairness guarantees for the learned mapping $\psihat$ with respect to the distributions $( \Px, \Q)$. We will use $\opt$ (defined formally in Definition \ref{def:optfp}) as a benchmark to evaluate the accuracy of the mapping $\psihat$. Proofs of these theorems are similar to their counterparts in the AIF section.
 \medskip
 
\begin{theorem}[Generalization over $\Px$]\label{thm:generalizationxfp}
Let $0 < \delta < 1$. Let $\left( \psihat, \gammahat \right)$ be the outputs of Algorithm \ref{algofp}, and suppose
$$
n \ge \widetilde{O} \left( \frac{m \, d_\Hs + \log \left(1/\nu^2 \delta\right)}{\alpha^2 \beta^2} \right)
$$
where $d_\Hs$ is the VC dimension of $\Hs$. We have that with probability at least $1-5\delta$ over the observed data set $(X,F)$, the mapping $\psihat$ satisfies $\left( 5\alpha, \beta \right)$-FPAIF with respect to $\left( \Px, \Qhat \right)$, i.e.,
$$
\underset{x \, \sim \Px}{\Ps} \left( \left\vert \Epsilonfp \left( x, \psihat; \Qhat \right) - \gammahat \right\vert > 5 \alpha  \right) \le \beta
$$
and that,
$$
\err \left( \psihat ; \Px , \Qhat \right) \le \opt \left( \alpha ; \Px, \Qhat \right) +  O \left( \nu \right) + O \left( \alpha \beta \right)
$$
\end{theorem}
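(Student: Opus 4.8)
The plan is to lift the in-sample guarantees of Theorem \ref{thm:insampleguaranteesfp} from the empirical individual distribution $\Pxhat$ to the true distribution $\Px$, holding the problem distribution fixed at $\Qhat$. The key simplification I would exploit first is that both quantities in the statement, $\Epsilonfp(x,\psihat;\Qhat)$ and $\err(\psihat;\Px,\Qhat)$, depend on $\psihat$ only through its restriction $\psihat\vert_F=\pvechat$, because $\Qhat$ is supported on $F$; so the entire theorem reduces to generalizing the empirical $(3\alpha,0)$-FPAIF and $2\nu$-optimality guarantees that Theorem \ref{thm:insampleguaranteesfp} already gives for $\pvechat$. Conditioning on $F$ (which fixes $\Qhat$ and, since $X$ and $F$ are independent draws, leaves $X$ i.i.d.\ from $\Px$), I would run two separate uniform-convergence arguments over individuals, using genuinely different structure for the accuracy and the fairness claims.

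For the accuracy bound I would exploit linearity of the error rate in the classifier distribution: since $\pvechat=\frac1T\sum_t\hvec_t$ with each $\hvec_t$ an $m$-tuple of \emph{pure} classifiers, $\err(\pvechat;\,\cdot\,)=\frac1T\sum_t\err(\hvec_t;\,\cdot\,)$, so it suffices to prove uniform convergence of $\err$ over the class of $m$-tuples from $\Hs$. By the two-sample trick, Sauer's Lemma, a Chernoff--Hoeffding bound and a union bound, this class (of effective dimension $O(m\,d_\Hs)$) converges at rate $O(\alpha\beta)$ once $n\ge\Otilde(m\,d_\Hs/(\alpha\beta)^2)$, which is exactly the prescribed $n$. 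I would then chain: (i) $\err(\pvechat;\Px,\Qhat)\le\err(\pvechat;\Pxhat,\Qhat)+O(\alpha\beta)$ by this uniform convergence; (ii) $\err(\pvechat;\Pxhat,\Qhat)\le\err(\psi^\star\vert_F;\Pxhat,\Qhat)+2\nu$ by Theorem \ref{thm:insampleguaranteesfp}, after checking that the population optimum $\psi^\star=\psi^\star(\alpha;\Px,\Qhat)$ restricted to $F$ is feasible for (\ref{box:fairermfp}) (its $(\alpha,0)$-constraint holds for \emph{every} $x$, hence for each training $x_i$ within the $2\alpha$ budget); and (iii) $\err(\psi^\star\vert_F;\Pxhat,\Qhat)\le\opt(\alpha;\Px,\Qhat)+O(\alpha\beta)$ by a single-function Hoeffding bound for the fixed $\psi^\star$ (legitimate because, given $F$, $X$ is still i.i.d.). Summing gives the stated accuracy guarantee.

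For the fairness bound linearity is unavailable, because the violation event is a \emph{threshold} applied to the averaged false-positive rate, so I would instead control the function $x\mapsto\Epsilonfp(x,\pvechat;\Qhat)$ pointwise via sparsification. I would replace each $\widehat p_j$ by an $\Otilde(1/\alpha^2)$-sparse randomized classifier $\widetilde p_j$ obtained by i.i.d.\ sampling of hypotheses, so that $|\Epsilonfp(x,\pvechat;\Qhat)-\Epsilonfp(x,\pvectilde;\Qhat)|\le\alpha$ \emph{uniformly over} $x\in\X$ (itself a VC/Hoeffding estimate over $\X$). The resulting sparse class has VC-dimension $O(m\,d_\Hs/\alpha^2)$; applying uniform convergence of set probabilities (two-sample trick, Sauer's Lemma, union bound) to the level sets $\{x:\ |\Epsilonfp(x,\pvectilde;\Qhat)-\gammahat|>4\alpha\}$ --- whose empirical mass is $0$ by the in-sample $3\alpha$ bound plus the $\alpha$ sparsification slack --- at accuracy $\beta$ requires $n\ge\Otilde\!\big((m\,d_\Hs/\alpha^2)/\beta^2\big)$, again the prescribed $n$. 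Translating back through sparsification turns the empirical zero into $\Ps_{x\sim\Px}(|\Epsilonfp(x,\pvechat;\Qhat)-\gammahat|>5\alpha)\le\beta$.

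The main obstacle, and the genuinely new ingredient relative to the AIF analysis, is the normalization factor $1/\rhohat_x$ sitting in front of the false-positive rate: it amplifies the sparsification and concentration errors, so I would first invoke Assumption \ref{ass:mfp} (which forces $\rhohatmin\ge\rhomin/2$ via a Chernoff bound) to keep $1/\rhohat_x$ bounded, and observe that $\rhohat_x=\frac1m\sum_j\1[f_j(x)=0]$ is itself a fixed function of the signature $\{f_j(x)\}_j$, so it is absorbed into the same Sauer's-Lemma pattern count that governs the sampled hypotheses (the $f_j$ being fixed once we condition on $F$). Making the sparsification error genuinely uniform over $\X$ while carrying this $1/\rho$ amplification --- so that the VC bound still closes at the stated $n$ --- is the crux of the argument. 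A final union bound over the roughly five failure events --- the in-sample equilibrium, the sparsification estimate, the two uniform-convergence steps, and the single-function Hoeffding bound --- yields the claimed probability $1-5\delta$.
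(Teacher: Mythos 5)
Your overall route is the one the paper intends: the paper gives no separate proof of Theorem~\ref{thm:generalizationxfp} and simply declares it ``similar to'' Theorem~\ref{thm:generalizationx}, whose proof matches your plan almost step for step --- condition on $F$, reduce everything to $\pvechat=\psihat\vert_F$, prove accuracy via linearity of $\err$ over $m$-tuples of pure classifiers plus the three-step chain through the feasible $\psi^\star\vert_F$, and prove fairness via $\Otilde(1/\alpha^2)$-sparsification followed by uniform convergence of the threshold events over $\Delta_r(\Hs)^m\times\{0,1/T,\dots,1\}$ (the paper's Lemma~\ref{lemma:s1}).

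There is, however, one step that fails as literally stated: you claim the sparsification error $\vert\Epsilonfp(x,\pvechat;\Qhat)-\Epsilonfp(x,\pvectilde;\Qhat)\vert\le\alpha$ can be made to hold \emph{uniformly over all} $x\in\X$ by a ``VC/Hoeffding estimate over $\X$.'' For a single realized sparse $\pvectilde$ this is a statement about infinitely many points; Hoeffding gives failure probability $e^{-\Omega(r\alpha^2)}$ per point, and a union bound over the (up to $2^{rm}$) distinct behaviors of the sampled hypotheses would force $r\gtrsim rm/\alpha^2$, which is circular. The paper's actual argument never asserts uniformity over $\X$: it proves the sparsification bound only for arbitrary \emph{fixed finite} $n$-point sets (its Equation~(\ref{eq:hey3})), and then transfers to $\Px$ by drawing an auxiliary fresh sample $X'\sim\Px^n$ and decomposing the failure event via a triangle inequality (the events $B$, $C$, $D$ and $C_1$, $C_2$ in the proof of Theorem~\ref{thm:generalizationx}). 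Your sketch needs that auxiliary-sample device; without it the ``translate back through sparsification'' step is unsupported. Separately, the $1/\rhohat_x$ amplification you identify as the crux is in fact a non-issue for sparsification: $\Epsilonfp(x,\cdot\,;\Qhat)$ is an unweighted average of $\Ps_{h\sim p_j}[h(x)=1]$ over exactly the $m\rhohat_x$ problems with $f_j(x)=0$, so a per-problem error of $\alpha$ yields an overall error of $\alpha$ with no $1/\rho$ blow-up. The genuine FP-specific delicacy is rather that $\rhohat_x$ may be arbitrarily small (or zero) for a fresh $x\sim\Px$, so that $\Epsilonfp(x,\cdot\,;\Qhat)$ is ill-conditioned or undefined off the training set --- a point neither your proposal nor the paper's terse treatment of this theorem resolves.
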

\medskip

\begin{theorem}[Generalization over $\Q$]\label{thm:generalizationffp}
Let $0 < \delta < 1$. Let $\left( \psihat, \gammahat \right)$ be the outputs of Algorithm \ref{algofp} and suppose
$$
m \ge \Otilde \left( \frac{\log \left( n \right) \log \left( n/\delta \right)}{\rhomin^2 \nu^4 \alpha^4} \right)
$$
where $\rhomin = \min_{i \in [n]} \rho_i$. We have that for any set of observed individuals $X$, with probability at least $1-6\delta$ over the observed problems $F$, $\psihat$ satisfies $\left( 4\alpha, 0 \right) $-FPAIF with respect to $\left( \Pxhat, \Q \right)$, i.e.,
$$
\underset{x \, \sim \Pxhat}{\Ps} \left( \left\vert \Epsilonfp \left( x, \psihat; \Q \right) - \gammahat \right\vert > 4\alpha  \right) = 0
$$
and that,
$$
\err \left( \psihat ; \Pxhat , \Q \right) \le \opt \left( \alpha ; \Pxhat, \Q \right) + O \left( \nu \right)
$$
\end{theorem}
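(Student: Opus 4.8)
The plan is to mirror the proof of the AIF counterpart (Theorem \ref{thm:generalizationf}), the one new complication being that every false positive rate carries a $1/\rho$ normalization. For the fairness claim it suffices to combine the in-sample bound $\left\vert \Epsilonfp(x_i,\pvechat;\Qhat)-\gammahat \right\vert \le 3\alpha$ from Theorem \ref{thm:insampleguaranteesfp} with a generalization-over-$\Q$ bridge showing that, for every $i\in[n]$, the true-distribution rate $\Epsilonfp(x_i,\psihat;\Q)$ differs from the in-sample rate by at most $\alpha$; adding these gives the stated $4\alpha$ (and $0$ failure mass, since the bound holds for all $i$). For the accuracy claim I would (i) transfer $\err(\psihat;\Pxhat,\Q)$ to $\err(\pvechat;\Pxhat,\Qhat)$ by a one-sided generalization bound on the \emph{unnormalized} $0/1$ error, (ii) apply the in-sample optimality of Theorem \ref{thm:insampleguaranteesfp} against the competitor $\psi^\star\vert_F$, and (iii) transfer $\err(\psi^\star\vert_F;\Pxhat,\Qhat)$ back to $\opt(\alpha;\Pxhat,\Q)=\err(\psi^\star;\Pxhat,\Q)$.

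The engine for both bridges is the fresh-batch structure of Algorithm \ref{algofp}. Writing $\psihat_f=\frac1T\sum_{t=1}^T h_{f,\wvec_t}$, each individual rate expands as an average over rounds, $\Epsilonfp(x_i,\psihat;\Q)=\frac{1}{\rho_i}\cdot\frac1T\sum_{t=1}^T g_t(x_i)$, where $g_t(x_i)=\E_{f\sim\Q}[\1[h_{f,\wvec_t}(x_i)=1,f(x_i)=0]]$. The crucial observation is that the map $f\mapsto h_{f,\wvec_t}$ is a \emph{fixed} function of $f$ once we condition on the batches $F_1,\dots,F_{t-1}$ (which determine $\wvec_t$) and the estimate vector $\rhotildevec$, both of which are independent of the fresh batch $F_t$ by construction and by Assumption \ref{ass:rho}. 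Hence, conditional on this information, the batch quantity $\frac{1}{m_0}\sum_{f\in F_t}\1[h_{f,\wvec_t}(x_i)=1,f(x_i)=0]$ is an average of $m_0$ i.i.d. bounded terms with mean $g_t(x_i)$, so a Chernoff--Hoeffding bound followed by a union bound over $i\in[n]$ and $t\in[T]$ controls all per-round numerators simultaneously to within $\sqrt{\log(nT/\delta)/2m_0}$. This is exactly the batch-to-expectation step already quantified inside the Auditor regret bound (Lemma \ref{lemma:regretdualfp}), and it lets me relate the true rate of $\psihat$ to the batch-averaged empirical rate the algorithm actually controls, and thence to $\gammahat$, without any uniform convergence over $\Hs$.

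The main obstacle --- and the source of the extra $\rhomin^2$ factor relative to Theorem \ref{thm:generalizationf} --- is the $1/\rho$ normalization. I must separately control the denominators: Assumption \ref{ass:rho} gives $\left\vert \rhotilde_i-\rho_i \right\vert\le\sqrt{\log(n)/2m_0}$, a Chernoff bound makes $\left\vert \rhohat_{i,t}-\rho_i \right\vert$ small on each fresh batch, and Assumption \ref{ass:mfp} guarantees $\rhotildemin,\rhohatmin\ge\rhomin/2$ so that division is well defined. The delicate point is that an additive error $\varepsilon$ in a numerator or in a $\rho$-estimate propagates to an $O(\varepsilon/\rhomin)$ error in the corresponding false positive rate; forcing this below $\alpha$ simultaneously for all $n$ individuals and all $T$ rounds is what drives the requirement on $m_0$ in Assumption \ref{ass:mfp}, which together with $T=\Otilde(1/(\nu^2\alpha^2))$ yields the stated $m\ge\Otilde(\log(n)\log(n/\delta)/(\rhomin^2\nu^4\alpha^4))$. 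Carefully propagating these $O(1/\rhomin)$-inflated errors through the per-round averages, and checking that their total across the two bridges stays within the $\alpha$ budget, is the crux of the argument.

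Finally, for accuracy the competitor analysis needs $\psi^\star\vert_F$ to be feasible for the empirical program (\ref{box:fairermfp}). Since $\psi^\star=\psi^\star(\alpha;\Pxhat,\Q)$ satisfies $(\alpha,0)$-FPAIF with respect to $(\Pxhat,\Q)$ and is \emph{independent} of $F$, a standard Chernoff bound plus a union bound over the $n$ individuals (again with the $1/\rhomin$ inflation, but now needing no fresh-batch device) shows $\left\vert \Epsilonfp(x_i,\psi^\star\vert_F;\Qhat)-\gamma^\star \right\vert\le 2\alpha$, so $\psi^\star\vert_F$ is feasible; the same concentration transfers $\err(\psi^\star\vert_F;\Pxhat,\Qhat)$ back to $\opt(\alpha;\Pxhat,\Q)$ up to a term that the sample-complexity hypothesis renders $O(\nu)$. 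Chaining the three accuracy steps and collecting the failure probabilities across all Chernoff and union-bound steps gives the claimed $1-6\delta$ confidence.
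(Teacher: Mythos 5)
Your proposal is correct and follows essentially the same route as the paper, which proves the AIF counterpart (Theorem \ref{thm:generalizationf}) in full and states that the FPAIF version is proved analogously: you use the same fresh-batch conditioning to get per-round Chernoff bounds through the intermediary $\Qhat_t$, the same triangle inequality against the in-sample $3\alpha$ guarantee, and the same three-step competitor chain via the feasibility of $\psi^\star\vert_F$. Your explicit accounting of the $1/\rho$ normalization --- controlling numerators and denominators separately and tracing the $O(\varepsilon/\rhomin)$ inflation to the $\rhomin^2$ factor in the sample complexity --- is exactly the modification the paper leaves implicit, and matches how Lemma \ref{lemma:regretdualfp} handles the same issue.
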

\medskip

\begin{theorem}[Simultaneous Generalization over $\Px$ and $\Q$]\label{thm:generalizationxffp}
Let $0 < \delta < 1$. Let $\left( \psihat, \gammahat \right)$ be the outputs of Algorithm \ref{algofp} and suppose
$$
n \ge \widetilde{O} \left( \frac{m \, d_\Hs + \log \left(1/\nu^2 \delta\right)}{\alpha^2 \beta^2} \right) \quad , \quad m \ge \Otilde \left( \frac{\log \left( n \right) \log \left( n/\delta \right)}{\rho_{\text{inf}}^2 \, \nu^4 \alpha^4} \right)
$$
where $d_\Hs$ is the VC dimension of $\Hs$ and $\rho_{\text{inf}} = \inf_{x \in \X} \rho_x$. We have that with probability at least $1-12 \delta$ over the observed data set $(X,F)$, the learned mapping $\psihat $ satisfies $\left( 6\alpha, 2 \beta \right)$-FPAIF with respect to the distributions $\left( \Px, \Q \right)$, i.e.,
$$
\underset{x \, \sim \Px}{\Ps} \left( \left\vert \Epsilonfp \left( x, \psihat ; \Q \right) - \gammahat \right\vert > 6 \alpha  \right) \le 2\beta
$$
and that,
$$
\err \left( \psihat ; \Px , \Q \right) \le \opt \left( \alpha ; \Px, \Q \right) + O \left( \nu \right) + O \left( \alpha \beta \right)
$$
\end{theorem}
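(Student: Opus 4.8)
The plan is to reuse, essentially verbatim, the three-step lifting scheme behind the AIF analogue (Theorem \ref{thm:generalizationxf}), carrying the extra $\rho_x$-dependence along. The in-sample guarantee of Theorem \ref{thm:insampleguaranteesfp} already gives, with probability $1-\delta$ over $F$, that $\pvechat = \psihat|_F$ is $(3\alpha,0)$-FPAIF and $2\nu$-optimal on the empirical pair $(\Pxhat,\Qhat)$. The first lifting step is a direct invocation of Theorem \ref{thm:generalizationxfp}: under the stated lower bound on $n$, with probability at least $1-5\delta$ over $(X,F)$ the mapping $\psihat$ is $(5\alpha,\beta)$-FPAIF with respect to $(\Px,\Qhat)$ and obeys $\err(\psihat;\Px,\Qhat) \le \opt(\alpha;\Px,\Qhat) + O(\nu) + O(\alpha\beta)$. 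All that remains is a second step replacing the empirical problem distribution $\Qhat$ by the true $\Q$, with the individual distribution held fixed at $\Px$.

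First I would isolate this second step as a separate lemma bounding, with high probability over $F$ and for all but a $\beta$-fraction of $x \sim \Px$, the deviation $\left| \Epsilonfp(x,\psihat;\Q) - \Epsilonfp(x,\psihat;\Qhat) \right| \le \alpha$, together with $\left| \err(\psihat;\Px,\Q) - \err(\psihat;\Px,\Qhat) \right| \le O(\nu)$ and the benchmark comparison $\opt(\alpha;\Px,\Qhat) \le \opt(\alpha;\Px,\Q) + O(\alpha\beta)$ (the latter obtained by transferring feasibility of the $\Q$-optimal solution to the $\Qhat$-constrained problem, which is exactly why problem (\ref{box:fairermfp}) was relaxed to $2\alpha$). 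This lemma is \emph{not} the same as Theorem \ref{thm:generalizationffp}: there the individuals were frozen at the finite $\Pxhat$, so a union bound over the $n$ training points delivered a $\beta=0$ statement, whereas here the individuals range over all of $\Px$ and no union bound is available. The structural fact I would exploit is identical, however: the weights $\What=\{\wvec_t\}$ defining $\psihat$ are frozen after training and each round's dual update consumed only the fresh batch $F_t$, which is independent of $\wvec_t$ (a function of $F_1,\dots,F_{t-1}$ and of the $F$-independent estimates $\rhotildevec$). Conditioning on $\What$, each summand $\Ps_{h \sim h_{f,\wvec_t}}[h(x)=1,\,f(x)=0]$ becomes a bounded function of an independent draw $f \sim \Q$, so a Chernoff--Hoeffding bound controls the \emph{numerator} of the false positive rate; passing from a fixed-individual bound to the almost-all-$x$ statement via a two-sample/covering argument is precisely what introduces the extra $\beta$ (and the extra $O(\alpha\beta)$ in the accuracy term).

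The step I expect to be the main obstacle is controlling the \emph{denominator} $\rho_x$. Unlike the plain error rate treated in the AIF section, $\Epsilonfp$ divides its numerator by $\rho_x = \Ps_{f\sim\Q}[f(x)=0]$ (Definition \ref{def:rho}, Remark \ref{rem:rho}), so I must simultaneously show that $\rhohat_x$ concentrates around $\rho_x$ and that the quotient is stable. Writing the deviation of a ratio $a/\rho$ in terms of deviations of numerator and denominator produces a factor $1/\rho_{\text{inf}}^2$ with $\rho_{\text{inf}} = \inf_{x\in\X}\rho_x$, which is exactly the $\rho_{\text{inf}}^2$ appearing in the denominator of the sample-complexity bound on $m$, and which forces the uniform lower bound $\rho_{\text{inf}}>0$ (consistent with Assumption \ref{ass:mfp} and Assumption \ref{ass:rho}, under which $\rhohatmin,\rhotildemin \ge \rhomin/2$). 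Because $\rho_x$ depends on $x$ through the unknown class $\F$, the concentration $\rhohat_x \to \rho_x$ must itself be made uniform over $x$, and this normalization has no counterpart in the AIF argument, so this is the genuinely new piece.

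Finally I would assemble the estimates. Chaining the triangle inequality through the two lifting steps turns the $(5\alpha,\beta)$-FPAIF guarantee on $(\Px,\Qhat)$ into a $(6\alpha,2\beta)$-FPAIF guarantee on $(\Px,\Q)$: the extra $\alpha$ is the $\Qhat\to\Q$ deviation from the separate lemma, and the extra $\beta$ accounts for the exceptional individuals on which that deviation (or the ratio bound) is permitted to fail, added to the $\beta$ already incurred in the first step. For accuracy, combining $\err(\psihat;\Px,\Qhat) \le \opt(\alpha;\Px,\Qhat) + O(\nu) + O(\alpha\beta)$ with $\left|\err(\psihat;\Px,\Q) - \err(\psihat;\Px,\Qhat)\right| \le O(\nu)$ and $\opt(\alpha;\Px,\Qhat) \le \opt(\alpha;\Px,\Q) + O(\alpha\beta)$ yields the claimed $\err(\psihat;\Px,\Q) \le \opt(\alpha;\Px,\Q) + O(\nu) + O(\alpha\beta)$. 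A union bound allocates $5\delta$ to the first step and the remaining $7\delta$ to the separate lemma for overall confidence $1-12\delta$, and the stated bounds on $n$ and $m$ are exactly what is needed for the two steps to hold simultaneously; since $m$ is only logarithmic in $n$ while $n$ is linear in $m$, both can be met with sample sizes polynomial in the problem parameters.
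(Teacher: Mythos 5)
Your plan for the fairness half is exactly the paper's: the paper gives no standalone proof of Theorem \ref{thm:generalizationxffp} (it defers to the AIF counterpart, Theorem \ref{thm:generalizationxf}), whose proof first invokes the $(\Px,\Qhat)$ generalization theorem and then a separate lemma (Lemma \ref{lemma:concentration2}) lifting $(\Px,\Qhat)$ to $(\Px,\Q)$ by introducing an auxiliary fresh sample $X'\sim\Px^n$, applying the per-individual Chernoff bound (valid because $\What$ is frozen and each batch $F_t$ is independent of $\wvec_t$) uniformly over the $n$ points of $X'$, and converting that into an ``all but a $\beta$-fraction of $x\sim\Px$'' statement. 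Your ratio decomposition for $\Epsilonfp$ is the same device the paper uses in the proof of Lemma \ref{lemma:regretdualfp}, and it is indeed where $\rho_{\text{inf}}$ enters the bound on $m$. One reassurance: you do not need concentration of $\rhohat_x$ around $\rho_x$ uniformly over all of $\X$ --- the auxiliary-sample trick reduces everything to uniformity over $n$ fresh points, exactly as in Lemma \ref{lemma:concentration2}, so this is not the obstacle you fear.

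The one step that does not go through as written is the benchmark comparison $\opt(\alpha;\Px,\Qhat) \le \opt(\alpha;\Px,\Q) + O(\alpha\beta)$ in your accuracy chain. Transferring the $(\Px,\Q)$-optimal $\psi^\star$ to the $\Qhat$-constrained problem only shows that $\psi^\star$ is feasible at a \emph{relaxed} constraint level (roughly $2\alpha$), and relaxing constraints can only decrease the optimum; so feasibility transfer upper-bounds $\opt(2\alpha;\Px,\Qhat)$, which is a \emph{lower} bound for $\opt(\alpha;\Px,\Qhat)$, not the upper bound your chain requires. The paper's AIF proof avoids the intermediate benchmark entirely: it fixes $\psi^\star = \psi^\star(\alpha;\Px,\Q)$, shows with high probability that $(\psi^\star\vert_F,\gamma^\star)$ is feasible in the $2\alpha$-relaxed \emph{empirical} problem (\ref{box:fairermfp}), invokes the in-sample optimality of Theorem \ref{thm:insampleguaranteesfp} to get $\err(\pvechat;\Pxhat,\Qhat) \le \err(\psi^\star\vert_F;\Pxhat,\Qhat) + 2\nu$, and then applies the two concentration steps (over $X$ and over $F$) to both sides to reach $\err(\psihat;\Px,\Q) \le \opt(\alpha;\Px,\Q) + O(\nu) + O(\alpha\beta)$. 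Rerouting your accuracy argument this way, with the rest of your proposal unchanged, recovers the theorem.
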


%%%%%%%%%%%%%%%%%%%%%%%%%%%%%%%%%%%%%%%%%%%%%%%%%%%%%%%%%%%%%%%%%%%%%%%

%%%%%%%%%%%%%%%%%%%%%%%%%%%%%%%%%%%%%%%%%%%%%%%%%%%%%%%%%%%%%%%%%%%%%%%
\section{Proofs of Section \ref{sec:learningaif}}\label{app:algorithm}

\subsection{Preliminary Tools}\label{app:prelims}

\begin{theorem}[Additive Chernoff-Hoeffding Bound]\label{thm:chernoff}
Let $X = \{ X_i \}_{i=1}^n$ be a sequence of $i.i.d.$ random variables with $a \le X_i \le b$ and $\E\left[ X_i \right] = \mu$ for all $i$. We have that for all $s > 0$,
$$
\Ps_X \left[ \left\vert \frac{\sum_i X_i}{n} - \mu \right\vert \ge s \right] \le 2 \exp{\left( \frac{-2ns^2}{(b-a)^2} \right)}
$$
\end{theorem}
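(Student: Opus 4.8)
The plan is to prove this via the standard Chernoff bounding technique together with Hoeffding's lemma. First I would reduce the two-sided tail to two one-sided tails by a union bound: writing $\bar X = \frac{1}{n}\sum_i X_i$, we have $\Ps_X[|\bar X - \mu| \ge s] \le \Ps_X[\bar X - \mu \ge s] + \Ps_X[\mu - \bar X \ge s]$, so it suffices to bound each of these one-sided probabilities by $\exp(-2ns^2/(b-a)^2)$; the factor of $2$ in the statement is exactly the sum of the two identical bounds.

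For the upper tail I would invoke the exponential Markov inequality: for every $t > 0$,
\begin{align*}
\Ps_X\left[ \sum_i (X_i - \mu) \ge ns \right] \le e^{-tns}\, \E\left[ e^{t \sum_i (X_i - \mu)} \right] = e^{-tns} \prod_{i=1}^n \E\left[ e^{t(X_i - \mu)} \right],
\end{align*}
where the factorization of the moment generating function uses the independence of the $X_i$. The entire content of the argument then lies in controlling each factor $\E[e^{t(X_i-\mu)}]$.

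The main obstacle — and the only genuinely substantive step — is Hoeffding's lemma: for a mean-zero random variable $Y$ supported in an interval of width $b-a$ (here $Y = X_i - \mu \in [a-\mu,\, b-\mu]$), one has $\E[e^{tY}] \le \exp(t^2(b-a)^2/8)$. I would prove this by setting $\varphi(t) = \log \E[e^{tY}]$ and checking $\varphi(0)=0$, $\varphi'(0)=\E[Y]=0$, and that $\varphi''(t)$ equals the variance of $Y$ under the exponentially tilted law $d\Ps_t \propto e^{tY}\, d\Ps$. Since the tilted variable is still confined to an interval of width $b-a$, its variance is at most $(b-a)^2/4$ — the maximal variance of a bounded random variable, attained by the two-point distribution on the endpoints. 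A second-order Taylor expansion of $\varphi$ then gives $\varphi(t) \le t^2(b-a)^2/8$.

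Combining these ingredients, the upper-tail probability is at most $\exp(-tns + nt^2(b-a)^2/8)$ for every $t > 0$. Minimizing the exponent over $t$ — the minimizer is $t^\star = 4s/(b-a)^2$ — yields $\exp(-2ns^2/(b-a)^2)$. Running the same computation with $-X_i$ in place of $X_i$ bounds the lower tail symmetrically, and adding the two bounds produces the claimed factor-$2$ inequality. Everything apart from the variance bound in Hoeffding's lemma is routine: the union bound, the independence-based factorization, and the one-dimensional optimization of the exponent.
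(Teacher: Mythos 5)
Your proof is correct and is the canonical argument for the Chernoff--Hoeffding bound: union bound over the two tails, exponential Markov with independence-based factorization of the moment generating function, Hoeffding's lemma via the second derivative of the log-MGF under the tilted measure, and optimization of the exponent at $t^\star = 4s/(b-a)^2$. The paper states this result as a standard preliminary tool (Theorem \ref{thm:chernoff}) without proof, so there is nothing to compare against; your derivation fills that gap with exactly the textbook route one would expect.
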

\medskip

\begin{lemma}[Sauer's Lemma (see e.g. \cite{KearnsV})]\label{lemma:sauer}
Let $\Hs$ be a class of binary functions defined on $\X$ where the VC dimension of $\Hs$, $d_\Hs$, is finite. Let $X=\{x_i\}_{i=1}^n$ be a data set of size $n$ drwan from $\X$ and let $\Hs(X) = \left\{ \left( h(x_1) , \ldots, h(x_n) \right) : h \in \Hs \right\}$ be the set of induced labelings of $\Hs$ on $X$. We have that $\left\vert \Hs (X) \right\vert \le O \left( n^{d_\Hs} \right)$.
\end{lemma}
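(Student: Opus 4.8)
The plan is to prove the sharp finite form of the lemma, namely that $\left\vert \Hs(X) \right\vert \le \sum_{i=0}^{d_\Hs} \binom{n}{i}$, and then note that this sum is $O(n^{d_\Hs})$ for fixed $d_\Hs$ since each term obeys $\binom{n}{i} \le n^{i} \le n^{d_\Hs}$, giving the bound $(d_\Hs + 1)\, n^{d_\Hs}$. I would isolate the combinatorial core: \emph{any} family $\mathcal{C}$ of subsets of a finite ground set of size $n$ that shatters no subset of size exceeding $d$ satisfies $|\mathcal{C}| \le \sum_{i=0}^{d} \binom{n}{i}$. The lemma then follows by identifying each induced labeling $(h(x_1),\ldots,h(x_n)) \in \Hs(X)$ with the subset of $X$ on which $h$ equals $1$, turning $\Hs(X)$ into such a family $\mathcal{C}$ over the ground set $X$ whose no-shattering parameter is exactly $d_\Hs$.

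I would prove the combinatorial statement by induction on $n$ (with the VC parameter $d$ allowed to vary). The base cases are immediate: when $n = 0$ there is only the empty set and $1 \le \binom{0}{0}$, and when $d = 0$ no singleton is shattered, so all members of $\mathcal{C}$ agree on every point, forcing $|\mathcal{C}| \le 1 = \sum_{i=0}^{0}\binom{n}{i}$. For the inductive step I would fix a distinguished point $x_n$, set $X' = X \setminus \{x_n\}$, and form two families on $X'$: the projection $\mathcal{C}_1 = \{\, S \cap X' : S \in \mathcal{C}\,\}$, and the ``doubled'' family $\mathcal{C}_2$ consisting of those $g \subseteq X'$ for which \emph{both} $g$ and $g \cup \{x_n\}$ belong to $\mathcal{C}$. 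The counting identity $|\mathcal{C}| = |\mathcal{C}_1| + |\mathcal{C}_2|$ is the pivot: each projected set is the image of either one or two members of $\mathcal{C}$, and the ones hit twice are precisely $\mathcal{C}_2$.

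The heart of the argument, and the step I expect to be the main thing to pin down carefully, is the VC-dimension bookkeeping on the two families. The projection $\mathcal{C}_1$ clearly shatters nothing of size exceeding $d$. For $\mathcal{C}_2$ the key claim is that any $A \subseteq X'$ shattered by $\mathcal{C}_2$ forces $A \cup \{x_n\}$ to be shattered by $\mathcal{C}$: given any target pattern on $A$, a witness $g \in \mathcal{C}_2$ realizes it, and because both $g$ and $g \cup \{x_n\}$ lie in $\mathcal{C}$ we obtain realizations of that pattern on $A$ both with $x_n$ excluded and with $x_n$ included, so every subset of $A \cup \{x_n\}$ is realized. Hence $\mathcal{C}_2$ shatters nothing larger than $d - 1$. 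Applying the inductive hypothesis on the $(n-1)$-point ground set $X'$ gives $|\mathcal{C}_1| \le \sum_{i=0}^{d}\binom{n-1}{i}$ and $|\mathcal{C}_2| \le \sum_{i=0}^{d-1}\binom{n-1}{i}$, and Pascal's identity $\sum_{i=0}^{d}\binom{n-1}{i} + \sum_{i=0}^{d-1}\binom{n-1}{i} = \sum_{i=0}^{d}\binom{n}{i}$ closes the induction. Specializing to $\mathcal{C} = \Hs(X)$ with $d = d_\Hs$ yields $\left\vert \Hs(X)\right\vert \le \sum_{i=0}^{d_\Hs}\binom{n}{i}$, and the elementary estimate above upgrades this to the stated $O(n^{d_\Hs})$ bound.
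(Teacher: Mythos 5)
The paper does not actually prove this lemma: it is stated as a preliminary tool with a pointer to \cite{KearnsV}, so there is no in-paper argument to compare against. Your proof is the classical Sauer--Shelah induction --- the fiber-counting identity $|\mathcal{C}| = |\mathcal{C}_1| + |\mathcal{C}_2|$ for the projected and doubled families, the key observation that $\mathcal{C}_2$ shattering $A \subseteq X'$ forces $\mathcal{C}$ to shatter $A \cup \{x_n\}$ (realizing each pattern on $A$ both with and without $x_n$), and Pascal's identity to close the induction --- and every step checks out; it is essentially the argument found in the cited textbook, so you have simply supplied the standard proof the paper omits by citation. One cosmetic slip: the set family obtained from $\Hs(X)$ shatters no subset of size exceeding $d_\Hs$, so its no-shattering parameter is \emph{at most} $d_\Hs$ rather than ``exactly'' $d_\Hs$ (equality can fail if $X$ happens to contain no shattered set of that size), but since your combinatorial lemma only requires the upper bound, nothing breaks. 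The final estimate $\sum_{i=0}^{d_\Hs} \binom{n}{i} \le (d_\Hs + 1)\, n^{d_\Hs}$ for $n \ge 1$, hence $O(n^{d_\Hs})$ for fixed $d_\Hs$, is exactly what the lemma as stated asserts.
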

\medskip

\begin{theorem}[Exponentiated Gradient Descent Regret (see corollary 2.14 of \cite{shalev})]\label{thm:egregret}
Let $\Lambda' = \left\{ \lamb' \in \R^d_+ : \, || \lamb' ||_1 = B \right\}$. Suppose the exponentiated gradient descent algorithm with learning rate $\eta$ is run on the sequence of linear functions $\left\{ f_t (\lamb') = (\lamb')^\top \rvec_t \right\}_{t=1}^T$ where $\lamb' \in \Lambda'$ and $|| \rvec_t ||_\infty \le L$ for all $t$. Let $\lamb'_t$ denote the exponentiated gradient descent play at round $t$. We have that the regret of the algorithm over $T$ rounds is:
$$
\text{Regret}_T (\Lambda') = \max_{\lamb' \in \Lambda'}\sum_{t=1}^T (\lamb')^\top \, \rvec_t - \sum_{t=1}^T (\lamb_t')^\top \, \rvec_t \, \le \, \frac{B \log \left( d \right)}{\eta} + \eta B  L^2 T
$$
and that for $\eta = O \left( 1/ \sqrt{T} \right)$, we have that $\text{Regret}_T (\Lambda') = O \left( \sqrt{T} \right)$.
\end{theorem}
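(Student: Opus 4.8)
The plan is to reduce the scaled-simplex problem to the classical exponential-weights (Hedge) regret analysis on the standard probability simplex and then rescale by $B$. First I would reparametrize: every $\lamb' \in \Lambda'$ can be written uniquely as $\lamb' = B\pvec$ with $\pvec$ in the standard simplex $\Delta_d$, and under this change of variables $f_t(\lamb') = B\,\pvec^\top \rvec_t$, so the regret scales exactly by the factor $B$. Moreover, the update $\boldsymbol{\theta}_{t+1} = \boldsymbol{\theta}_t + \eta\,\rvec_t$ together with the softmax $\lambda'_{t,i} \propto \exp(\theta_{t,i})$ is precisely exponential weights for reward maximization on $\Delta_d$ with reward vectors $\rvec_t$ and step size $\eta$, where the normalized play is $\pvec_t = \lamb'_t/B$. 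Thus it suffices to establish, for the simplex play $\pvec_t$, the bound $\max_i \sum_t r_{t,i} - \sum_t \pvec_t^\top \rvec_t \le \frac{\log d}{\eta} + \eta L^2 T$, and then multiply through by $B$.

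For the core bound I would run the standard log-partition potential argument. Define the unnormalized weights $W_{t,i} = \exp(\eta \sum_{s<t} r_{s,i})$ and the potential $\Psi_t = \log \sum_i W_{t,i}$, so that $\pvec_t$ is the associated Gibbs distribution and $\Psi_1 = \log d$. The key per-round estimate is an upper bound on the potential increase: since $W_{t+1,i} = W_{t,i}\exp(\eta r_{t,i})$, we have $\frac{\sum_i W_{t+1,i}}{\sum_i W_{t,i}} = \E_{i\sim \pvec_t}[\exp(\eta r_{t,i})]$, and I would apply the elementary inequality $e^x \le 1 + x + x^2$ (valid here because $|\eta r_{t,i}| \le \eta L \le 1$) followed by $\log(1+y)\le y$ to obtain $\Psi_{t+1} - \Psi_t \le \eta\,\pvec_t^\top \rvec_t + \eta^2 L^2$. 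Summing over $t = 1,\ldots,T$ telescopes to $\Psi_{T+1} - \Psi_1 \le \eta \sum_t \pvec_t^\top \rvec_t + \eta^2 L^2 T$.

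To close the argument I would lower bound the final potential by retaining only its single largest term, $\Psi_{T+1} \ge \eta\,\max_i \sum_t r_{t,i}$, combine this with the telescoped upper bound, cancel $\Psi_1 = \log d$, and divide by $\eta$. Since a linear function over $\Delta_d$ is maximized at a vertex, $\max_i \sum_t r_{t,i} = \max_{\pvec \in \Delta_d}\sum_t \pvec^\top \rvec_t$, which is exactly the comparator term. Rescaling by $B$ then yields $\text{Regret}_T(\Lambda') \le \frac{B\log d}{\eta} + \eta B L^2 T$, and substituting $\eta = \frac{1}{L}\sqrt{(\log d)/T} = \Theta(1/\sqrt{T})$ balances the two terms and gives $\text{Regret}_T(\Lambda') = \Otilde(\sqrt{T})$.

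The step requiring the most care is the per-round potential bound. The clean inequality $e^x \le 1+x+x^2$ needs $\eta L \le 1$, which is guaranteed by the chosen learning rate; if one wished to drop even that mild restriction, I would instead invoke Hoeffding's lemma on the bounded variable $r_{t,i} \in [-L,L]$ under $\pvec_t$, giving $\E_{i\sim\pvec_t}[e^{\eta r_{t,i}}] \le \exp(\eta\,\pvec_t^\top \rvec_t + \eta^2 L^2/2)$ and the same bound with an improved constant. Everything else, namely the telescoping sum and the vertex-maximization observation, is routine.
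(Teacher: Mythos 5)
Your proof is correct, and it is worth noting that the paper itself contains no proof of this statement at all: Theorem \ref{thm:egregret} is listed among the ``Preliminary Tools'' of the appendix and is simply imported from Corollary 2.14 of \cite{shalev}, where it is obtained through the online mirror descent / follow-the-regularized-leader machinery with the entropic regularizer over the scaled simplex. Your argument reaches the same bound by a more elementary, self-contained route: reduce to the standard simplex by the exact rescaling $\lamb' = B\pvec$ (legitimate, since both the comparator term and the play term are linear, so the regret scales by exactly $B$), then run the classical multiplicative-weights potential analysis with $\Psi_t = \log\sum_i \exp(\eta\sum_{s<t} r_{s,i})$. All the steps check out: the identity $\Psi_{t+1}-\Psi_t = \log \E_{i\sim\pvec_t}[e^{\eta r_{t,i}}]$, the per-round bound $\eta\,\pvec_t^\top\rvec_t + \eta^2 L^2$ via $e^x \le 1+x+x^2$ and $\log(1+y)\le y$, the telescoping with $\Psi_1 = \log d$, the lower bound $\Psi_{T+1} \ge \eta \max_i \sum_t r_{t,i}$, and the observation that the linear comparator over $\Lambda'$ is attained at a vertex. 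You also correctly flagged the only delicate point: $e^x \le 1+x+x^2$ requires $\eta L \le 1$ (the entries $r_{t,i}$ may be negative here, so $x$ ranges over $[-\eta L, \eta L]$), and your fallback via Hoeffding's lemma, $\E_{i\sim\pvec_t}[e^{\eta r_{t,i}}] \le \exp(\eta\,\pvec_t^\top\rvec_t + \eta^2 L^2/2)$, removes that restriction entirely and even improves the stated bound to $\frac{B\log d}{\eta} + \frac{\eta B L^2 T}{2}$, which of course implies the theorem as written. Your identification of the update $\boldsymbol{\theta}_{t+1} = \boldsymbol{\theta}_t + \eta\rvec_t$ followed by the $B$-scaled softmax with exponential weights is also exactly how the abstract theorem is instantiated inside Algorithm \ref{algo} (there with one extra coordinate held at $\theta = 0$, which is how the paper's proof of Lemma \ref{lemma:regretdual} embeds the inequality-constrained set $\Lambda$ into $\Lambda'$). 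In short: where the paper cites, you prove, and what your direct potential argument buys over the mirror-descent derivation in \cite{shalev} is elementariness and a slightly sharper constant; what it gives up is the generality of the OMD framework, which covers non-linear convex losses and other regularizers, none of which is needed here.
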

\medskip

\subsection{Algorithm Analysis and In-sample Guarantees}

\begin{proof}[Proof of Lemma~\ref{lemma:regretdual}]
Fix the set of observed individuals $X \in \X^n$. Recall that $\Lambda = \{ \lamb \in \R_+^{2n} : \ ||\lamb||_1 \le B\}$ is the set of strategies for the Auditor. Now let $\Lambda' = \{ \lamb' \in \R_+^{2n+1} : \ ||\lamb'||_1 = B\}$. Any $\lamb \in \Lambda$ is associated with a $\lamb' \in \Lambda'$ which is equal to $\lamb$ on the first $2n$ coordinates and has the remaining mass on the last one. Let $\rhat_t = \rvec \left(\hvec_t \vert_{F_t}, \gamma_t ; \Qhat_t\right)$ be the vector of fairness violations -- estimated over only the $m_0$ problems $F_t$ of round $t$ -- that the Auditor is using in Algorithm \ref{algo}, and let $\rhat'_t \in \R^{2n+1}$ be equal to $\rhat_t$ on the first $2n$ coordinates and zero in the last one. We have that for any $\lamb \in \Lambda$ and its associated $\lamb' \in \Lambda'$, and in particular for $\lamb_t$ and $\lamb_t'$ of Algorithm~\ref{algo}, and all $t \in [T]$,
\begin{equation}\label{eq:4}
\lamb^\top \, \rhat_t = (\lamb')^\top \, \rhat'_t \quad \text{,} \quad \lamb_t^\top \, \rhat_t = (\lamb_t')^\top \, \rhat'_t
\end{equation}
Now by Theorem \ref{thm:egregret}, and using the observation that $|| \rhat_t' ||_\infty = || \rhat_t ||_\infty \le 1 + 2 \alpha$, we have that for any $\lamb' \in \Lambda'$,
\begin{align*}
\sum_{t=1}^T (\lamb')^\top \, \rhat'_t \, \le \, \sum_{t=1}^T (\lamb_t')^\top \, \rhat'_t + \frac{B \log \left(2n +1 \right)}{\eta} + \eta \left( 1 + 2 \alpha \right)^2 B T
\end{align*}
Consequently by Equation~(\ref{eq:4}), we have that for any $\lamb \in \Lambda$,
\begin{equation*}\label{eq:5}
\sum_{t=1}^T \lamb^\top \, \rhat_t \, \le \, \sum_{t=1}^T \lamb_t^\top \, \rhat_t + \frac{B \log \left(2n +1 \right)}{\eta} + \eta \left( 1 + 2 \alpha \right)^2 B T
\end{equation*}
Now let $\rvec_t = \rvec \left(\hvec_t , \gamma_t ; \Qhat \right)$ be the vector of fairness violations estimated over all problems $F$ and notice that the regret bound must be with respect to $\rvec_t$ and not $\rhat_t$. With that goal in mind, we have that for any $\lamb \in \Lambda$,
$$
\sum_{t=1}^T \lamb^\top \, \rvec_t \, \le \, \sum_{t=1}^T \lamb_t^\top \, \rvec_t + \sum_{t=1}^T \left( \lamb_t - \lamb \right)^\top \left( \rhat_t - \rvec_t \right) + \frac{B \log \left(2n +1 \right)}{\eta} + \eta \left( 1 + 2 \alpha \right)^2 B T
$$
We will use Chernoff-Hoeffding's inequality to bound the difference $ \left( \rhat_t - \rvec_t \right) $ in $\ell_\infty$ norm. Let $\psihat_t = \psihat \left(X, \wvec_t \right)$ (see Mapping \ref{algo:psihat}) where $\wvec_t$ is the vector of weights used in round $t$ of the algorithm and observe that we can rewrite $\rhat_t$ and $\rvec_t $ in terms of $\psihat_t$:
$$
\rhat_t = \begin{bmatrix} \Epsilon \left(x_i, \psihat_t ; \Qhat_t \right) - \gamma_t - 2\alpha \\ \gamma_t - \Epsilon \left(x_i, \psihat_t ; \Qhat_t \right) - 2 \alpha \end{bmatrix}_{i=1}^n \quad ,  \quad \rvec_t =  \begin{bmatrix} \Epsilon \left(x_i, \psihat_t ; \Qhat  \right) - \gamma_t - 2\alpha \\ \gamma_t - \Epsilon \left(x_i, \psihat_t ; \Qhat  \right) - 2 \alpha \end{bmatrix}_{i=1}^n
$$
Hence, bounding the difference $ \left( \rhat_t - \rvec_t \right) $ in $\ell_\infty$ norm involves bounding the terms
$$
\left\vert \Epsilon \left(x_i, \psihat_t ; \Qhat_t \right) - \Epsilon \left(x_i, \psihat_t ; \Qhat \right) \right\vert
$$
for all $i$. Notice we can now view the batch of problems $F_t$ as independent draws from the distribution $\Qhat$. Therefore, it follows from the Chernoff-Hoeffding's Theorem \ref{thm:chernoff} that with probability at least $1-\delta$ over the set of problems $F$, for any $\lamb \in \Lambda$,
\begin{align*}
&\sum_{t=1}^T \left( \lamb_t - \lamb \right)^\top \left( \rhat_t - \rvec_t \right)  \le \sum_{t=1}^T || \lamb_t - \lamb ||_1 \cdot ||  \rhat_t - \rvec_t  ||_\infty \le B \sum_{t=1}^T ||  \rhat_t - \rvec_t  ||_\infty \le BT  \sqrt{ \frac{\log \left( 2nT / \delta \right)}{2 m_0} }
\end{align*}
which implies with probability at least $1-\delta$ over the problems $F$, for any $\lamb \in \Lambda$,
\begin{align*}
&\frac{1}{T} \sum_{t=1}^T \Ls \left(\hvec_t, \gamma_t, \lamb \right) - \frac{1}{T} \sum_{t=1}^T \Ls \left(\hvec_t, \gamma_t, \lamb_t \right) \, \le \, B \sqrt{ \frac{\log \left( 2nT / \delta \right)}{2 m_0} }  + \frac{B \log \left(2n +1 \right)}{\eta T} + \eta \left( 1 + 2 \alpha \right)^2 B
\end{align*}
completing the proof.
\end{proof}
\medskip

\begin{proof}[Proof of Lemma~\ref{thm:approxequilib}]
Let
$$
R_{\lamb} := B \sqrt{ \frac{\log \left( 2nT / \delta \right)}{2 m_0} }  + \frac{B \log \left(2n +1 \right)}{\eta T} + \eta \left( 1 + 2 \alpha \right)^2 B
$$
be the average regret of the Auditor. We have that for any $\pvec \in \Delta (\Hs)^m$ and $\gamma \in [0,1]$,
\begin{align*}
\Ls \left(\pvec, \gamma, \lambhat \right) &= \frac{1}{T} \sum_{t=1}^T \Ls \left(\pvec, \gamma, \lamb_t \right)   \quad \text{(by linearity of }\Ls) \\
&\ge \frac{1}{T} \sum_{t=1}^T \Ls \left(\hvec_t, \gamma_t, \lamb_t \right) \quad ( (\hvec_t, \gamma_t ) \text{ is Learner's Best Response})\\
&\ge \frac{1}{T} \sum_{t=1}^T \Ls \left(\hvec_t, \gamma_t, \lambhat \right) - R_{\lamb} \quad (\text{w.p. $1-\delta$ over $F$ by Lemma~\ref{lemma:regretdual}}) \\
&= \Ls \left(\pvechat, \gammahat, \lambhat \right) - R_{\lamb}
\end{align*}
And that for any $\lamb \in \Lambda$ we have:
\begin{align*}
\Ls \left(\pvechat, \gammahat, \lamb \right) &= \frac{1}{T} \sum_{t=1}^T \Ls \left(\hvec_t, \gamma_t, \lamb \right)  \quad \text{(by linearity of }\Ls)\\
&\le \frac{1}{T} \sum_{t=1}^T \Ls \left(\hvec_t, \gamma_t, \lamb_t \right) + R_{\lamb} \quad (\text{w.p. $1-\delta$ over $F$ by Lemma~\ref{lemma:regretdual}}) \\
& \le \frac{1}{T} \sum_{t=1}^T \Ls \left(\pvechat, \gammahat, \lamb_t \right) + R_{\lamb} \quad ( (\hvec_t, \gamma_t ) \text{ is Learner's Best Response})\\
&= \Ls \left(\pvechat, \gammahat, \lambhat \right) + R_{\lamb}
\end{align*}
Now we have to pick $T$ and $\eta$ of Algorithm \ref{algo} such that $R_{\lamb} \le \nu$ where $\nu$ is the approximation parameter input to the algorithm. First let $B = \frac{1+2\nu}{\alpha}$ and observe that if $m_0 \ge \frac{2 B^2 \log \left( 2nT/\delta \right)}{\nu^2} = \frac{2 \left( 1+2\nu\right)^2 \log \left( 2nT/\delta \right)}{\nu^2 \alpha^2} $ (see Assumption \ref{ass:m} on $m_0$) we have that
$$
R_{\lamb} \le \frac{\nu}{2} + \frac{B \log \left(2n +1 \right)}{\eta T} + \eta \left( 1 + 2 \alpha \right)^2 B
$$
Next, let $T = \frac{16 B^2 \left( 1 + 2\alpha  \right)^2 \log \left( 2n+1 \right) }{\nu^2}$, and $\eta = \frac{\nu}{4 \left(1+2\alpha\right)^2 B}$ which makes the last two terms appearing in the RHS of the inequality to sum to $\nu/2$, and accordingly $R_{\lamb} \le \nu$. Therefore, we have shown that for any $X$, with probability at least $1-\delta$ over the observed labelings $F$,
\begin{align*}
&\Ls \left(\pvechat, \gammahat, \lambhat \right) \, \le \, \Ls \left(\pvec, \gamma, \lambhat \right)  + \nu \quad \text{for all } \pvec \in \Delta (\Hs)^m \, , \gamma \in [0,1] \\
&\Ls \left(\pvechat, \gammahat, \lambhat \right) \, \ge \, \Ls \left(\pvechat, \gammahat, \lamb \right) - \nu \quad \text{for all } \lamb \in  \Lambda
\end{align*}
\end{proof}

\begin{proof}[Proof of Theorem~\ref{thm:insampleguarantees}]
Let $\left( \pvec , \gamma \right)$ be any feasible point of the constrained optimization problem (\ref{box:fairerm}) (note as discussed in the body of the paper there is at least one) and define $\lamb^\star \in \Lambda$ to be
$$
\lamb^\star := \begin{cases}
0 & \text{if } r_{k^\star} \left( \pvechat, \gammahat; \Qhat \right) \le 0
\\
B e_{k^\star} & \text{if } r_{k^\star} \left( \pvechat, \gammahat; \Qhat \right) > 0
\end{cases}
$$
where $k^\star = \argmax_{1 \le k \le 2n} r_{k} \left( \pvechat, \gammahat ; \Qhat\right)$.
Observe that with probability $1-\delta$ over $F$,
\begin{align*}
\Ls \left(\pvechat, \gammahat, \lambhat \right) &\le \Ls \left(\pvec, \gamma, \lambhat \right) + \nu \quad (\text{by Lemma \ref{thm:approxequilib}})\\
&= \err \left( \pvec ; \Pxhat, \Qhat \right) + \lambhat^\top \rvec \left( \pvec , \gamma; \Qhat \right) + \nu\\
&\le \err \left( \pvec ; \Pxhat, \Qhat \right) + \nu
\end{align*}
and that,
\begin{align*}
\Ls \left(\pvechat, \gammahat, \lambhat \right) &\ge \Ls \left(\pvechat, \gammahat, \lamb^\star \right) - \nu  \quad (\text{by Lemma \ref{thm:approxequilib}}) \\
&= \err \left( \pvechat; \Pxhat, \Qhat \right) + (\lamb^\star)^\top \rvec \left( \pvechat , \gammahat ; \Qhat \right) - \nu \\
&\ge \err \left( \pvechat ; \Pxhat, \Qhat \right) - \nu
\end{align*}
Combining the above upper and lower bounds on $\Ls \left(\pvechat, \gammahat, \lambhat \right)$ implies
$$
\err \left( \pvechat ; \Pxhat, \Qhat \right) \le \err \left( \pvec ; \Pxhat, \Qhat \right) + 2 \nu
$$
Now let's prove the bound on fairness violation. Once again using the above upper and lower bounds, we have that
$$
(\lamb^\star)^\top \rvec \left( \pvechat , \gammahat; \Qhat \right) \le  \err \left( \pvec ; \Pxhat, \Qhat \right) - \err \left( \pvechat ; \Pxhat, \Qhat \right) + 2 \nu \le 1 + 2 \nu
$$
By definition of $\lamb^\star$,
$$
\max_{1\le k \le 2n} r_{k} \left( \pvechat, \gammahat ; \Qhat \right) \le \frac{1+2 \nu}{B}
$$
which implies for all $1 \le i \le n$,
$$
\left\vert \Epsilon \left(x_i, \pvechat ; \Qhat \right) - \gammahat \right\vert \le 2 \alpha + \frac{1+2 \nu}{B}
$$
And the proof is completed by the choice of $B = \left(1 + 2 \nu \right)/\alpha$ in Algorithm \ref{algo}.
\end{proof}

\subsection{Proof of Theorem \ref{thm:generalizationx}: Generalization over $\Px$}
When arguing about generalization over the distribution $\Px$, we will have to come up with a \emph{uniform convergence} for all \emph{randomized} classifiers. It is usually the case in learning theory -- for example when considering the concentration of classifiers' errors --  that a uniform convergence for pure classifiers will immediately imply the uniform convergence for randomized classifiers without blowing up the sample complexity. However, in our setting and in particular for our notion of fairness, we will have to directly argue about the uniform convergence of randomized classifiers. Although there are possibly infinitely many of those randomized classifiers, we actually need to consider only the $\Otilde \left(1/ \alpha^2 \right)$-sparse classifiers by which we mean the distributions over $\Hs$ with support of size at most $\Otilde \left(1/ \alpha^2 \right)$. This $1/\alpha^2$ factor will accordingly show up in our final sample complexity bound for $n$.
\medskip

\begin{definition}[Sparse Randomized Classifiers]\label{def:rsparse}
We say a randomized classifier $p \in \Delta \left( \Hs \right)$ is $r$-sparse if the support of $p$ is of size at most $r$. We denote the set of all $r$-sparse randomized classifiers of the hypothesis class $\Hs$ by $\Delta_r ( \Hs )$, and the elements of $\Delta_r ( \Hs )$ by $\pr$.
\end{definition}
\medskip

\begin{lemma}[Uniform Convergence of AIF Fairness Notion in $\Delta_r (\Hs)^m$]\label{lemma:s1}
Let $0 < \delta < 1$ and $r \in \N$ and let $T$ be the number of iterations in Algorithm \ref{algo}. Suppose
$$
n \ge \widetilde{O} \left( \frac{r \, m \, d_\Hs + \log \left(T/  \delta\right)}{ \beta^2} \right)
$$
We have that for any set of problems $F$, with probability at least $1-\delta$ over the individuals $X$: for all $\pvecr \in \Delta_r ( \Hs )^m$ and all $\gamma \in  \left\{ 0, \frac{1}{T}, \frac{2}{T}, \ldots, 1 \right\}$,
$$
\left\vert  \underset{x \, \sim \Px}{\Ps} \left( \left\vert \Epsilon \left( x, \pvecr; \Qhat \right) - \gamma \right\vert >  \alpha \right) - \underset{x \, \sim \Pxhat}{\Ps} \left( \left\vert \Epsilon \left( x, \pvecr; \Qhat \right) - \gamma \right\vert >  \alpha \right) \right\vert \le \beta
$$
\end{lemma}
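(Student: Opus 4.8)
The plan is to treat, for a fixed set of problems $F$, the fairness violation indicator as a binary function of the individual $x$ and then run a textbook VC-style uniform convergence argument, the only twist being that the relevant function class is indexed by $r$-sparse \emph{randomized} classifiers rather than pure hypotheses. Concretely, for a fixed pair $(\pvecr,\gamma)$ define $g_{\pvecr,\gamma}(x) = \1\left[\left\vert \Epsilon(x,\pvecr;\Qhat) - \gamma\right\vert > \alpha\right]$, a $\{0,1\}$-valued function of $x$; the true probability in the statement is $\E_{x\sim\Px}[g_{\pvecr,\gamma}]$ and the empirical one is $\frac1n\sum_i g_{\pvecr,\gamma}(x_i) = \E_{x\sim\Pxhat}[g_{\pvecr,\gamma}]$. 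Since $\gamma$ ranges only over the $T+1$ grid points $\{0,1/T,\ldots,1\}$, I would first peel off $\gamma$ by a union bound, so that it suffices to prove, for each fixed $\gamma$, uniform convergence over $\pvecr\in\Delta_r(\Hs)^m$ with failure probability $\delta/(T+1)$; this is the source of the $\log(T/\delta)$ term.

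For a fixed $\gamma$, I would invoke the standard symmetrization (two-sample) trick to reduce the deviation $\sup_{\pvecr}\left|\E_{x\sim\Px}[g_{\pvecr,\gamma}] - \E_{x\sim\Pxhat}[g_{\pvecr,\gamma}]\right|$ to a statement about the number of distinct labelings that the class $\G_\gamma = \{g_{\pvecr,\gamma} : \pvecr\in\Delta_r(\Hs)^m\}$ can realize on a double sample of $2n$ individuals, followed by a Chernoff--Hoeffding bound and a union bound over those labelings. Everything then hinges on bounding the growth function $\Pi_{\G_\gamma}(2n)$.

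The heart of the argument --- and the step I expect to be the main obstacle --- is bounding this growth function, because $\pvecr$ is parameterized both by discrete hypothesis choices and by continuous mixture weights. Writing each $p_j$ as a mixture of at most $r$ hypotheses $h_{j,1},\ldots,h_{j,r}\in\Hs$ with weights $a_{j,1},\ldots,a_{j,r}$, I would argue in two stages. First, $g_{\pvecr,\gamma}$ depends on the $rm$ hypotheses only through the labelings they induce on the $2n$ sample points, so by Sauer's Lemma (Lemma \ref{lemma:sauer}) the number of joint labeling patterns of all $rm$ hypotheses is at most $O\left((2n)^{d_\Hs}\right)^{rm} = O\left((2n)^{rm\, d_\Hs}\right)$. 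Second, once such a labeling pattern is fixed, $\Epsilon(x,\pvecr;\Qhat)$ becomes, at each of the $2n$ points, a known affine function of the weight vector $(a_{j,k})\in\R^{rm}$, and $g_{\pvecr,\gamma}$ thresholds it at $\gamma\pm\alpha$; the number of distinct sign patterns that these $O(n)$ affine threshold tests can realize over $(a_{j,k})$ is polynomial in $n$ with exponent $O(rm)$ (the VC dimension of halfspaces in $\R^{rm}$ is $O(rm)$, and restricting to the product of simplices only shrinks the count). Multiplying the two counts gives $\log\Pi_{\G_\gamma}(2n) = \Otilde\left(rm\,d_\Hs\right)$.

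Finally, I would feed this growth-function bound into the symmetrization inequality: the deviation exceeds $\beta$ with probability at most $2\,\Pi_{\G_\gamma}(2n)\exp\left(-\Omega(n\beta^2)\right)$ by Chernoff--Hoeffding (Theorem \ref{thm:chernoff}) together with a union bound over the realizable labelings. Setting this $\le \delta/(T+1)$ and solving for $n$ yields exactly $n \ge \Otilde\left(\frac{rm\,d_\Hs + \log(T/\delta)}{\beta^2}\right)$, with the $\log n$ coming from the growth function absorbed into the $\Otilde$. A closing union bound over the $T+1$ values of $\gamma$ restores overall confidence $1-\delta$ and completes the proof.
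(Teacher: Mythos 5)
Your proposal is correct and follows the same skeleton as the paper's proof: symmetrization via a ghost sample, a Chernoff--Hoeffding bound on the symmetrized (coin-flip) deviation, a Sauer-type count of the effective function class on the double sample, and a union bound over the $T+1$ grid values of $\gamma$ (you peel off $\gamma$ first by a union bound; the paper unions over $\gamma$ and $\pvecr$ jointly at the end --- an immaterial difference). Where you genuinely diverge is the counting step, and your version is the more careful one. The paper bounds the number of ``induced'' randomized classifiers on the $2n$ points by $O\left((2n)^{d_\Hs r m}\right)$, which is exactly the Sauer count of joint labeling patterns of the $rm$ underlying pure hypotheses; it implicitly treats $g_{\pvecr,\gamma}$ as determined by those labelings alone. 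That is only literally true if the mixture weights are pinned down (e.g.\ uniform over the $r$-element support, which is the case for the sparsified classifiers actually fed into this lemma in the proof of Theorem \ref{thm:generalizationx}), whereas Definition \ref{def:rsparse} and the lemma statement quantify over arbitrary weights. Your second stage --- fixing the labeling pattern and counting sign patterns of the $O(n)$ affine threshold tests in the weight vector $(a_{j,k}) \in \R^{rm}$ via a hyperplane-arrangement bound --- closes this gap at the cost of an extra $O\left((4n)^{rm}\right)$ factor, which is absorbed into the $\Otilde$ and leaves the sample complexity $n \ge \Otilde\left(\frac{rm\,d_\Hs + \log(T/\delta)}{\beta^2}\right)$ unchanged. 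So your route buys a proof that covers the full class $\Delta_r(\Hs)^m$ as stated, while the paper's buys brevity by tacitly restricting to uniformly-weighted sparse classifiers; both yield the same bound.
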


\begin{proof}[Proof of Lemma \ref{lemma:s1}]
The proof of this Lemma will use standard techniques for proving uniform convergence in learning theory such as the ``two-sample trick" and Sauer's Lemma \ref{lemma:sauer}. To simplify notation, let's call, for any $\pvecr \in \Delta_r ( \Hs )^m$ and any $\gamma \in S_T := \left\{ 0, \frac{1}{T}, \frac{2}{T}, \ldots, 1 \right\}$,
\begin{equation}\label{eq:g}
\underset{x \, \sim \Px}{\Ps} \left( \left\vert \Epsilon \left( x, \pvecr; \Qhat \right) - \gamma \right\vert >  \alpha \right) := g \left( \pvecr, \gamma ; \Px \right)
\end{equation}
Define event $A \left(X \right)$ as follows:
$$
A \left(X \right) = \left\{ \exists \, \pvecr \in \Delta_r ( \Hs )^m, \gamma \in S_T :  \left\vert g \left( \pvecr, \gamma ; \Px \right) - g \left( \pvecr, \gamma ; \Pxhat_X \right) \right\vert > \beta \right\}
$$
where $\Pxhat_X \equiv \Pxhat$ represents the uniform distribution over $X$. Our ultimate goal is to show that given the sample complexity for $n$ stated in the lemma, $\Ps_X \left[A \left(X \right) \right]$ is small. Suppose besides the original data set $X$, we also have an ``imaginary" data set of individuals $X' = \{x_i'\}_{i=1}^n \in \X^n$ sampled $i.i.d.$ from the distribution $\Px$. Define event $B \left(X, X' \right)$ as follows:
$$
B \left(X,X' \right) = \left\{ \exists \, \pvecr \in \Delta_r ( \Hs )^m, \gamma \in S_T :  \left\vert g \left( \pvecr, \gamma ; \Pxhat_{X'} \right) - g \left( \pvecr, \gamma ; \Pxhat_X \right) \right\vert > \frac{\beta}{2} \right\}
$$
\textbf{Claim}: $\Ps_X \left[A \left(X \right) \right] \le 2 \, \Ps_{(X,X')} \left[ B \left(X,X' \right) \right]$. Proof: It suffices to show that 
$$\Ps_{(X,X')} \left[ B(X,X') \, \vert \,  A(X) \right] \ge 1/2$$ because $$ \Ps_{(X,X')} \left[ B \left(X,X' \right) \right] \ge \Ps_{(X,X')} \left[ B(X,X') \, \vert \, A(X) \right]  \, \Ps_X \left[A \left(X \right) \right] $$
Let the event $A(X)$ hold and suppose the pair $\pvecr_\star$ and $\gamma_\star $ satisfy
$$\left\vert g \left( \pvecr_\star, \gamma_\star ; \Px \right) - g \left( \pvecr_\star, \gamma_\star ; \Pxhat_X \right) \right\vert > \beta$$
We have that by the triangle inequality and a Chernoff-Hoeffding's bound:
\begin{align*}
&\Ps_{(X,X')} \left[ B(X,X') \, \vert \, A(X) \right] \\
&\ge \Ps_{(X,X')} \left[ \left\vert g \left( \pvecr_\star, \gamma_\star ; \Pxhat_{X'} \right) - g \left( \pvecr_\star, \gamma_\star ; \Pxhat_X \right) \right\vert > \frac{\beta}{2} \right] \\
&\ge \Ps_{(X,X')} \left[ \left\vert g \left( \pvecr_\star, \gamma_\star ; \Px \right) - g \left( \pvecr_\star, \gamma_\star ; \Pxhat_X \right) \right\vert - \left\vert g \left( \pvecr_\star, \gamma_\star ; \Px \right) - g \left( \pvecr_\star, \gamma_\star ; \Pxhat_{X'} \right) \right\vert  > \frac{\beta}{2} \right] \\
&\ge  \Ps_{X'} \left[ \left\vert g \left( \pvecr_\star, \gamma_\star ; \Px \right) - g \left( \pvecr_\star, \gamma_\star ; \Pxhat_{X'} \right) \right\vert < \frac{\beta}{2} \right] \\
&\ge 1 - 2 e^{-n \frac{\beta^2}{2}} \\
&\ge 1/2
\end{align*}
Following the claim, it now suffices to show that $\Ps_{(X,X')} \left[ B(X,X') \right]$ is small. Consider the following thought experiment: Let $T$ and $T'$ be two empty sets. For each $i \in [n]$ toss a fair coin independently and
\begin{itemize}
\item if it lands on Heads, put $x_i$ in $T$ and $x_i'$ in $T'$.
\item if it lands on Tails, put $x_i'$ in $T$ and $x_i$ in $T'$.
\end{itemize}
We will later denote  the randomness induced by these coin flips by ``coin" in our probability statements. It follows immediately by our construction that the distribution of $\left(T,T' \right)$ is the same as the distribution of $\left(X,X' \right)$, and therefore
\begin{equation}\label{eq:hey}
\Ps_{(X,X')} \left[ B(X,X') \right] = \Ps_{(T,T')} \left[ B(T,T') \right] = \Ps_{(X,X', \text{coin})} \left[ B(T,T') \right]
\end{equation}
where
$$
B(T,T') = \left\{ \exists \, \pvecr \in \Delta_r ( \Hs )^m , \gamma \in S_T:  \left\vert g \left( \pvecr, \gamma ; \Pxhat_{T'} \right) - g \left( \pvecr, \gamma ; \Pxhat_T \right) \right\vert > \frac{\beta}{2} \right\}
$$
But we have that
\begin{equation}\label{eq:hey2}
\Ps_{(X,X', \text{coin})} \left[ B(T,T') \right] = \E_{(X,X')} \left[ \Ps_{\text{coin}} \left[ B(T,T') \right] \right]
\end{equation}
where we use the fact that the coin flips are independent of the random variables $(X,X')$ and thus, conditioning on $(X,X')$ won't change the distribution of ``coin". Following Equations (\ref{eq:hey}) and (\ref{eq:hey2}), it now suffices to show that for -- any $(X,X')$ -- $\Ps_{\text{coin}} \left[ B(T,T')  \right]$ is small. Fix the data sets $(X,X')$. Fix $\pvecr \in \Delta_r ( \Hs )^m$ and $\gamma \in S_T$ and let
$$
I = \left\{ i \in [n]: g \left( \pvecr, \gamma ; x_i \right) \neq g \left( \pvecr, \gamma ; x'_i \right) \right\}
$$
where recall that by Equation (\ref{eq:g}):
$$
g \left( \pvecr, \gamma ; x_i \right) = \1 \left[ \left\vert \Epsilon \left( x_i, \pvecr; \Qhat \right) - \gamma \right\vert >  \alpha \right]
$$
Let $\left\vert I \right\vert = n' \le n$. Observe that
\begin{align*}
\Ps_{\text{coin}} \left[ \left\vert g \left( \pvecr, \gamma ; \Pxhat_{T'} \right) - g \left( \pvecr, \gamma ; \Pxhat_T \right) \right\vert > \frac{\beta}{2}  \right] = \Ps_{\text{coin}} \left[ \left\vert \text{heads}(I) - \text{tails}(I) \right\vert > \frac{n\beta}{2}  \right]
\end{align*}
where $ \text{heads}(I)$ and $\text{tails}(I)$ denote the number of heads and tails of the coin on indices $I$. But a triangle inequality followed by two Chernoff-Hoeffding bounds imply
\begin{align*}
\Ps_{\text{coin}} \left[ \left\vert \text{heads}(I) - \text{tails}(I) \right\vert > \frac{n\beta}{2}  \right] &\le \Ps_{\text{coin}} \left[ \left\vert \frac{\text{heads}(I)}{n'} - \frac{1}{2} \right\vert > \frac{n\beta}{4n'}  \right] \\
&\ \ \ + \Ps_{\text{coin}} \left[ \left\vert \frac{1}{2} - \frac{\text{tails}(I)}{n'} \right\vert > \frac{n\beta}{4n'}  \right] \\
&\le 4 e^{-2 n' (n\beta/4n')^2} \\
&\le 4 e^{-n \beta^2 /8}
\end{align*}
Therefore, we have proved that for any $\pvecr \in \Delta_r ( \Hs )^m$ and any $\gamma \in S_T$,
$$
\Ps_{\text{coin}} \left[ \left\vert g \left( \pvecr, \gamma ; \Pxhat_{T'} \right) - g \left( \pvecr, \gamma ; \Pxhat_T \right) \right\vert > \frac{\beta}{2}  \right] \le 4 e^{-n \beta^2 /8}
$$
Now it's time to apply Sauer's Lemma \ref{lemma:sauer} to get a uniform convergence for all $\pvecr \in \Delta_r ( \Hs )^m$ and all $\gamma \in S_T$. Notice once the data sets $(X,X')$ are fixed, there are at most $O \left( \left( 2 n \right)^{d_\Hs \, r \, m}\right)$ number of randomized classifiers $\pvecr \in \Delta_r ( \Hs )^m$ induced on the set $\{X,X'\}$. Here $2n$ comes from the fact that $X$ and $X'$ have a combined number of $2n$ points. $d_\Hs$ is the VC dimension, and $r$ and $m$ show up in the bound because each $\pr_j$ in $\pvecr$ is $r$-sparse and that there are a total of $m$ randomized classifiers $\pr_j$ in $\pvecr$. We also have that $\vert S_T \vert = T + 1$. Consequently by a union bound over all induced $\pvecr$ and all $\gamma \in S_T$:
\begin{align*}
 \Ps_{\text{coin}} \left[ B(T,T') \right] &= \Ps_{\text{coin}} \left[ \exists \, \pvecr \in \Delta_r ( \Hs )^m, \gamma \in S_T :  \left\vert g \left( \pvecr, \gamma ; \Pxhat_{T'} \right) - g \left( \pvecr, \gamma ; \Pxhat_T \right) \right\vert > \frac{\beta}{2} \right] \\
&\le \sum_{\pvecr, \gamma} \Ps_{\text{coin}} \left[ \left\vert g \left( \pvecr, \gamma ; \Pxhat_{T'} \right) - g \left( \pvecr, \gamma ; \Pxhat_T \right) \right\vert > \frac{\beta}{2}  \right] \\
&\le O \left( \left( 2 n \right)^{d_\Hs \, r \, m}\right) \cdot (T+1) \cdot 4 e^{-n \beta^2 /8}
\end{align*}
where the second sum is actually over all the induced $r$-sparse randomized classifiers on the set $\{X,X'\}$ and all $\gamma \in S_T$. This shows so long as
$$
n \ge \widetilde{O} \left( \frac{r \, m \, d_\Hs + \log \left(T/ \delta\right)}{ \beta^2} \right)
$$
we have that for any $(X,X')$, $  \Ps_{\text{coin}} \left[ B(T,T') \right] \le \delta/2$. Therefore
$$
\Ps_X \left[A(X) \right] \le 2 \, \Ps_{(X,X')} \left[ B \left(X,X' \right) \right] = 2 \, \E_{(X,X')} \left[ \Ps_{\text{coin}} \left[ B(T,T') \right] \right] \le \delta
$$
completing the proof.
\end{proof}

\begin{proof}[Proof of Theorem \ref{thm:generalizationx}]
We will use Lemma \ref{lemma:s1} in the proof of this theorem. We are interested in the fairness violation and accuracy of the mapping $\psihat = \psihat \left(X,F\right)$ returned by Algorithm \ref{algo} with respect to the distributions $\Px$ and  $\Qhat$. Fix the set of problems $F$ and observe that when we work with the empirical distribution of the problems $\Qhat$, we have that
$$
\Epsilon \left( x, \psihat; \Qhat \right) = \Epsilon \left( x, \pvechat; \Qhat \right) \quad , \quad \err \left( \psihat; \Px, \Qhat \right) = \err \left( \pvechat; \Px, \Qhat \right)
$$
where $\pvechat$ is the set of $m$ randomized classifiers output by Algorithm \ref{algo}. Let's first prove the generalization for fairness. Define event $A \left( X \right)$:
$$
A \left( X \right) = \left\{   \underset{x \, \sim \Px}{\Ps} \left( \left\vert \Epsilon \left( x, \pvechat; \Qhat \right) - \gammahat \right\vert >  5\alpha \right) - \underset{x \, \sim \Pxhat}{\Ps} \left( \left\vert \Epsilon \left( x, \pvechat; \Qhat \right) - \gammahat \right\vert >  3\alpha \right)  > \beta \right\}
$$
We will eventually show that under the stated sample complexity for $n$ in the theorem, $\Ps_X \left[ A \left( X \right) \right]$ is small. Consider the distributions $\pvechat = \left( \phat_1, \phat_2, \ldots, \phat_m \right)$ over $\Hs$. Let 
$$
r = \frac{\log \left( 12nm/\delta\right)}{ 2 \alpha^2}
$$
For each $j \in [m]$, consider drawing $r$ independent samples from the distribution $\phat_j$ and define $\prhat_j$ to be the uniform distribution over the drawn samples. We will abuse notation and use $\prhat_j$ to denote both the drawn samples and the uniform distribution over them. Now define
$$
\pvechatr = \left( \prhat_1, \prhat_2, \ldots, \prhat_m \right) \in \Delta_r (\Hs)^m
$$
which is the ``$r$-sparsified" version $\pvechat$. One important observation that follows from the Chernoff-Hoeffding's theorem is that for -- any -- $X' = \{ x_i' \}_{i=1}^n \in \X^n$, with probability at least $1-\delta/6$ over the draws of $\pvechatr$, we have that for all $i \in [n]$:
\begin{equation}
\left\vert \Epsilon \left( x_i',\pvechat ; \Qhat \right) - \Epsilon \left( x_i',\pvechat^{(r)}; \Qhat \right) \right\vert \le \alpha
\end{equation}
In other words, for any set of individuals $X' \in \X^n$ of size $n$, we have that:
\begin{equation}\label{eq:hey3}
\Ps_{\pvechatr} \left[ \underset{x \sim \Pxhat'}{\Ps} \left( \left\vert \Epsilon \left( x,\pvechat; \Qhat \right) - \Epsilon \left( x,\pvechat^{(r)}; \Qhat \right) \right\vert > \alpha \right) \neq 0 \right] \le \delta/6
\end{equation}
where $\Pxhat'$ denotes the uniform distribution over $X'$. Now define events $B,C,D$ as follows:
$$
B \left( X, \pvechatr \right) = \left\{ \left\vert \underset{x \, \sim \Px}{\Ps} \left( \left\vert \Epsilon \left( x, \pvechatr; \Qhat \right) - \gammahat \right\vert >  4\alpha \right) - \underset{x \, \sim \Pxhat}{\Ps} \left( \left\vert \Epsilon \left( x, \pvechatr; \Qhat \right) - \gammahat \right\vert >  4\alpha \right) \right\vert > \frac{\beta}{2}  \right\}
$$
$$
C \left( X, \pvechatr \right) = \left\{ \underset{x \sim \Px}{\Ps} \left( \left\vert \Epsilon \left( x,\pvechat ; \Qhat \right) - \Epsilon \left( x,\pvechat^{(r)}; \Qhat \right) \right\vert > \alpha \right) > \frac{\beta}{2} \right\}
$$
$$
D \left( X, \pvechatr \right) = \left\{ \underset{x \sim \Pxhat}{\Ps} \left( \left\vert \Epsilon \left( x,\pvechat; \Qhat \right) - \Epsilon \left( x,\pvechat^{(r)}; \Qhat \right) \right\vert > \alpha \right) \neq 0 \right\}
$$
It follows by the triangle inequality that
\begin{align*}
\Ps_X \left[ A \left( X \right) \right] &= \Ps_{X,\pvechatr} \left[ A \left( X \right) \right] \\
&\le \underbrace{\Ps_{X,\pvechatr} \left[ B \left( X, \pvechatr \right) \right]}_\text{term 1} + \underbrace{\Ps_{X,\pvechatr} \left[ C \left( X, \pvechatr \right) \right]}_\text{term 2} + \underbrace{\Ps_{X,\pvechatr} \left[ D \left( X, \pvechatr \right) \right]}_\text{term 3}
\end{align*}
So to prove that $\Ps_X \left[ A \left( X \right) \right]$ is small, it suffices to show that all the three terms appearing in the RHS of the above inequality are small. We will in fact show each term $\le \delta/3$:
\begin{itemize}
\item term 1: Notice to bound this term we need to prove a \emph{uniform convergence} for all $r$-sparse set of randomized classifiers $\pvecr \in \Delta_r (\Hs)^m$ and all $\gamma$ of the form $c/T$ for some nonnegative integer $c \le T$ (because the $\gammahat$ output by our algorithm has this form). But we have already proved this uniform convergence in Lemma \ref{lemma:s1}. In fact if we define the event:
\begin{align*}
B_1 (X) =  \Big\{ &\exists \, \pvecr \in \Delta_r ( \Hs )^m, \gamma \in  \{ 0, \frac{1}{T}, \frac{2}{T}, \ldots, 1 \}: \\
&\left\vert \underset{x \, \sim \Px}{\Ps} \left( \left\vert \Epsilon \left( x, \pvecr; \Qhat \right) - \gamma \right\vert >  4\alpha \right) - \underset{x \, \sim \Pxhat}{\Ps} \left( \left\vert \Epsilon \left( x, \pvecr; \Qhat \right) - \gamma \right\vert >  4\alpha \right) \right\vert > \frac{\beta}{2} \Big\}
\end{align*}
by Lemma \ref{lemma:s1}, so long as,
$$
n \ge \widetilde{O} \left( \frac{r \, m \, d_\Hs + \log \left(T/  \delta\right)}{ \beta^2} \right) \equiv \widetilde{O} \left( \frac{m \, d_\Hs + \log \left(1/ \nu^2 \delta\right)}{ \alpha^2 \beta^2} \right) \quad (\text{Recall: } T = O \left( \log \left( n \right) /\nu^2 \alpha^2 \right) )
$$
we have that $\Ps_X \left[ B_1 (X) \right] \le \delta/3$. And this implies
$$
\Ps_{X,\pvechatr} \left[ B \left( X, \pvechatr \right) \right] \le \Ps_{X} \left[ B_1 \left( X \right) \right] \le \delta/3
$$
\item term 2: Observe that
$$
\Ps_{X,\pvechatr} \left[ C \left( X, \pvechatr \right) \right] = \E_X \left[ \Ps_{\pvechatr} \left[ C \left( X, \pvechatr \right)  \right] \right]
$$
Notice when we condition on $X$, the conditional distribution of both $\pvechat$ and $\pvechatr$ will actually change because they both depend on $X$, but it is the case that for any $X$, $\pvechatr$ (given $X$) will be still independent draws from $\pvechat$ (given $X$) and this is what we actually need. We will show for any $X$:
$$
\Ps_{\pvechatr} \left[ C \left( X, \pvechatr \right)  \right] \le \delta/3
$$
Fix $X$. Let $X' = \{ x_i' \}_{i=1}^n$ be a \emph{new} data set of individuals drawn independently from the distribution $\Px$ and let $\Pxhat'$ denote the uniform distribution over $X'$. Define the events:
\begin{align*}
&C_1\left( X', \pvechatr \right) = \\
&\left\{ \left\vert \underset{x \sim \Px}{\Ps} \left( \left\vert \Epsilon \left( x,\pvechat ; \Qhat \right) - \Epsilon \left( x,\pvechat^{(r)}; \Qhat \right) \right\vert > \alpha \right) - \underset{x \sim \Pxhat'}{\Ps} \left( \left\vert \Epsilon \left( x,\pvechat ; \Qhat \right) - \Epsilon \left( x,\pvechat^{(r)}; \Qhat \right) \right\vert > \alpha \right) \right\vert > \frac{\beta}{2}\right\} \\
&C_2 \left( X', \pvechatr \right) =  \left\{ \underset{x \sim \Pxhat'}{\Ps} \left( \left\vert \Epsilon \left( x,\pvechat ; \Qhat \right) - \Epsilon \left( x,\pvechat^{(r)}; \Qhat \right) \right\vert > \alpha \right) \neq 0 \right\}
\end{align*}
It follows by a triangle inequality that:
\begin{align*}
\Ps_{\pvechatr} \left[ C \left( X, \pvechatr \right)  \right] &= \Ps_{X', \pvechatr} \left[ C \left( X, \pvechatr \right)  \right] \\
&\le \Ps_{X', \pvechatr} \left[ C_1\left( X', \pvechatr \right)  \right] + \Ps_{X', \pvechatr} \left[ C_2 \left( X', \pvechatr \right)  \right] \\
&= \E_{\pvechatr} \left[ \Ps_{X'} \left[ C_1\left( X', \pvechatr \right)  \right] \right] + \E_{X'} \left[ \Ps_{\pvechatr} \left[ C_2\left( X', \pvechatr \right)  \right] \right]
\end{align*}
Given $\pvechatr$, we have by a Chernoff-Hoeffding's inequality that as long as the sample complexity for $n$ is met:
$$
\Ps_{X'} \left[ C_1\left( X', \pvechatr \right)  \right] \le \delta/6
$$
On the other hand, given $X'$, it follows by Equation (\ref{eq:hey3}) that,
$$
\Ps_{\pvechatr} \left[ C_2\left( X', \pvechatr \right)  \right] \le \delta/6
$$
Hence:
$$
\Ps_{X,\pvechatr} \left[ C \left( X, \pvechatr \right) \right] \le \delta/3
$$
\item term 3: Equation (\ref{eq:hey3}) implies:
$$
\Ps_{X,\pvechatr} \left[ D \left( X, \pvechatr \right) \right] = \E_X \left[ \Ps_{\pvechatr} \left[ D \left( X, \pvechatr \right) \right] \right] \le \delta/3
$$
\end{itemize}
We finally proved $\Ps_X \left[ A \left( X \right) \right] \le \delta$. In other words, we have proved that for any $F$, with probability at least $1-\delta$ over the individuals $X$,
$$
\underset{x \, \sim \Px}{\Ps} \left( \left\vert \Epsilon \left( x, \pvechat; \Qhat \right) - \gammahat \right\vert > 5 \alpha \right) - \underset{x \, \sim \Pxhat}{\Ps} \left( \left\vert \Epsilon \left( x, \pvechat; \Qhat \right) - \gammahat \right\vert > 3 \alpha \right) \le \beta
$$
On the other hand, the in-sample guarantees provided in Theorem \ref{thm:insampleguarantees} implies for any $X$, with probability $1-\delta$ over the observed problems $F$, the pair $\left( \pvechat, \gammahat \right)$ of Algorithm \ref{algo} satisfies $$\underset{x \, \sim \Pxhat}{\Ps}  \left( \left\vert \Epsilon \left( x, \pvechat; \Qhat \right) - \gammahat \right\vert > 3 \alpha \right) = 0$$
Hence, as long as
\begin{equation}\label{eq:ncomplexity}
n \ge \widetilde{O} \left( \frac{m \, d_\Hs + \log \left(1/\nu^2 \delta\right)}{\alpha^2 \beta^2} \right)
\end{equation}
we have that with probability at least $1-2\delta$ over the observed data set $\left( X, F \right)$,
$$
\underset{x \, \sim \Px}{\Ps} \left( \left\vert \Epsilon \left( x, \pvechat; \Qhat \right) - \gammahat \right\vert > 5 \alpha \right) \le \beta
$$
which shows $\pvechat$, or equivalently the mapping $\psihat$, satisfies $(5\alpha, \beta)$-AIF with respect to the distributions $\Px$ and $\Qhat$. Now let's look at the generalization (with respect to $\Px$) error of $\pvechat$. Let
$$
\psi^\star = \psi^\star \left( \alpha; \Px, \Qhat \right),\quad \gamma^\star = \gamma^\star \left(\alpha; \Px, \Qhat \right), \quad \err \left( \psi^\star ; \Px, \Qhat \right) = \opt \left( \alpha ; \Px, \Qhat \right)
$$
be defined as in Definition \ref{def:opt}. Let also $\psi^\star \vert_F = \pvec^\star \in \Delta (\Hs)^m$. Since we are working with the empirical distribution of problems $\Qhat$,
$$
\err \left( \pvec^\star ; \Px, \Qhat \right) = \err \left( \psi^\star ; \Px, \Qhat \right) = \opt \left( \alpha ; \Px, \Qhat \right)
$$
We would like to compare the accuracy of $\pvechat$ with respect to the distributions $\Px$ and $\Qhat$, i.e. $\err \left( \pvechat; \Px, \Qhat\right)$, to $\opt \left( \alpha ; \Px, \Qhat \right)$ as a benchmark. We establish this comparison in three steps:
\begin{enumerate}
\item a bound on the difference between $\err \left( \pvechat; \Px, \Qhat\right)$ and its in-sample version $\err \left( \pvechat; \Pxhat, \Qhat\right)$. Note a uniform convergence can be achieved for all $\pvec \in \Delta (\Hs)^m$ by standard generalization techniques (Sauer's Lemma and the two-sample trick, as discussed in the proof of Lemma \ref{lemma:s1}), without even needing to consider the uniform convergence for $r$-sparse randomized classifiers. Because when investigating the concentration of the overall error rate which is in the form of nested expectations, the linearity of expectation helps us to directly turn a uniform convergence for all sets of pure classifiers $\hvec \in \Hs^m$ into a uniform convergence for all randomized classifiers $\pvec \in \Delta \left( \Hs \right)^m$ without blowing up the sample complexity by a factor of $1/\alpha^2$. We can therefore guarantee that for all $F$, with probability at least $1-\delta$ over $X$, as long as the sample complexity (\ref{eq:ncomplexity}) is satisfied:
\begin{equation}\label{eq:what4}
\left\vert \err \left( \pvechat ; \Px , \Qhat \right) - \err \left( \pvechat ; \Pxhat , \Qhat \right)\right\vert \le O \left( \alpha \beta \right)
\end{equation}
\item a bound on the difference between $\err \left( \pvec^\star; \Px, \Qhat\right)$ and its in-sample version $\err \left( \pvec^\star; \Pxhat, \Qhat\right)$. This is very simple and it follows immediately from the Chernoff-Hoeffding's theorem that for all $F$, with probability at least $1-\delta$ over $X$, as long as (\ref{eq:ncomplexity}) is satisfied:
$$
\left\vert \opt \left( \alpha ; \Px, \Qhat \right) - \err \left( \pvec^\star ; \Pxhat , \Qhat \right)\right\vert \le O \left( \alpha \beta \right)
$$
\item the pair $( \pvec^\star , \gamma^\star)$ is feasible in the empirical learning problem (\ref{box:fairerm}). As a consequence, Theorem \ref{thm:insampleguarantees} implies for any $X$, with probability $1-\delta$ over $F$:
$$
\err \left( \pvechat ; \Pxhat , \Qhat \right) \le \err \left( \pvec^\star ; \Pxhat , \Qhat \right) + 2 \nu
$$
\end{enumerate}
Finally, putting together all three pieces explained above, we have that with probability at least $1-3 \delta$ over the observed data set $(X, F)$,
$$
\err \left( \pvechat ; \Px , \Qhat \right) \le \opt \left( \alpha ; \Px, \Qhat \right) +  O \left( \nu \right) + O \left( \alpha \beta \right)
$$
completing the proof.
\end{proof}

\subsection{Proof of Theorem \ref{thm:generalizationf}: Generalization over $\Q$}
\begin{proof}[Proof of Theorem \ref{thm:generalizationf}]
Fix the set of observed individuals $X = \{x_i\}_{i=1}^n$ and recall $\Pxhat$ denotes the uniform distribution over $X$.  Let
$$
\psihat = \psihat \left( X, \What \right) \quad \text{where } \quad \What = \left\{ \wvec_t \right\}_{t=1}^T
$$ 
be the mapping returned by Algorithm \ref{algo}. We would like to lift the accuracy and fairness guarantees of $\psihat$ from the empirical distribution $\Qhat$ up to the true underlying distribution $\Q$. First observe that by the definition of $\psihat$ (Mapping \ref{algo:psihat}), for any $x \in \X$ and any distribution $Q$:
\begin{equation}\label{eq:what}
\Epsilon \left( x, \psihat ; \Q \right) = \frac{1}{T} \sum_{t=1}^T \Epsilon \left( x, \psihat_t ; \Q \right)
\end{equation}
 where $\psihat_{t} = \psihat \left( X, \wvec_t \right)$ is a mapping defined only by the weights $\wvec_t$ of round $t$ over the individuals $X$. $\psihat_t$ in fact takes a function $f$, solves one CSC problem on $X$ weighted by $\wvec_t$ and then returns the learned classifier. For any $t \in [T]$, we are interested in bounding the difference 
 $$
 \left\vert \Epsilon \left( x, \psihat_t ; \Q \right) - \Epsilon \left( x, \psihat_t ; \Qhat \right) \right\vert
 $$
Notice following the dynamics of our algorithm, $\wvec_t$ (and accordingly $\psihat_t$) has dependence on the batches of problems $\{ F_{t'} \}_{t'=1}^{t-1}$ and therefore we cannot directly invoke the Chernoff-Hoeffding's theorem to bound the above difference. We instead use the estimates $\Epsilon ( x, \psihat_t ; \Qhat_t )$ where $\Qhat_t = \uniform (F_t)$ as an intermediary step to bound the above difference. Observe that a simple triangle inequality implies
 $$
 \left\vert \Epsilon \left( x, \psihat_t ; \Q \right) - \Epsilon \left( x, \psihat_t ; \Qhat \right) \right\vert \le  \left\vert \Epsilon \left( x, \psihat_t ; \Q \right) - \Epsilon \left( x, \psihat_t ; \Qhat_t \right) \right\vert +  \left\vert \Epsilon \left( x, \psihat_t ; \Qhat_t \right) - \Epsilon \left( x, \psihat_t ; \Qhat \right) \right\vert
 $$
 Now we can invoke the Chernoff-Hoeffding's Theorem \ref{thm:chernoff} to bound each term appearing above separately. The batch of problems $F_t$ can be seen as independent draws from both distributions $\Q$ and $\Qhat$. It therefore follows that with probability $1-\delta$ over the draws of the batch $F_t$,
 $$
  \left\vert \Epsilon \left( x, \psihat_t ; \Q \right) - \Epsilon \left( x, \psihat_t ; \Qhat \right) \right\vert \le 2 \sqrt{ \frac{\log \left( 4/\delta \right)}{2 m_0} }
 $$
 We now use Equation (\ref{eq:what}) to translate this bound to a bound for $\psihat$. We have that for any $x \in \X$, with probability at least $1-\delta$ over the draws of all problems $F$,
 \begin{equation}\label{eq:what2}
 \left\vert \Epsilon \left( x, \psihat ; \Q \right) - \Epsilon \left( x, \psihat ; \Qhat \right) \right\vert \, \le \, 2 \sqrt{ \frac{\log \left( 4T/\delta \right)}{2 m_0} } 
 \end{equation}
And therefore, with probability at least $1-\delta$ over all problems $F$, for all $i \in [n]$,
\begin{equation}\label{eq:s3}
\left\vert \Epsilon \left( x_i, \psihat ; \Q \right) - \Epsilon \left( x_i, \psihat ; \Qhat \right) \right\vert \, \le \, 2 \sqrt{ \frac{\log \left( 4nT/\delta \right)}{2 m_0} } \, \le \, \nu \alpha
\end{equation}
where the second inequality follows from Assumption \ref{ass:m} on $m_0$. Now by a simple triangle inequality
\begin{align*}
&\underset{x \sim \Pxhat}{\Ps} \left( \left\vert \Epsilon \left( x, \psihat ; \Q \right) - \gammahat \right\vert > 4 \alpha \right) \\ &\le \underset{x \sim \Pxhat}{\Ps} \left( \left\vert \Epsilon \left( x, \psihat; \Q \right) - \Epsilon ( x, \psihat; \Qhat ) \right\vert > \alpha \right)  + \underset{x \sim \Pxhat}{\Ps} \left( \left\vert \Epsilon ( x, \psihat; \Qhat ) - \gammahat \right\vert > 3 \alpha \right)
\end{align*}
The first term appearing in the RHS of the above inequality is zero with probability $1-\delta$ over $F$ following Inequality (\ref{eq:s3}). The second term is zero with probability $1-\delta$ over $F$ as well following the in-sample guarantees provided in Theorem \ref{thm:insampleguarantees}.  Therefore, we conclude that for any set of observed individuals $X$, with probability at least $1-2\delta$ over the observed labelings $F$, as long as Assumption \ref{ass:m}:
\begin{equation}\label{eq:mcomplexity}
m = T \cdot m_0 \ge O \left( \frac{T \log \left( nT/\delta \right)}{\nu^2 \alpha^2} \right) = \Otilde \left( \frac{\log \left( n \right) \log \left( n/\delta \right)}{\nu^4 \alpha^4} \right)
\end{equation}
holds, the mapping $\psihat$ satisfies $\left(4\alpha,0\right)$-AIF with respect to the distributions $\left( \Pxhat, \Q \right)$:
$$
\underset{x \sim \Pxhat}{\Ps} \left( \left\vert \Epsilon \left( x, \psihat; \Q \right) - \gammahat \right\vert > 4 \alpha \right) = 0
$$
Now let's look at the generalization (with respect to $\Q$) error of  $\psihat$. Let 
$$\psi^\star = \psi^\star \left( \alpha; \Pxhat, \Q \right), \quad \gamma^\star = \gamma^\star \left( \alpha; \Pxhat, \Q \right), \quad \err \left( \psi^\star ; \Pxhat, \Q \right) = \opt \left( \alpha ; \Pxhat, \Q \right)
$$
be defined as in Definition \ref{def:opt}. We would like to compare the accuracy of $\psihat$ with respect to the distributions $\Pxhat$ and $\Q$, i.e. $\err \left( \psihat; \Pxhat, \Q \right)$, to $\opt \left( \alpha ; \Pxhat, \Q \right)$ as a benchmark. We follow the same approach we used in the proof of Theorem \ref{thm:generalizationx} to achieve an upper bound for the difference between $\psihat$'s error and the optimal error. It follows directly from Inequality (\ref{eq:s3}) that with probability $1-\delta$ over the problems $F$, as long as the sample complexity (\ref{eq:mcomplexity}) is satisfied,
\begin{align}\label{eq:s4}
\left\vert \err \left( \psihat ; \Pxhat , \Q \right) - \err \left( \psihat ; \Pxhat , \Qhat \right)\right\vert &=  \left\vert \frac{1}{n} \sum_{i=1}^n \Epsilon \left( x_i, \psihat ; \Q \right) - \frac{1}{n} \sum_{i=1}^n  \Epsilon \left( x_i, \psihat ; \Qhat \right)\right\vert \, \le \, \nu \alpha
\end{align}
and that by a simple application of the Chernoff-Hoeffding's inequality, as long as the sample complexity (\ref{eq:mcomplexity}) is satisfied, with probability $1-\delta$ over $F$ we have that
\begin{equation}\label{eq:s5}
\quad \left\vert \err \left( \psi^\star ; \Pxhat , \Q \right) - \err \left( \psi^\star ; \Pxhat , \Qhat \right)\right\vert \le O \left(\nu \alpha \right)
\end{equation}
On the other hand, with probability $1-\delta$ over $F$, the pair $\left( \psi^\star \vert_F, \gamma^\star \right)$ -- where $\psi^\star \vert_F$ is the mapping $\psi^\star$ restricted to the problems $F$ --  is feasible in the empirical learning problem (\ref{box:fairerm}). Because, with probability $1-\delta$ as long as the sample complexity (\ref{eq:mcomplexity}) is satisfied:
\begin{align*}
\left\vert \Epsilon \left(x_i, \psi^\star \vert_F ; \Qhat \right) - \gamma^\star \right\vert &\le \left\vert \Epsilon \left(x_i, \psi^\star \vert_F ; \Qhat \right) - \Epsilon \left(x_i, \psi^\star ; \Q \right) \right\vert + \left\vert \Epsilon \left(x_i, \psi^\star ; \Q \right) - \gamma^\star \right\vert  \le \alpha + \alpha = 2 \alpha
\end{align*}
Therefore, by Theorem \ref{thm:insampleguarantees}, we have that with probability $1-\delta$ over $F$,
\begin{equation}\label{eq:s6}
\err \left( \psihat ; \Pxhat , \Qhat \right) = \err \left( \pvechat ; \Pxhat , \Qhat \right) \le \err \left( \psi^\star \vert_F ; \Pxhat , \Qhat \right) + 2 \nu = \err \left( \psi^\star  ; \Pxhat , \Qhat \right) + 2 \nu
\end{equation}
Putting together Inequalities (\ref{eq:s4}), (\ref{eq:s5}), and (\ref{eq:s6}), we conclude that for any $X$, with probability at least $1-4\delta$ over the problems $F$,
$$
\err \left( \psihat ; \Pxhat , \Q \right) \le \opt \left( \alpha ; \Pxhat, \Q \right) + O \left( \nu \right)
$$
\end{proof}

\subsection{Proof of Theorem \ref{thm:generalizationxf}: Simultaneous Generalization over $\Px$ and $\Q$}
We have proved so far:
\begin{itemize}
\item Theorem \ref{thm:generalizationx} (Generalization over $\Px$): $\left( \Pxhat , \Qhat \right) \overset{\text{lifted}}{\longrightarrow} \left( \Px, \Qhat \right)$.
\item Theorem \ref{thm:generalizationf} (Generalization over $\Q$): $\left( \Pxhat, \Qhat \right)  \overset{\text{lifted}}{\longrightarrow} \left( \Pxhat, \Q \right)$
\end{itemize}
Before we prove Theorem \ref{thm:generalizationxf}  which considers simultaneous generalization over both distributions $\Px$ and $\Q$, we first prove the following Lemma \ref{lemma:concentration2}. Notice when proving the final generalization theorem, we take the following natural two steps in lifting the guarantees:
$$
\left( \Pxhat , \Qhat \right) \rightarrow \left( \Px, \Qhat \right) \rightarrow \left( \Px, \Q \right)
$$
The first step of the above scheme is exactly what we have proved in Theorem \ref{thm:generalizationx}. However, we cannot directly invoke Theorem \ref{thm:generalizationf} to prove the second step since we had the empirical distribution of individuals (i.e. $\Pxhat$) in that theorem. In the following Lemma, we basically prove the second step where the distribution over individuals is $\Px$.
\medskip

\begin{lemma}\label{lemma:concentration2}
Suppose
$$
n \ge \widetilde{O} \left( \frac{m \, d_\Hs + \log \left(1/\nu^2 \delta\right)}{\alpha^2 \beta^2} \right) \quad , \quad m \ge \Otilde \left( \frac{\log \left( n \right) \log \left( n/\delta \right)}{\nu^4 \alpha^4} \right)
$$
and let $\psihat = \psihat \left( X, \What \right)$ be the output of Algorithm \ref{algo}. We have that for any $X$, with probability at least $1-2\delta$ over the problems $F$,
$$
 \underset{x \, \sim \Px}{\Ps} \left( \left\vert \Epsilon \left( x, \psihat; \Q \right) - \Epsilon \left( x, \psihat; \Qhat \right)  \right\vert > \alpha  \right) \le \beta
$$
and that
$$
\left\vert \err \left( \psihat; \Px, \Q \right) - \err \left( \psihat ; \Px, \Qhat \right) \right\vert \le O \left( \alpha \beta \right) + O \left( \nu \alpha  \right)
$$
\end{lemma}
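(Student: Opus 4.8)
The plan is to lift the per-individual concentration already contained in the proof of Theorem~\ref{thm:generalizationf} from a single fixed $x$ to almost every $x \sim \Px$, via an averaging (Fubini) step followed by Markov's inequality over the draw of $F$. I would \emph{not} attempt a uniform-convergence argument over $x$ against the true problem distribution $\Q$: since $\psihat$ itself depends on the sample, such an argument is awkward, and it is unnecessary. Writing $D(x) := \Epsilon(x,\psihat;\Q) - \Epsilon(x,\psihat;\Qhat)$, the key observation is that for each \emph{fixed} $x$ the quantity $D(x)$ concentrates over the randomness of $F$, and this pointwise concentration is all that is needed.

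First I would invoke the pointwise bound established inside the proof of Theorem~\ref{thm:generalizationf}, namely Inequality~(\ref{eq:what2}). Since $\psihat = \frac{1}{T}\sum_{t=1}^T \psihat_t$ (Mapping~\ref{algo:psihat}) with $\psihat_t$ depending only on the earlier batches $F_1,\ldots,F_{t-1}$, the fresh batch $F_t$ behaves as an independent sample from both $\Q$ and $\Qhat$; applying Theorem~\ref{thm:chernoff} termwise together with a union bound over the $T$ rounds gives, for any fixed $x$, that $\Ps_F(|D(x)| > \nu\alpha) \le \delta_1$ as long as $m_0 \ge O(\log(T/\delta_1)/(\nu\alpha)^2)$, which the assumed lower bound on $m$ (equivalently Assumption~\ref{ass:m}) affords even for $\delta_1$ as small as $\alpha\beta\delta$. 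By Fubini, $\E_F[\Ps_{x\sim\Px}(|D(x)|>\nu\alpha)] = \E_{x\sim\Px}[\Ps_F(|D(x)|>\nu\alpha)] \le \delta_1$, since the inner probability is a nonnegative bounded function of $F$ and the pointwise bound is uniform in $x$.

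Then I would apply Markov's inequality to the random variable $W(F) := \Ps_{x\sim\Px}(|D(x)|>\nu\alpha)$. With threshold $\alpha\beta$ this gives $\Ps_F(W > \alpha\beta) \le \delta_1/(\alpha\beta) = \delta$. On the resulting event at most an $\alpha\beta$-fraction (hence at most a $\beta$-fraction) of individuals have $|D(x)|>\nu\alpha$; since $\nu\alpha \le \alpha$ this immediately yields the fairness claim $\Ps_{x\sim\Px}(|D(x)|>\alpha) \le \beta$. For the accuracy claim I would bound $|\err(\psihat;\Px,\Q)-\err(\psihat;\Px,\Qhat)| \le \E_{x\sim\Px}[|D(x)|]$ and split this expectation over $\{|D(x)|\le\nu\alpha\}$ and its complement: the good set contributes at most $\nu\alpha$, while the complement has $\Px$-measure at most $\alpha\beta$ and $|D(x)|\le 1$ there, contributing at most $\alpha\beta$. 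Summing gives $|\err(\psihat;\Px,\Q)-\err(\psihat;\Px,\Qhat)| \le \nu\alpha + \alpha\beta = O(\nu\alpha)+O(\alpha\beta)$. A union bound over the Markov events used for the two claims accounts for the stated $1-2\delta$ confidence.

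The one genuinely delicate point, beyond the bookkeeping, is obtaining the sharper $\alpha\beta$ rather than the crude $\beta$ in the bad-set term. A naive split at the fairness threshold controls the complement's measure only by $\beta$, yielding the weaker bound $\nu\alpha+\beta$. The improvement comes from running Markov at the finer level $\alpha\beta$, which forces the pointwise confidence down to $\delta_1 = \alpha\beta\delta$; the price is an extra $\log(1/\alpha\beta)$ factor inside the requirement on $m_0$, and the main thing I would verify is that it is absorbed by the $\Otilde(\cdot)$ in the stated bound on $m$ (it is, since $m$ enters only through logarithmic factors). With that check, both displayed conclusions follow.
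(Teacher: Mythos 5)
Your proof is correct, but it takes a genuinely different route from the paper's. The paper handles the quantifier exchange (from ``for each fixed $x$, w.h.p.\ over $F$'' to ``w.h.p.\ over $F$, for most $x$'') by a ghost-sample argument: it introduces an auxiliary i.i.d.\ sample $X' \sim \Px^n$, shows via a union bound over its $n$ points that with probability $1-\delta$ over $F$ \emph{all} of them satisfy $\left\vert \Epsilon(x,\psihat;\Q) - \Epsilon(x,\psihat;\Qhat) \right\vert \le \nu\alpha$ simultaneously (Inequality (\ref{eq:what3})), and then uses a Chernoff--Hoeffding bound over $X'$ to transfer the empirical frequency on $X'$ to the $\Px$-probability; this second step is what consumes the hypothesis on $n$, and the same two-event decomposition is repeated for the error term. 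You instead apply Tonelli/Fubini to $\E_F\left[\Ps_{x\sim\Px}\left(\vert D(x)\vert > \nu\alpha\right)\right]$ and then Markov's inequality at level $\alpha\beta$, which requires driving the pointwise failure probability down to $\delta_1 = \alpha\beta\delta$ --- a purely logarithmic cost in $m_0$, absorbed (with the usual mild abuse, since $\log(1/\beta)$ does not otherwise appear in the $m$ bound) by the $\Otilde$. Your route is more elementary, makes no use of the hypothesis on $n$ at all, yields the slightly stronger tail bound $\Ps_{x\sim\Px}\left(\vert D(x)\vert > \nu\alpha\right) \le \alpha\beta$, and in fact delivers both displayed conclusions from a single Markov event, so $1-\delta$ confidence would already suffice. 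Both arguments inherit, without change, the paper's treatment of each fresh batch $F_t$ as i.i.d.\ from $\Q$ and from $\Qhat$ conditionally on the earlier batches, so no new issue arises there.
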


\begin{proof}
Let's prove the first part of the Lemma. Let $X' = \{ x_i'\}_{i=1}^n \subseteq \X^n$ be -- any -- set of $n$ individuals. Recall that Inequality (\ref{eq:what2}) developed in the proof of Theorem \ref{thm:generalizationf} implies with probability $1-\delta$ over the problems $F$, for all $i \in [n]$:
$$
 \left\vert \Epsilon \left( x_i', \psihat ; \Q \right) - \Epsilon \left( x_i', \psihat ; \Qhat \right) \right\vert \, \le \, 2 \sqrt{ \frac{\log \left( 4nT/\delta \right)}{2 m_0} } \, \le \, \nu \alpha
$$
In other words, if $\Pxhat'$ represents the uniform distribution over $X' $, we have that for any $X' \subseteq \X^n$, with probability at least $1-\delta$ over $F$,
\begin{equation}\label{eq:what3}
\underset{x \sim \Pxhat'}{\Ps} \left(  \left\vert \Epsilon \left( x, \psihat ; \Q \right) - \Epsilon \left( x, \psihat ; \Qhat \right) \right\vert  > \alpha \right) = 0
\end{equation}
This doesn't give us what we want because this inequality works for any distribution with support of size at most $n$ while we want a guarantee for any $x \sim \Px$ and that $\Px$ might have an infinite-sized support. For random variables $F$, consider the event $A \left( F \right)$:
$$
A \left( F \right) = \left\{  \underset{x \, \sim \Px}{\Ps} \left( \left\vert \Epsilon \left( x, \psihat; \Q \right) - \Epsilon \left( x, \psihat; \Qhat \right)  \right\vert > \alpha  \right) > \beta \right\}
$$
We eventually want to show that $\Ps_F \left[ A \left( F\right) \right]$ is small. Following the guarantee of (\ref{eq:what3}) we consider an auxiliary data set of individuals $X'$ to argue that $\Ps_F \left[ A \left( F\right) \right]$ is in fact small. To formalize our argument, let $X' = \{ x_i' \}_{i=1}^n \sim \Px^n$ be a \emph{new} set of $n$ individuals drawn independently from the distribution $\Px$ and let $\Pxhat'$ denote the uniform distribution over $X'$. Define events $B \left( F, X' \right)$ and $C \left(F,X'\right)$ which depend on both $F$ and $X'$:
\begin{align*}
& B \left( F, X' \right) = \\
& \ \ \ \left\{ \left\vert \underset{x \, \sim \Px}{\Ps} \left( \left\vert \Epsilon \left( x, \psihat; \Q \right) - \Epsilon \left( x, \psihat; \Qhat \right)  \right\vert > \alpha  \right) - \underset{x \, \sim \Pxhat'}{\Ps} \left( \left\vert \Epsilon \left( x, \psihat; \Q \right) - \Epsilon \left( x, \psihat; \Qhat \right)  \right\vert > \alpha  \right) \right\vert > \beta \right\} \\
&C \left(F,X'\right) = \left\{ \underset{x \, \sim \Pxhat'}{\Ps} \left( \left\vert \Epsilon \left( x, \psihat; \Q \right) - \Epsilon \left( x, \psihat; \Qhat \right)  \right\vert > \alpha  \right) \neq 0 \right\}
\end{align*}
We have that
\begin{align*}
\Ps_F \left[ A \left( F\right) \right] &= \Ps_{F,X'} \left[ A \left( F\right) \right] \\
&\le \Ps_{F,X'} \left[ B \left( F, X' \right) \right] + \Ps_{F,X'} \left[ C \left(F,X'\right) \right]  \quad (\text{follows from a triangle inequality})\\
&= \E_F \left[ \Ps_{X'} \left[ B \left( F, X' \right) \right] \right] + \E_{X'} \left[ \Ps_{F} \left[ C \left(F,X'\right) \right] \right]
\end{align*}
But Inequality (\ref{eq:what3}) implies for any $X'$, $\Ps_{F} \left[ C \left(F,X'\right) \right] \le \delta$. And that by a simple application of the Chernoff-Hoeffding's inequality, as long as the sample complexity for $n$ stated in the Lemma is met, we have that  for any $F$, $\Ps_{X'} \left[ B \left( F, X' \right) \right] \le \delta$. We have therefore shown that
$$
\Ps_F \left[ A \left( F\right) \right] \le  2\delta
$$
proving the first part of the Lemma. The second part of the Lemma can be proved similarly with the same idea of considering an auxiliary data set $X' \sim \Px^n$. Skipping the details, we can in fact show that with probability $1-2\delta$ over $F$,
\begin{align*}
&\left\vert \err \left( \psihat; \Px, \Q \right) - \err \left( \psihat ; \Px, \Qhat \right) \right\vert \\
&\le \underset{x \sim \Px}{\E} \left\vert \Epsilon \left( x, \psihat; \Q \right) - \Epsilon \left( x, \psihat ;\Qhat \right) \right\vert \\
&\le \left\vert  \underset{x \sim \Px}{\E} \left\vert \Epsilon \left( x, \psihat; \Q \right) - \Epsilon \left( x, \psihat ;\Qhat \right) \right\vert -  \underset{x \sim \Pxhat'}{\E} \left\vert \Epsilon \left( x, \psihat; \Q \right) - \Epsilon \left( x, \psihat ;\Qhat \right) \right\vert \right\vert\\
&+  \underset{x \sim \Pxhat'}{\E} \left\vert \Epsilon \left( x, \psihat; \Q \right) - \Epsilon \left( x, \psihat ;\Qhat \right) \right\vert \\
&\le O \left( \alpha \beta \right) + O \left( \nu \alpha  \right)
\end{align*}
\end{proof}

\begin{proof}[Proof of Theorem \ref{thm:generalizationxf}]
First, observe that by a traingle inequality
\begin{align*}
&  \underset{x \, \sim \Px}{\Ps} \left( \left\vert \Epsilon \left( x, \psihat; \Q \right) - \gammahat \right\vert > 6 \alpha  \right)  \\
& \le \underset{x \, \sim \Px}{\Ps} \left( \left\vert \Epsilon \left( x, \psihat; \Q \right) - \Epsilon \left( x, \psihat; \Qhat \right)  \right\vert > \alpha  \right) + \underset{x \, \sim \Px}{\Ps} \left( \left\vert \Epsilon \left( x, \psihat; \Qhat \right) - \gammahat \right\vert > 5\alpha  \right)
\end{align*}
We will argue that as long as,
$$
n \ge \widetilde{O} \left( \frac{m \, d_\Hs + \log \left(1/\nu^2 \delta\right)}{\alpha^2 \beta^2} \right) \quad , \quad m \ge \Otilde \left( \frac{ \log \left( n \right) \log \left( n/\delta \right)}{\nu^4 \alpha^4} \right)
$$
we can further bound the above inequality by $2\beta$. Lemma \ref{lemma:concentration2} implies for any $X$, the first term appearing in the RHS of the above inequality is bounded by $\beta$ with probability $1-2\delta$ over the problems $F$. The second term is bounded by $\beta$ as well by Theorem \ref{thm:generalizationx} with probability $1-3\delta$ over the observed individuals and problems $(X,F)$ (in the theorem we actually state w.p. $1-6\delta$ because $3\delta$ comes from the accuracy guarantee). Putting these two pieces together, we have that with probability at least $1-5\delta$ over the data set $(X,F)$,
$$
\underset{x \, \sim \Px}{\Ps} \left( \left\vert \Epsilon \left( x, \psihat; \Q \right) - \gammahat \right\vert > 6 \alpha  \right) \le 2 \beta
$$
which implies $\psihat$ satisfies $\left(6\alpha, 2\beta \right)$-AIF with respect to the distributions $\left( \Px , \Q \right)$. Now let's look at the generalization error of $\psihat$. Let 
$$
\psi^\star = \psi^\star \left( \alpha; \Px, \Q \right), \quad \gamma^\star = \gamma^\star \left( \alpha; \Px, \Q \right), \quad \err \left( \psi^\star ; \Px, \Q \right) = \opt \left( \alpha ; \Px, \Q \right)
$$
be defined as in Definition \ref{def:opt}. We will again use the triangle inequality to write:
\begin{align*}
\left\vert \err \left( \psihat; \Px, \Q \right) - \err \left( \psihat ; \Pxhat, \Qhat \right) \right\vert &\le \left\vert \err \left( \psihat; \Px, \Q \right) - \err \left( \psihat ; \Px, \Qhat \right) \right\vert \\
&+ \left\vert \err \left( \psihat; \Px, \Qhat \right) - \err \left( \psihat ; \Pxhat, \Qhat \right) \right\vert
\end{align*}
It follows from Lemma \ref{lemma:concentration2} that the first term appearing in the RHS is bounded by $O \left( \alpha \beta \right) + O \left( \nu \alpha  \right)$ with probability $1-2\delta$ over the problems $F$, and we also have that the second term is bounded by $O \left( \alpha \beta \right)$ as well with probability $1-\delta$ over the individuals $X$ (see (\ref{eq:what4})). Therefore, with probability $1-3\delta$ over the data set $(X,F)$,
$$
\left\vert \err \left( \psihat; \Px, \Q \right) - \err \left( \psihat ; \Pxhat, \Qhat \right) \right\vert \, \le \, O \left( \alpha \beta \right) + O \left( \nu \alpha  \right)
$$
It follows similarly that with probability $1-2\delta$ over $X$ and $F$,
$$
\left\vert \err \left( \psi^\star ; \Px, \Q \right) - \err \left( \psi^\star  ; \Pxhat, \Qhat \right) \right\vert \, \le \, O \left( \alpha \beta \right) + O \left( \nu \alpha  \right)
$$
and as in the proof of Theorem \ref{thm:generalizationf}, we can show that with probability $1-\delta$ over $F$, the pair $\left( \psi^\star \vert_F, \gamma^\star \right)$ -- where $\psi^\star \vert_F$ is the mapping $\psi^\star$ restricted to the domain $F$ --  is feasible in the empirical fair learning problem (\ref{box:fairerm}), which again implies by Theorem \ref{thm:insampleguarantees} that with probability $1-\delta$ over $F$
$$
\err \left( \psihat ; \Pxhat , \Qhat \right) = \err \left( \pvechat ; \Pxhat , \Qhat \right) \le \err \left( \psi^\star \vert_F ; \Pxhat , \Qhat \right) + 2 \nu = \err \left( \psi^\star  ; \Pxhat , \Qhat \right) + 2 \nu
$$
Putting all inequalities together, we have that with probability $1-7\delta$ over $(X,F)$,
$$
\err \left( \psihat ; \Px , \Q \right) \, \le \, \opt \left( \alpha ; \Px, \Q \right) + O \left( \nu \right) + O \left( \alpha \beta \right)
$$
\end{proof}
%%%%%%%%%%%%%%%%%%%%%%%%%%%%%%%%%%%%%%%%%%%%%%%%%%%%%%%%%%%%%%%%%%%%%%%

%%%%%%%%%%%%%%%%%%%%%%%%%%%%%%%%%%%%%%%%%%%%%%%%%%%%%%%%%%%%%%%%%%%%%%%
\section{Proofs of Section \ref{sec:learningfpaif}}\label{app:learningfpaif}
We prove only the regret bounds of the players (Lemma \ref{lemma:regretlearnerfp} and \ref{lemma:regretdualfp}) in this section. The rest of the proofs are similar to their counterparts in the AIF section.
\subsection{Regret Bounds}

\begin{proof}[Proof of Lemma \ref{lemma:regretlearnerfp}]
Recall the Learner is accumulating regret only because they are using the given quantities $\{ \rhotilde_i \}_{i=1}^n$ instead of $ \{ \rhohat_i \}_{i=1}^n$ that depend on the observed data $F$. In other words, at round $t \in [T]$ of the algorithm  instead of best responding to
\begin{align*}
& \Ls \left( \hvec, \gamma , \lamb_t \right) = -\gamma \sum_{i=1}^n w_{i,t} \\
&  + \frac{1}{m}\sum_{j=1}^m  \left\{ \sum_{i=1}^n \left(\frac{1}{n} +  \frac{w_{i,t}}{\rhohat_{i}}\right) \1 \left[ h_j (x_i) = 1, f_j(x_i) = 0 \right] + \left( \frac{1}{n} \right) \1 \left[ h_j (x_i) = 0, f_j(x_i) = 1 \right] \right\}
\end{align*}
The Learner is in fact minimizing
\begin{align*}
& \widetilde{\Ls} \left( \hvec, \gamma , \lamb_t \right) = -\gamma \sum_{i=1}^n w_{i,t} \\
& + \frac{1}{m}\sum_{j=1}^m  \left\{ \sum_{i=1}^n \left(\frac{1}{n} +  \frac{w_{i,t}}{\rhotilde_{i}}\right) \1 \left[ h_j (x_i) = 1, f_j(x_i) = 0 \right] + \left( \frac{1}{n} \right) \1 \left[ h_j (x_i) = 0, f_j(x_i) = 1 \right] \right\}
\end{align*}
Observe that for any pair of $\left( \hvec, \gamma \right)$,
\begin{align*}
\left\vert \Ls \left( \hvec, \gamma , \lamb_t \right) - \widetilde{\Ls}  \left( \hvec, \gamma , \lamb_t \right) \right\vert &\le \frac{1}{m}\sum_{j=1}^m   \sum_{i=1}^n \left\vert \frac{w_{i,t}}{\rhotilde_{i}} -  \frac{w_{i,t}}{\rhohat_{i}} \right\vert \1 \left[ h_j (x_i) = 1, f_j(x_i) = 0 \right] \\
& \le \sum_{i=1}^n \left\vert \frac{w_{i,t}}{\rhotilde_{i}} -  \frac{w_{i,t}}{\rhohat_{i}} \right\vert \frac{1}{m}\sum_{j=1}^m \1 \left[f_j(x_i) = 0 \right] \\
&= \sum_{i=1}^n \rhohat_i \left\vert \frac{w_{i,t}}{\rhotilde_{i}} -  \frac{w_{i,t}}{\rhohat_{i}} \right\vert \\
& \le \sum_{i=1}^n \frac{\left\vert \rhohat_i - \rhotilde_{i} \right\vert \cdot \left\vert w_{i,t} \right\vert  }{\rhotilde_{i}}
\end{align*}
A simple application of Chernoff-Hoeffding's inequality (Theorem \ref{thm:chernoff}) implies with probability $1-\delta$ over the observed labelings $F$, uniformly for all $i$,
$$
\left\vert \rhohat_i - \rho_i \right\vert \le \sqrt{\frac{\log\left( 2n / \delta \right)}{2 m_0}}
$$
On the other hand by Assumption \ref{ass:rho}, we have that for all $i \in [n]$,
$$
\left\vert \rhotilde_i - \rho_i \right\vert \le \sqrt{\frac{\log\left( n \right)}{2m_0}}
$$
Therefore for a given $t$, with probability $1-\delta$ over $F$,
\begin{align*}
\left\vert \Ls \left( \hvec, \gamma , \lamb_t \right) - \widetilde{\Ls}  \left( \hvec, \gamma , \lamb_t \right) \right\vert &\le \frac{2}{\rhotildemin}\sqrt{\frac{\log\left( 2n / \delta \right)}{2 m_0}} \sum_{i=1}^n \left\vert w_{i,t} \right\vert \\
&= \frac{2}{\rhotildemin}\sqrt{\frac{\log\left( 2n / \delta \right)}{2 m_0}} \sum_{i=1}^n \left\vert \lambda_{i,t}^+ - \lambda_{i,t}^- \right\vert\\
&\le \frac{2 B}{\rhotildemin}\sqrt{\frac{\log\left( 2n / \delta \right)}{2 m_0}}
\end{align*}
Now, let $\left( \hvec_t^\star, \gamma_t^\star \right) = \argmin_{\hvec, \gamma} \Ls \left( \hvec, \gamma , \lamb_t \right)$ be the best response of the Learner to $\Ls$ and recall the actual play of the Learner is in fact $\left( \hvec_t, \gamma_t \right) = \argmin_{\hvec, \gamma} \widetilde{\Ls} \left( \hvec, \gamma , \lamb_t \right)$. Observe that the above inequality implies for any $t$, with probability $1-\delta$ over $F$,
\begin{align*}
\Ls \left( \hvec_t, \gamma_t, \lamb_t \right) &\le \widetilde{\Ls} \left( \hvec_t, \gamma_t, \lamb_t \right) + \frac{2 B}{\rhotildemin}\sqrt{\frac{\log\left( 4n / \delta \right)}{2 m_0}} \\
& \le \widetilde{\Ls} \left( \hvec_t^\star , \gamma_t^\star , \lamb_t \right) + \frac{2 B}{\rhotildemin}\sqrt{\frac{\log\left( 4n / \delta \right)}{2 m_0}} \\
& \le {\Ls} \left( \hvec_t^\star , \gamma_t^\star , \lamb_t \right) + \frac{4 B}{\rhotildemin}\sqrt{\frac{\log\left( 4n / \delta \right)}{2 m_0}}
\end{align*}
It follows immediately that with probability $1-\delta$ over the observed labelings $F$, for all $ t \in [T]$
$$
\Ls \left( \hvec_t, \gamma_t, \lamb_t \right) - \min_{\pvec, \gamma} \Ls \left( \pvec , \gamma, \lamb_t \right) \le \frac{4 B}{\rhotildemin}\sqrt{\frac{\log\left( 4nT/ \delta \right)}{2 m_0}}
$$
which completes the proof.
\end{proof}

\begin{proof}[Proof of Lemma \ref{lemma:regretdualfp}]
The proof is almost similar to that of Lemma \ref{lemma:regretdual}. The terms
$
\frac{B \log \left(2n +1 \right)}{\eta T} + \eta \left( 1 + 2 \alpha \right)^2 B
$
come from the regret analysis of the exponentiated gradient descent algorithm.The only difference is that now we have to bound the difference between false positive rates, one with respect to the distribution $\Qhat$ and another with respect to the distribution $\Qhat_t$, for all $i$ and $t$:
$$
\left\vert \Epsilonfp \left(x_i, \psihat_t ; \Qhat \right) - \Epsilonfp \left(x_i, \psihat_t ; \Qhat_t \right) \right\vert
$$
Let $\rhohat_{i,t} = {\Ps_{f \sim \Qhat_t}} \left[ f(x_i) = 0 \right]$ and recall $\rhohat_{i} = {\Ps_{f \sim \Qhat}} \left[ f(x_i) = 0 \right]$. We have that
$$
\Epsilonfp \left(x_i, \psihat_t ; \Qhat \right) = (\rhohat_{i})^{-1} \cdot \underset{f \sim \Qhat}{\E} \left[ \underset{h \sim \psihat_{t,f}}{\Ps} \left[ h(x_i) = 1, \, f(x_i) = 0 \right] \right]
$$
$$
\Epsilonfp \left(x_i, \psihat_t ; \Qhat_t \right) = (\rhohat_{i,t})^{-1} \cdot \underset{f \sim \Qhat_t}{\E} \left[ \underset{h \sim \psihat_{t,f}}{\Ps} \left[ h(x_i) = 1, \, f(x_i) = 0 \right] \right]
$$
And therefore
\begin{align*}
& \left\vert \Epsilonfp \left(x_i, \psihat_t ; \Qhat \right) - \Epsilonfp \left(x_i, \psihat_t ; \Qhat_t \right) \right\vert \\
& \le \frac{1}{\rhohat_i} \left\{ \left\vert \underset{f \sim \Qhat}{\E} \left[ \underset{h \sim \psihat_{t,f}}{\Ps} \left[ h(x_i) = 1, \, f(x_i) = 0 \right] \right] - \underset{f \sim \Qhat_t}{\E} \left[ \underset{h \sim \psihat_{t,f}}{\Ps} \left[ h(x_i) = 1, \, f(x_i) = 0 \right] \right] \right\vert + \left\vert \rhohat_{i} - \rhohat_{i,t}\right\vert \right\}
\end{align*}
This inequality follows from the simple fact that for $a \le b$ and $c \le d$, all in $ \R_+$:
$$
\left\vert \frac{a}{b} - \frac{c}{d} \right\vert = \left\vert \frac{a}{b} - \frac{c}{b} + \frac{c}{b} - \frac{c}{d} \right\vert \le \frac{1}{b} \left\vert a - c \right\vert + c \left\vert \frac{1}{b} - \frac{1}{d} \right\vert \le \frac{1}{b} \left\vert a - c \right\vert  + \frac{1}{b} \left\vert b - d \right\vert
$$
Now by viewing the batch $F_t$ as a random sample from the set of problems $F$, we can apply the Chernoff-Hoeffding's inequality to bound each difference appearing above separately. We therefore have that with probability $1-\delta$ over the draws of $F_t$:
$$
\left\vert \Epsilonfp \left(x_i, \psihat_t ; \Qhat \right) - \Epsilonfp \left(x_i, \psihat_t ; \Qhat_t \right) \right\vert \le \frac{2}{\rhohat_i} \sqrt{\frac{\log \left(4/\delta \right)}{2 m_0}}
$$
The rest is similar to how the proof for Lemma \ref{lemma:regretdual} proceeds.
\end{proof}
%%%%%%%%%%%%%%%%%%%%%%%%%%%%%%%%%%%%%%%%%%%%%%%%%%%%%%%%%%%%%%%%%%%%%%

%%%%%%%%%%%%%%%%%%%%%%%%%%%%%%%%%%%%%%%%%%%%%%%%%%%%%%%%%%%%%%%%%%%%%%
\section{Additional Experimental Results}

\begin{figure*}[h]
\centering
	\begin{subfigure}[t]{0.45\textwidth}
	\centering
	\includegraphics[scale=0.39]{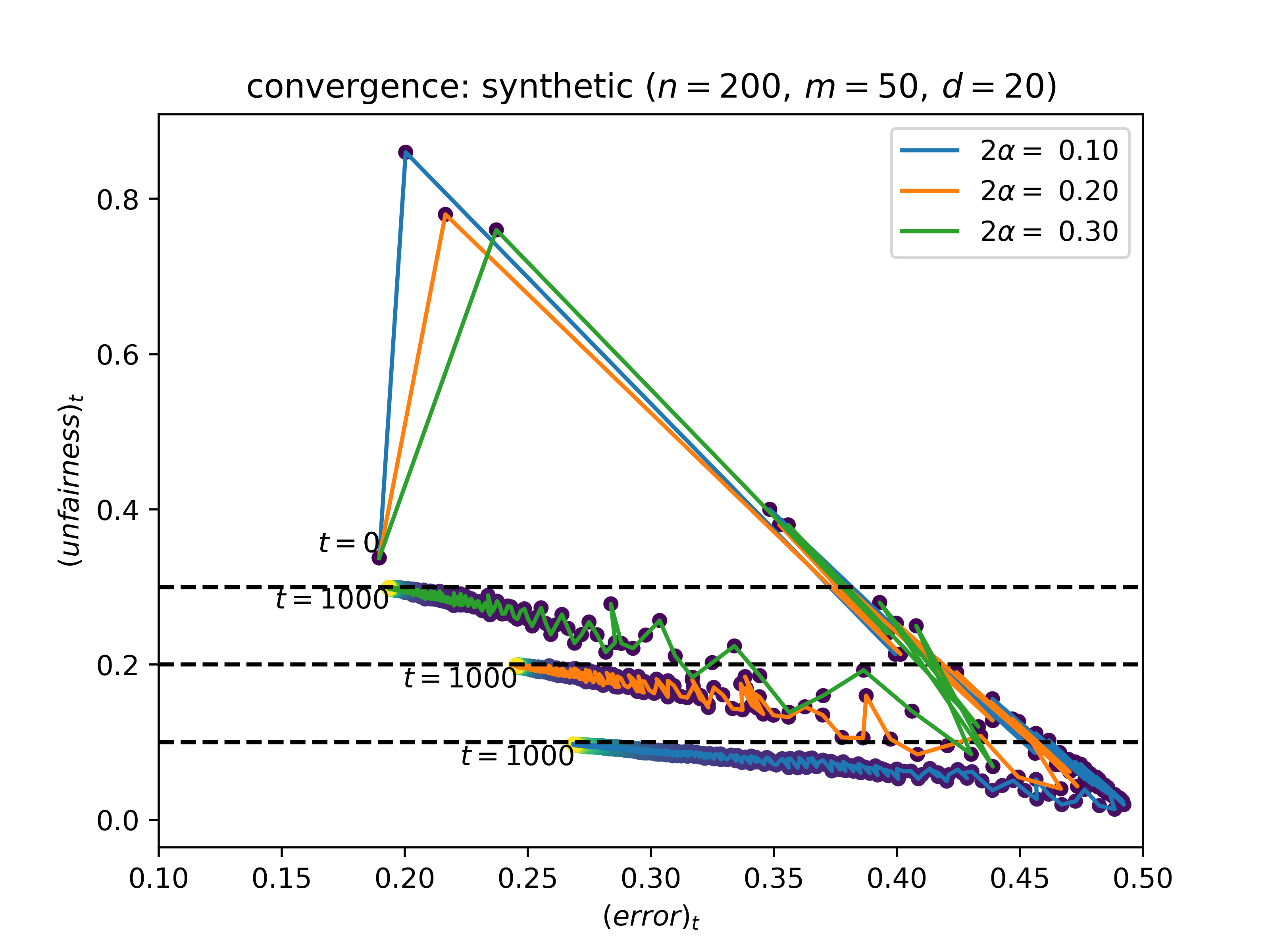}
	\caption{convergence plot: synthetic data}
	\end{subfigure}
	\hspace{1cm}
	\begin{subfigure}[t]{0.45\textwidth}
	\centering
	\includegraphics[scale=0.39]{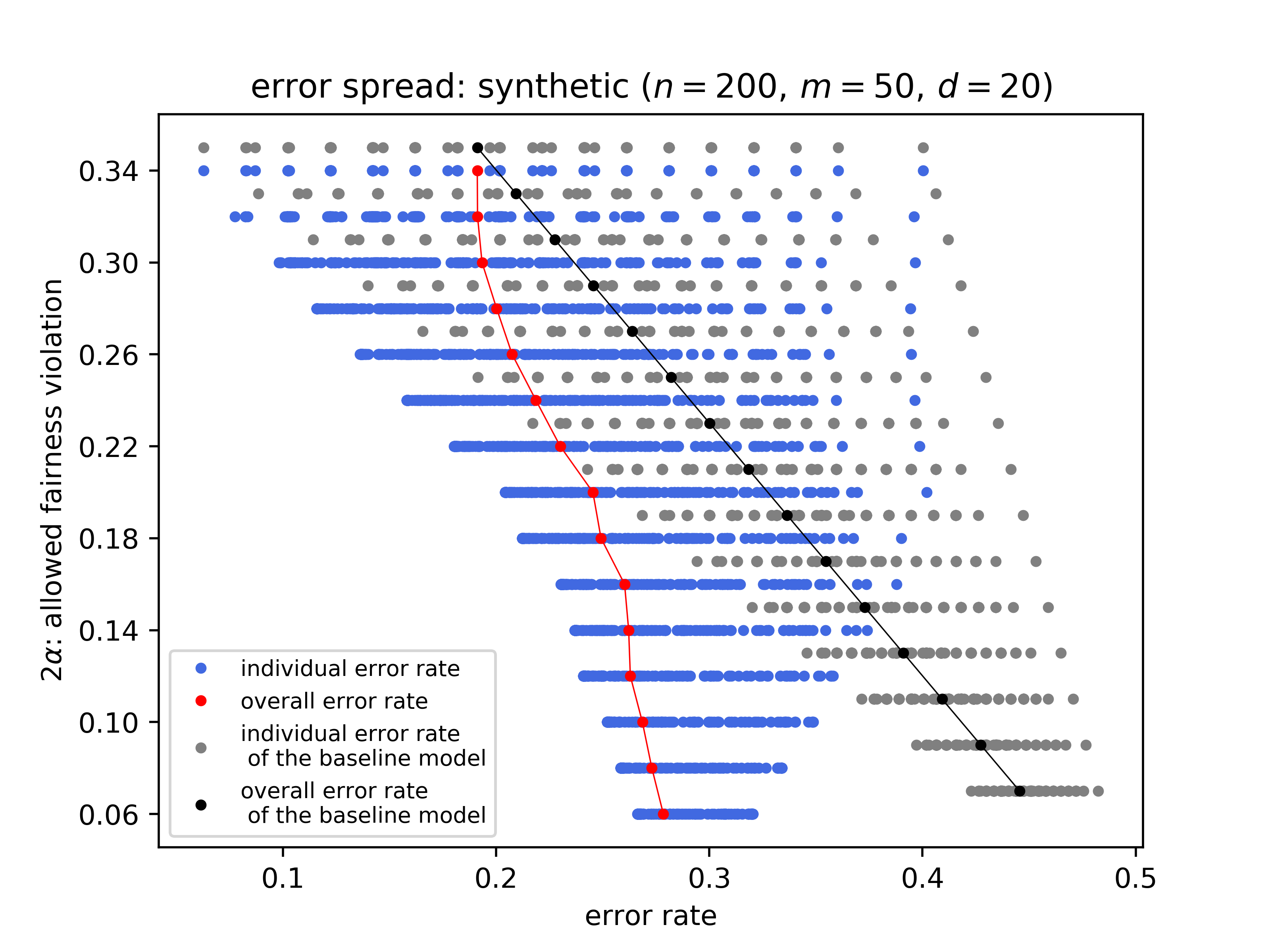}
	\caption{error spread plot: synthetic data}
	\end{subfigure}
\caption{
(a) Error-unfairness trajectory plots illustrating the convergence of algorithm \textbf{AIF-Learn}.
(b) Error-unfairness tradeoffs and individual errors for \textbf{AIF-Learn} vs. simple mixtures
	of the error-optimal model and random classification. Gray dots are shifted upwards slightly
	to avoid occlusions.
}
\label{fig:synthetic}
\end{figure*}

%We start this section by providing more details of \textbf{AIF-Learn}'s implementation. First, since our interest is to evaluate the \emph{in-sample} performance of the algorithm, we obviously avoid partitioning the problems $F$ into small batches which was rather a theoretical artifact of our proof for generalization over the problem generating distribution $\Q$. We therefore use in practice all learning tasks to update the fairness violation vector in each round of the algorithm. As also discussed in the body of paper, we implement the learning oracle assumed by our algorithm using linear threshold heuristics --- identical to \cite{KNRW18}. Suggested by our theory, we set $B=1/\alpha$ and $\eta = c/B$ in our implementation where the constant $c$ can greatly affect the convergence speed of the algorithm. In the setting we considered in our experiments --- i.e. $n=200$ individuals, $m=50$ problems, $d=20$ features --- it turns out choosing $c$ in the order of 100 would lead to convergence with at most $T=1000$ iterations of the algorithm.

In addition to the experiments on the Communities and Crime data set, we ran additional experiments on synthetic data sets to further verify the performance of our algorithms empirically. Given the parameters $n$ (number of individuals), $m$ (number of problems), $d$ (dimension), and a coin with bias $q \ge 0.5$, an instance of the synthetic data is generated as follows: 

First $n$ individuals $\{x_i\}_{i=1}^n$ are generated randomly where $x_i \in \{ \pm 1 \}^d$. Let the first $75 \%$ of the individuals be called ``group 1'' and the rest ``group 2''. For each learning task $j \in [m]$, we randomly sample two weight vectors $w_{j,\text{maj}}, w_{j,\text{min}} \in \{ \pm 1 \}^d$. Now consider a matrix $Y \in \{0,1\}^{n \times m}$ that needs to be filled in with labels. We intentionally inject unfairness into the data set (because otherwise fairness would be achieved for free due to the random generation of data) by considering ``majority'' and ``minority'' groups for each learning task (majority group will be labelled using $w_{j,\text{maj}}$ and the minority by $w_{j,\text{min}}$). For each learning task $j$, individuals in group 1 get $q$ chance of falling into the majority group of that task and individuals in group 2 get $1-q$ chance (this is implemented by flipping a coin with bias $q$ for each individual). Once the majority and minority groups for each learning task are specified, the labels $Y = [y_{ij}]$ are set according to $y_{ij} = \text{sign} (w_{i,j}^T \, x_i)$ where $w_{i,j} \in \{ w_{j,\text{maj}}, w_{j,\text{min}}\}$. (Notice $q=1$ corresponds to a fixed (across learning tasks) majority group of size $0.75n$, and $q = 0.5$ corresponds to the case where the majority groups are formed completely at random and that they have size $0.5n$ in expectation.)

It is ``expected'' that solving the unconstrained learning problem on an instance of the above synthetic data generation process would result in discrimination against the minority groups which are formed mostly by the individuals in group 2, and therefore our algorithm should provide a fix for this unfair treatment. We therefore took an instance of the synthetic data set with parameters $n=200, m = 50, d = 20, q = 0.8$ and ran our algorithm on varying values of allowed fairness violation $2\alpha$ (notice on an input $\alpha$ to the algorithm, the individual error rates are allowed to be within at most $2\alpha$ of each other). We present the results (which are reported in-sample) in Figure \ref{fig:synthetic} where similar to the results on the Communities and Crime data set, we get convergence after $T=1000$ iterations (or 50,000 calls to the learning oracle, see Figure \ref{fig:synthetic}(a)). Furthermore, Figure \ref{fig:synthetic}(b) suggests that our algorithm on the synthetic data set considerably outperforms the baseline that only mixes the error-optimal model with random classification.

%%%%%%%%%%%%%%%%%%%%%%%%%%%%%%%%%%%%%%%%%%%%%%%%%%%%%%%

\end{document}